\newtheorem{theorem}{Theorem}
\newtheorem{corollary}{Corollary}
\newtheorem{lemma}{Lemma}
\newtheorem{assumption}{Assumption}
\newtheorem{definition}{Definition}
\newcommand{\EE}{\mathbb{E}}
\newcommand{\VV}{\mathbb{V}}
\newcommand{\holder}{\,\cdot\,}
\DeclareMathOperator*{\argmin}{arg\,min}
\DeclareMathOperator*{\argmax}{arg\,max}
\newcommand{\Fcal}{\mathcal{F}}
\newcommand{\Pcal}{\Pi}
\newcommand{\Scal}{\mathcal{S}}
\newcommand{\Acal}{\mathcal{A}}
\newcommand{\Tcal}{\mathcal{T}}
\newcommand{\Vmax}{V_{\max}}
\newcommand{\RR}{\mathbb{R}}
\newcommand{\fo}{f_{k-1}}
\newcommand{\muu}{{\mu}}
\newcommand{\Lcal}{\mathcal{L}}
\newcommand{\piopt}{\pi^\star}
\newcommand{\picomp}{\widetilde{\pi}}
\newcommand{\OurT}{\widetilde{\mathcal{T}}}
\newcommand{\OurTpi}{\widetilde{\mathcal{T}}^{\pi}}
\newcommand{\OurTpit}{\widetilde{\mathcal{T}}^{\pi_t}}
\newcommand{\Sext}{\mathcal{S}'}
\newcommand{\Aext}{\mathcal{A}'}
\newcommand{\muhat}{\widehat{\mu}}
\newcommand{\avicompleteerror}{\epsilon'_{\Fcal}}
\newcommand{\apicompleteerror}{\epsilon_{\Fcal}}
\newcommand{\apigreedyerror}{\epsilon_{\Pcal}}
\newcommand{\projection}{\Xi}
\newcommand{\weakset}{\Pi_{C}^{all}}
\newcommand{\weaksettext}{$\zeta$-constrained policy set~}
\newcommand{\densitybound}{U}
\DeclareMathOperator{\E}{\mathbb{E}}
\newcommand{\what}{\ensuremath{\widehat}}
\newcommand{\badstateprob}{\epsilon_\zeta}
\newcommand{\tvmuerror}{\epsilon_{\mu}}
\newcommand{\strongset}{\Pi_{SC}^{all}}
\newcommand{\strongsettext}{strong $\zeta$-constrained policy set~}
\title{
Provably Good Batch 
Reinforcement Learning Without Great Exploration}
\author{
  Yao Liu \\
Stanford University\\
\texttt{yaoliu@stanford.edu} \\
\And
Adith Swaminathan \\
Microsoft Research \\
\texttt{adswamin@microsoft.com} \\
\AND
Alekh Agarwal \\
Microsoft Research \\
\texttt{alekha@microsoft.com} \\
\And
Emma Brunskill \\
Stanford University\\
\texttt{ebrun@cs.stanford.edu}
}
\begin{document}

\maketitle
\setlength{\abovedisplayskip}{3pt}
\setlength{\belowdisplayskip}{3pt}

\begin{abstract}
  Batch reinforcement learning (RL) is important to apply RL algorithms to many high stakes tasks. Doing batch RL in a way that yields a reliable new policy in large domains is challenging: a new decision policy may visit states and actions outside the support of the batch data, and function approximation and optimization with limited samples can further increase the potential of learning policies with overly optimistic estimates of their future performance. Recent algorithms have shown promise but can still be overly optimistic in their expected outcomes. Theoretical work that provides strong guarantees on the performance of the output policy relies on a strong concentrability assumption, that makes it unsuitable for cases where the ratio between state-action distributions of behavior policy and some candidate policies is large. This is because in the traditional analysis, the error bound scales up with this ratio. We show that a small modification to Bellman optimality and evaluation back-up to take a more conservative update can have much stronger guarantees. In certain settings, they can find the approximately best policy within the state-action space explored by the batch data, without requiring a priori assumptions of concentrability. We highlight the necessity of our conservative update and the limitations of previous algorithms and analyses by illustrative MDP examples, and demonstrate an empirical comparison of our algorithm and other state-of-the-art batch RL baselines in standard benchmarks.

\end{abstract}

\section{Introduction}
A key question in reinforcement learning is about learning good policies from off policy batch data in large or infinite state spaces. This problem is not only relevant to the batch setting; many online RL algorithms use a growing batch of data such as a replay buffer \cite{lin1992self, mnih2015human}. Thus understanding and advancing batch RL can help unlock the potential of large datasets and may improve online RL algorithms. In this paper, we focus on the algorithm families based on Approximate Policy Iteration (API) and Approximate Value Iteration (AVI), which form the prototype of many model-free online and offline RL algorithms. 
In large state spaces, function approximation is also critical to handle state generalization. However, the deadly triad \cite{sutton2018reinforcement} of off-policy learning, function approximation and bootstrapping poses a challenge for model-free batch RL. 
One particular issue is that the max in the Bellman operator may pick $(s,a)$ pairs in areas with limited but rewarding samples, which can lead to overly optimistic value function estimates and under-performing policies.  

This issue has been studied from an algorithmic and empirical perspective in many ways. Different heuristic approaches \cite{fujimoto2019off,laroche2019safe,kumar2019stabilizing} have been proposed and shown to be effective to relieve this weakness 
empirically. However the theoretical analysis of these methods are limited to tabular problem settings, and the practical algorithms also differ substantially from their theoretical prototypes. 

Other literature focuses primarily on approaches that have strong theoretical guarantees. Some work has focused on safe batch policy 
improvement: only deploying new policies if with high confidence they improve over prior policies. However, such work either 
assumes that the policy class can be enumerated~\cite{thomas2019preventing}, which is infeasible in a number of important cases, or has used regularization with a behavior policy~\cite{thomas2015higha} as a heuristic, 
which can disallow identifying significantly different but better 
policies. On the other hand over the last two decades there have been a number of formal analyses of API and AVI algorithms in batch settings with large or infinite state and policy spaces \cite{munos2003error,munos2005error,antos2008learning,munos2008finite,farahmand2016regularized,chen2019information, xie2020q}. Strong assumptions about the distribution of the batch data is usually assumed, often referred to as \textit{concentrability} assumptions. They ensure that the ratio between the induced state-action distribution of \textit{any non-stationary} policy and the state-action distribution of the batch data is upper bounded by a constant, referred to as the \textit{concentrability} coefficient. This is a strong assumption and hard to verify in practice since the space of policies and their induced state-action distributions is huge. For example, in healthcare datasets about past physician choices and patient outcomes, decisions with a poor prognosis may be very rare or absent for a number of patient conditions. 
This could result in the scale of concentration coefficient, and therefore existing performance bounds~\cite{munos2003error,munos2005error,chen2019information,antos2008learning}, being prohibitively large or unbounded. 
This issue occurs 
even if good policies (such as another physician's decision-making 
policy) 
are well supported in the dataset.

In the on-policy setting, various relaxations of the concentrability assumption have been studied. For example, some methods~\cite{kakade2002approximately,Kakade01,agarwal2019optimality} obtain guarantees scaling in the largest density ratio between the optimal policy and an initial state distribution, which is a  significantly milder assumption than  concentrability~\cite{scherrer2014approximate,agarwal2019optimality}. Unfortunately, 
leveraging a similar assumption is not straightforward in the fully offline batch RL setting, where an erroneous estimate of a policy's quality in a part of the state space not supported by data would never be identified through subsequent online data collection. 

\textbf{Our contributions.} Given these considerations,  an appealing goal is to ensure that we will output a good policy with respect to the batch data available. 
Rather than assuming concentrability on the entire policy space, we algorithmically choose to focus on policies that satisfy a bounded density ratio assumption akin to the on-policy policy gradient methods, and  successfully compete with such policies. Our methods are guaranteed to converge to the approximately best decision policy in this set, and our error bound scales with a parameter that defines this policy set. 
If the behavior data provides sufficient coverage of states and actions visited under an optimal policy, then our  algorithms will output a near-optimal policy. In the physician example and many real-world scenarios,  good policies are often well-supported by the behavior distribution even when the concentrability assumption fails.

Many recent state-of-the-art batch RL algorithms~\cite{fujimoto2019off,laroche2019safe,kumar2019stabilizing} do not provide such guarantees and can struggle, as we show shortly with an illustrative example.   
Our methods come from a minor modification of adding conservatism in the policy evaluation and improvement operators for API. The key insight is that prior works are enforcing an insufficient constraint; we instead constrain the algorithm to select among actions which have good coverage according to the marginalized support of \emph{states and actions} under the behavior dataset.  Our API algorithm ensures the desired ``doing the best with what we've got'' guarantee, and we provide a slightly weaker result for an AVI-style method. 

We then validate a practical implementation of our algorithm in a discrete and a continuous control task. It achieves better and more stable performance compared with baseline algorithms. This work makes a concrete step forward on providing guarantees on the quality of batch RL with function approximation that apply to realistic settings.

\section{Problem Setting}
Let $M = <\Scal, \Acal, P, R, \gamma, \rho>$ be a Markov Decision Process (MDP), where $\Scal$, $\Acal$, $P$, $R$, $\gamma$, $\rho$ are the state space, action space, dynamics model, reward model,  discount factor and distribution over initial states, respectively. 
A policy $\pi~:~\Scal\to \Delta(\Acal)$ is a conditional distribution over actions given state. We often use $\pi$ to refer to its probability mass/density function according to whether $\Acal$ is discrete or continuous. 
A policy $\pi$ and MDP $M$ will introduce a joint distribution of trajectories: $s_0, a_0, r_0, s_1, a_1, \dots$, where $s_0 \sim \rho(\cdot)$, $a_h \sim \pi(s_h), r_h \sim R(s_h,a_h)$, $s_{h+1} \sim P(s_h,a_h)$. The expected discounted sum of rewards of a policy $\pi$  given an initial state-action pair $s$ is  \mbox{$V^{\pi}(s) = \mathbb{E} \left[ \sum_{h=0}^{\infty} \gamma^h r_h \right]$}, and $Q^{\pi}(s,a)$ further conditions on the first action being $a$. 
We assume for all possible $\pi$, $Q^{\pi}(s,a) \in [0, \Vmax]$. 
We define $v^{\pi}$ to be the expectation of $V^\pi(s)$ under initial state distribution. We are given a dataset $D$ with $n$ samples \emph{drawn i.i.d. from a behavior distribution} $\mu$ over $\Scal \times \Acal$, and overload notation to denote the marginal distribution over states by $\mu(s) = \sum_{a \in \Acal} \mu(s,a)$. 
Approximate value and policy iteration (AVI/API) style algorithms fit a $Q$-function over state, action space: $f: \Scal \times \Acal \to [0, \Vmax]$. Define the \emph{Bellman optimality/evaluation operators} $\Tcal$ and $\Tcal^\pi$ as:
\begin{align*} 
	\hspace*{-0.2cm}(\Tcal f)(s,a) := r(s,a) + \gamma \E_{s'} \left[\max_{a'}f(s',a')\right] \,\mbox{and}\, (\Tcal^\pi f) (s,a) := r(s,a) + \gamma \E_{s'}\mathbb{E}_{a' \sim \pi} f(s',a').
\end{align*}
Define $\what \Tcal$ and $\what \Tcal^\pi$ 
by replacing expectations with sample averages. Then 
AVI iterates $f_{k+1} \leftarrow \what\Tcal f_k$. 
API performs policy evaluation by iterating $f_{i+1} \leftarrow \what\Tcal^{\pi_k} f_i$ until convergence to get $f_{k+1}$ followed by a greedy update to a deterministic policy: \mbox{$\pi_{k+1}(s) = \argmax_a f_{k+1}(s,a)$}. 

\section{Challenges for Existing Algorithms}
\label{sec:challenges}
\begin{figure}[t!]
    \begin{minipage}{0.35\textwidth}
    \centering
    \includegraphics[width=\textwidth]{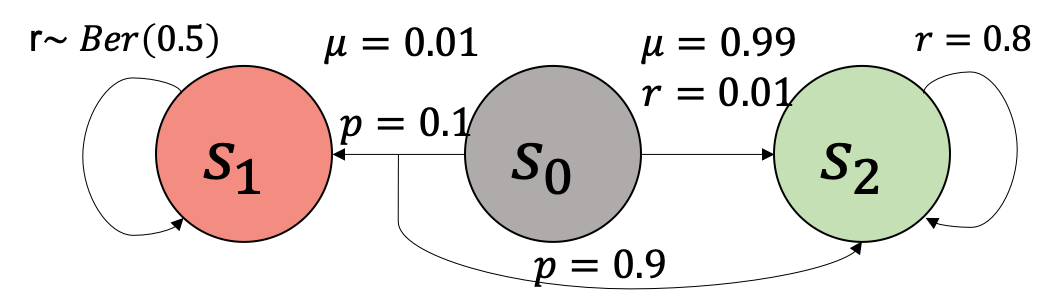}
    \subcaption{MDP with a rare transition}
    \label{fig:example1}
    \end{minipage}
    \hspace{\fill}
    \begin{minipage}{0.55\textwidth}
    \centering
    \includegraphics[width=\textwidth]{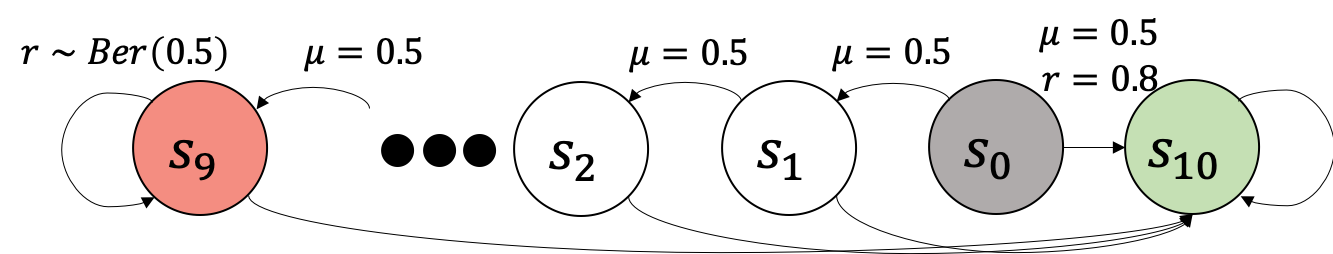}
    \subcaption{Combination lock}
    \label{fig:example2}
    \end{minipage}
    \caption{Challenging MDPs for prior approaches. In both MDPs, the episode starts from $s_0$ and ends after a fixed horizon, 2 or 10 respectively. The optimal policy is to reach the green state. Transition probabilities are labeled as $p$ on edges or are deterministic. $\mu$ is the conditional probability of action given state when generating the data.  
    The reward distribution is labeled on the edges unless it is $0$.}
    \label{fig:examples}
    \vspace{-0.2in}
\end{figure}

Value-function based batch RL typically uses AVI- or API-style algorithms described above.
A standard assumption used in theoretical analysis for API and AVI is the 
\emph{concentrability} assumption~\cite{munos2003error,munos2005error,chen2019information} that posits: for \textit{any} distribution $\nu$ that is reachable for some non-stationary policy, $\|\nu(s,a)/\mu(s,a)\|_{\infty} \le C$. Note that even if $C$ might be reasonably small for many high value and optimal policies in a domain, there may exist some policies which force $C$ to be exponentially large (in the number of actions and/or effective horizon). Consider a two-arm bandit example where the good arm has a large probability under behavior policy. In this example it should be easy to find a good policy supported by the behavior data but there exist some policies which are neither good nor supported, but the theoretical results requires setting an upper bound $C$ for admitting policies choosing the bad arm.  (See detailed discussions on the unreasonableness of concentrability~\cite{chen2019information,scherrer2014approximate,agarwal2019optimality}.)

\begin{figure}[th]
\vspace{-\intextsep}
\hspace*{-.75\columnsep}
\centering
    \begin{minipage}{0.4\textwidth}
    \centering
    \includegraphics[width=\textwidth]{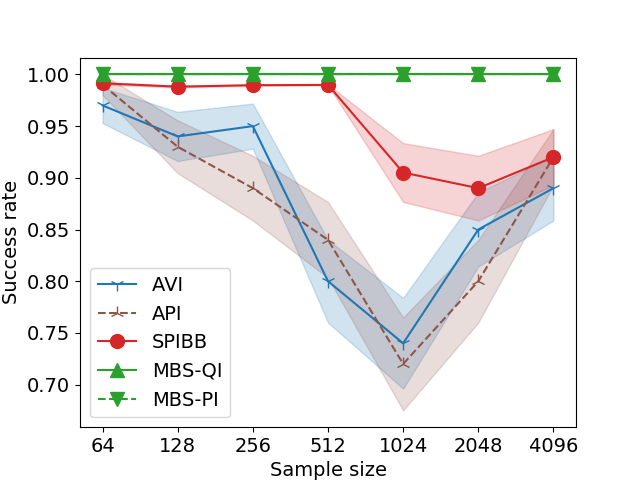}
    \subcaption{MDP with a rare transition}
    \label{fig:result1}
    \end{minipage}
    \hspace{1cm}
    \begin{minipage}{0.4\textwidth}
    \centering
    \includegraphics[width=\textwidth]{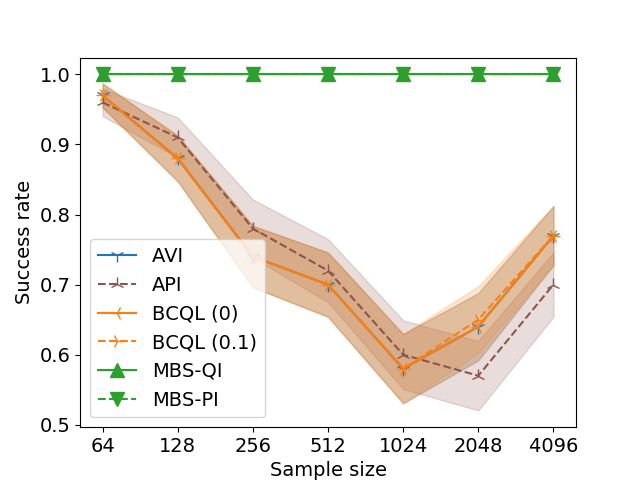}
    \subcaption{2-arm combination lock}
    \label{fig:result2}
    \end{minipage}
    \caption{Frequency of converging to $\pi^\star$ (Success Rate) for different variants of approximated value iteration: AVI, API, BCQL \cite{fujimoto2019off} (with different thresholds), SPIBB~\cite{laroche2019safe}, our algorithms (MBS-PI, MBS-QI). Tabular algorithm of BEAR \cite{kumar2019stabilizing} is same as BCQL with non-zero threshold. Frequencies are computed from 100 runs, and the change of frequencies over sample size is shown across $X$-axis. Error bars are standard deviation. We use a MLE estimate as $\muhat$ and $b=10/\text{sample size}$.}
    \vspace{-0.1in} 
    \label{fig:results}
\end{figure}

Nevertheless, the standard concentrability  assumption has been employed in many prior works on API \cite{lazaric2012finite,munos2003error} and AVI  \cite{munos2005error,munos2008finite,farahmand2010error, chen2019information}. In practice, these algorithms often diverge~\cite{riedmiller2005neural,hafner2011reinforcement}, likely due to the failure of this assumption. Such empirical observations have helped motivate several recent AVI (e.g.~\cite{fujimoto2019off,kumar2019stabilizing}) and API (e.g.~\cite{laroche2019safe}) works that improve stability and performance.  The BCQL algorithm~\cite{fujimoto2019off} only bootstraps value estimates from actions with conditional probabilities above a threshold under the behavior policy.\footnote{In their tabular algorithm BCQL the threshold is zero but we extend it to non-zero which is also more consistent with their deep RL variant BCQ.} The BEAR algorithm \cite{kumar2019stabilizing} uses distribution-constrained backups as its prototype for theoretical analysis which, in the tabular setting, is essentially BCQL using non-zero threshold. SPIBB~\cite{laroche2019safe} follows the behavior policy in less explored state-action pairs while attempting improvement everywhere else, assuming they know the behavior policy. However, we find that these algorithms have failure modes even in simple MDPs shown in Figure~\ref{fig:examples}. Specifically, BCQL and BEAR perform bootstrapping based on \emph{just the action probability}, even if the state in question itself is less explored.\footnote{This corresponds to a small $f(\epsilon)$ in Theorem 4.2 of Kumar et al.~\cite{kumar2019stabilizing}, hence in line with their theory.} This causes them to bootstrap values from infrequent states in the data, leading to bad performance as shown in Figure~\ref{fig:result2}. SPIBB is robust to this failure mode when the behavior policy is known. In this paper, we do not make this assumption as it is unknown in many settings (e.g., physicians' decision making policy in a medical dataset). Consequently, following the estimated behavior policy from rare transitions is dangerous as our estimates can also be unreliable there. This can yield poor performance as shown in Figure~\ref{fig:result1}. In that plot, rare events start to appear as sample size increases and baseline algorithms begin to bootstrap unsafe values. Hence we see poor success rate as we get more behavior data. As samples size get very large all algorithms eventually see enough data to correct this effect. Next we describe the design and underlying theory for our new algorithms that successfully perform in these challenging MDPs (see Figure~\ref{fig:results}).

\section{Marginalized Behavior Supported Algorithms}
Our aim is to create algorithms that are guaranteed to find an approximately optimal policy over all policies that 
only visit states and actions with sufficient visitation under $\mu$. In order to present the algorithms, we introduce some useful notation and present the classical fitted $Q$ iteration (FQI)~\cite{szepesvari2005finite} and fitted policy iteration (FPI)~\cite{antos2008learning} algorithms which are the function approximation counterparts of AVI and API respectively. For the FPI algorithm, let $\Pcal \subset (\Scal \to \Delta(\Acal))$ be a policy class. Let $\Fcal \subseteq (\Scal\times\Acal \to [0,\Vmax])$ be a $Q$ function class in both FQI and FPI. We assume $\Pcal$ and $\Fcal$ are both finite but can be very large (error bounds will depend on $\log(| \Fcal||\Pcal|)$). For any function $g$ we define shorthand $\|g\|_{p,\mu}$ to denote 
 $(\E_{(s,a)\sim \mu} g(s,a)^p)^{1/p}$. 
 FQI updates $f_{k+1} = \argmin_{f\in\Fcal} \|f - \what \Tcal f_k\|_{2,\mu}^2$ and returns the greedy policy with respect to the final $f_k$ upon termination. FPI instead iterates $f_{i+1} = \argmin_{f\in\Fcal} \|f - \what \Tcal^{\pi_k} f_i \|_{2,\mu}^2$  until convergence to get $f_{k+1}$
followed by the greedy policy update to get $\pi_{k+1}$. When a restricted policy set $\Pi$ is specified so that the greedy policy improvement is not possible, weighted classification is used to update the policy in some prior works~\cite{farahmand2015classification}.
Since both AVI and API suffer from bootstrapping errors in less visited regions of the MDP even in the tabular setting,  FQI and FPI approaches have the same drawback. 

We now show how to design more robust algorithms by constraining the Bellman update to be only over state action pairs that are sufficiently covered by $\mu$.
Implementing such a constraint requires access to the density function $\mu$, over an extremely large or infinite space. Given we have samples of this distribution, several density estimation techniques~\cite{loader1996local,malec2014nonparametric} can be used to estimate $\mu$ in practice, and here we assume we have a density function $\muhat$ which is an approximate estimate of $\mu$. In the analysis section we will specify how our error bounds scale with the accuracy of $\muhat$.  Given $\muhat$ and a threshold $b$ (as algorithm input) we define a filter function over the state-action space $\Scal \times \Acal$: 
\begin{equation}
    \zeta(s,a; \muhat, b) = \mathds{1}\left(\muhat(s,a) \ge b \right).
    \label{eq:zeta}
\end{equation}
For simplicity we write $\zeta(s,a; \muhat, b)$ as $\zeta(s,a)$ and define $\zeta \circ f(s,a) := \zeta(s,a)f(s,a)$.
\begin{table}[!t]
\vspace{-0.5cm}
 \centering
 \begin{tabular}{cc}
\begin{minipage}{0.5\textwidth}
\begin{algorithm}[H]
  \caption{MBS Policy Iteration (MBS-PI)}
  \label{alg:api}
    \begin{algorithmic}[1]
  \State {\bfseries Input:} $D$, $\Fcal$, $\Pcal$, $\muhat$, $b$ 
  \State {\bfseries Output:} $\widehat{\pi}_T$
  \For{$t=0$ {\bfseries to} $T-1$}
  \For{$k=0$ {\bfseries to} $K$} 
  \State $f_{t,k+1}\! \leftarrow \! \argmin_{f \in \mathcal{F}} \Lcal_D(f, f_{t,k}; \widehat{\pi}_{t})$
  \EndFor
  \State $\widehat{\pi}_{t+1} \! \leftarrow \! \argmax_{\pi \in \Pcal} \mathbb{E}_{D} [\mathbb{E}_{\pi}\left[ \zeta\circ f_{t,K+1} 
  \right] ]$\label{alg:policy_improvement}
  \EndFor
\end{algorithmic}
\end{algorithm}
\end{minipage}
&
\begin{minipage}{0.45\textwidth}
\begin{algorithm}[H]
  \caption{MBS $Q$ Iteration (MBS-QI)}
  \label{alg:fqi}
    \begin{algorithmic}[1]
  \State {\bfseries Input:} $D$, $\Fcal$, $\muhat$, $b$ \State {\bfseries Output:} $\widehat{\pi}_T$
  
  \For{$t=0$ {\bfseries to} $T-1$}
  \State $f_{t+1} \! \leftarrow \! \argmin_{f \in \Fcal} \Lcal_D (f; f_t)$
  \State $\widehat{\pi}_{t+1}(s) \! \! \leftarrow \! \! \argmax_{a \in \Acal} \zeta\circ f_{t+1}(s,a)$\label{alg:greedy_pi}
  \EndFor
  \vspace*{0.7cm}
\end{algorithmic}
\end{algorithm}
\end{minipage}
\end{tabular}
\vspace{-0.5cm}
\end{table}
We now introduce our Marginalized Behavior Supported Policy Iteration (MBS-PI) algorithm (Algorithm~\ref{alg:api}), a minor modification to vanilla FPI that constrains the Bellman backups and policy improvement step to have sufficient support by the provided data. The key is to change the policy evaluation operator and only evaluate next step policy value over supported $(s,a)$ pairs, and constrain policy improvement to only optimize over supported $(s,a)$ pairs defined by $\zeta$. We define the \emph{$\zeta$-constrained Bellman evaluation operator} $\OurTpi$ as, for any $f: \Scal \times \Acal \to \RR$,
\begin{align}
(\OurTpi f) (s,a) = r(s,a) + \gamma \mathbb{E}_{s'}  \textstyle\sum_{a' \in \Acal} \left[\pi(a'|s') \zeta \circ f(s',a')\right].
\end{align}
This reduces updates that may be over-optimistic  
estimates in $(s',a')$'s that are not adequately covered by $\mu$ by using the most pessimistic estimate of $0$ from such pairs. 
There is an important difference with SPIBB~\cite{laroche2019safe}, which still backs up $f$ values (but by estimated behavior probabilities) from such rarely visited state-action pairs: given limited data, this can lead to erroneous decisions (c.f. Fig.~\ref{fig:results}). 

Given a batch dataset, 
we follow the common batch RL choice of least-squares residual minimization~\cite{munos2003error} and define empirical loss of $f$ given $f'$ (from last iteration) and policy $\pi$:
\begin{align*}
    \textstyle
    \Lcal_{D}(f;f',\pi) := \EE_{D} \left( f(s,a) - r - \gamma \sum_{a' \in \Acal} \pi(a'|s')\zeta \circ f'(s',a') \right)^2.
\end{align*}
In the policy improvement step (line~\ref{alg:policy_improvement} of Algorithm~\ref{alg:api}), to ensure that the resulting policy has sufficient support, our algorithm applies the filter to the computed state-action values before performing maximization,
like classification-based API~\cite{farahmand2015classification}.
We maximize using the dataset $D$ ($\E_D$ is a sample average) and within the policy class $\Pcal$, which may not include all deterministic policies.

Analogous to MBS-PI, we introduce Marginalized Behavior Supported $Q$ Iteration (MBS-QI) (Algorithm~\ref{alg:fqi}) which constrains the Bellman backups to have sufficient support in the provided data. 
Define $\OurT$ to be a $\zeta$-constrained Bellman optimality operator: for any $f: \Scal \times \Acal \to \RR$,
\begin{align} 
    \textstyle
	(\OurT f)(s,a) := r(s,a) + \gamma \mathbb{E}_{s'} \left[\max_{a'}\zeta \circ f(s',a')\right].
\end{align}
Similarly, we define the empirical loss of $f$ given another function $f'$ as:
\begin{align*}
    \textstyle
     \Lcal_{D}(f; f') := \EE_{D} \left( f(s,a) - r - \gamma \textstyle\max_{a' \in \Acal} \zeta\circ f'(s',a') \right)^2.
\end{align*}
We also alter the final policy output step to only be able to select among actions which lie in the support set (line~\ref{alg:greedy_pi} of Algorithm~\ref{alg:fqi}). In Figure~\ref{fig:results} we showed empirically that our MBS-QI and MBS-PI can both successfully return the optimal policy when the data distribution covers the optimal policy in two illustrative examples where prior approaches struggle. The threshold $b$ is the only hyper-parameter needed for our algorithms. Intuitively, $b$ trades off the risk of extrapolation with the potential benefits from more of the batch dataset. We discuss how practitioners can set $b$ in Section~\ref{sec:expts}.

\section{Analysis}
\label{sec:theory}
We now provide performance bounds on the  policy output by MBS-PI and MBS-QI. Complete proofs for both are in the appendix. 

\subsection{Main results for MBS-PI}

We start with some definitions and assumptions. Given $\pi$, let $\eta^{\pi}_h(s)$ be the marginal distribution of $s_h$ under $\pi$, that is, $\eta^{\pi}_h(s) := \Pr[s_h = s | \pi]$, $\eta^{\pi}_h(s,a) = \eta^{\pi}_h(s)\pi(a|s)$, and $\eta^{\pi}(s,a) = (1-\gamma)\sum_{h=0}^\infty \gamma^h\eta^{\pi}_h(s,a) $. We make the following assumptions:
\begin{assumption}[Bounded densities]
\label{asm:bounded_density}
    For any non-stationary policy $\pi$ and $h \ge 0$, $\eta^{\pi}_h(s,a) \le \densitybound$.
\end{assumption}
\begin{assumption}[Density estimation error]
    With probability at least $1-\delta$, $ \|\muhat - \mu\|_\text{TV} \le \tvmuerror$.
    \label{assume:muhat}
\end{assumption}
\begin{assumption}[Completeness under $\OurTpi$]
\label{asm:api_completeness} $\forall \pi \in \Pcal$,
$\max_{f \in \Fcal} \min_{g \in \Fcal}  \| g - \OurTpi f \|_{2,\mu}^2 \le \apicompleteerror.$ 
\end{assumption}
\begin{assumption}[$\Pi$ Completeness]
\label{asm:weak_api_pi_realizalibity} $\forall f \in \Fcal$, $\min_{\pi \in \Pcal} \|  \mathbb{E}_{\pi}\left[ \zeta \circ f(s,a)\right] -  \max_{a}\zeta \circ f(s,a)\|_{1,\mu} \le \apigreedyerror$.
\end{assumption}
Assumptions \ref{asm:api_completeness} and 
\ref{asm:weak_api_pi_realizalibity} are common but adapted to our $\zeta-$filtered operators, implying that the function class chosen is approximately complete with respect to our operator and that policy class can approximately recover the greedy policy. Assumptions 
\ref{asm:bounded_density} and \ref{assume:muhat} are novel. Assumption \ref{assume:muhat} bounds the accuracy of estimating the state--action behavior density function from limited data:  $1/\sqrt{n}$ errors are standard using maximum likelihood estimation for instance~\cite{zhang2006eps}, and the size of this error appears in the bounds. Finally 
Assumption 
\ref{asm:bounded_density} that the  probability/density of any marginal state distribution is bounded is not very restrictive.\footnote{We need this assumption only because we filter based on $\hat\mu$ instead of $\eta^\pi/\mu$ during policy improvement.}
For example, this assumption holds when the density function of transitions $p(\cdot|s,a)$ and the initial state  distribution are both bounded. 
This is always true for discrete spaces, and also holds for many distributions in continuous spaces including Gaussian distributions. 

The dataset may not have sufficient samples of state--action pairs likely under the optimal policy in order to reliably and confidently return the optimal policy.  Instead we hope and will show that our methods will return a policy that is close to the best policy which has sufficient support in the provided dataset.  More formally, given a $\zeta$ filter over state--action pairs, we define a set of  policies: 
\begin{definition}[\weaksettext] 
\label{def:weak_constrained_set}
Let $\weakset$ be the set of policies $\Scal \to \Delta(\Acal)$ such that $\Pr(\zeta(s,a) = 0|\pi) \le \badstateprob$. That is, $\EE_{s,a\sim \eta^{\pi}} \left[ \mathds{1} \left( \zeta(s,a) = 0\right) \right] \le \badstateprob$.
\end{definition} 
$\badstateprob$ defines the  probability under a policy of escaping to  state-actions with insufficient data during an  episode.  $\weakset$ adapts its size w.r.t. the filter $\zeta$ which is a function of the hyper-parameter $b$, and $\weakset$ does not need to be contained in $\Pi$. We now lower bound the value of the policy returned by  MBS-PI by the value of any $\picomp \in \weakset$ up to a small error, which implies a  small error w.r.t. the policy with the highest value in $\weakset$. For ease of notation, we will denote $C=U/b$ in our results and analysis, being clear that $C$ is not assumed (as in concentrability) and is simply a function of the hyper-parameter $b$. 
\begin{theorem}[Comparison with best covered policy]
\label{thm:api}
With Assumption \ref{asm:bounded_density} and \ref{assume:muhat} holds, given an MDP $M$, a dataset $D = \{ (s,a,r,s') \}$ with $n$ samples drawn i.i.d. from $\mu \times R \times P$, and a Q-function class $\Fcal$ and a policy class $\Pcal$ satisfying Assumption \ref{asm:api_completeness} and \ref{asm:weak_api_pi_realizalibity}, $\widehat{\pi}_t$ from Algorithm \ref{alg:api}
satisfies that w. p. at least $1-3\delta$, 
\begin{align*}
\hspace*{-0.2cm}
v^{\picomp}_M - v^{\widehat{\pi}_t}_M \le
 \mathcal{O}\left(\frac{C\sqrt{\Vmax^2  \ln(|\Fcal||\Pcal|/\delta)}}{(1-\gamma)^3\sqrt{n}} \right) + \frac{8C\sqrt{\apicompleteerror}+6C\Vmax\tvmuerror }{(1-\gamma)^3} + \frac{2C \apigreedyerror + 3\gamma^{K-1} \Vmax}{(1-\gamma)^2} + \frac{\Vmax\badstateprob}{1-\gamma},
\end{align*}
for any policy $\picomp \in \weakset$ under Assumptions~\ref{asm:bounded_density} and~\ref{assume:muhat} and any $t \geq K$. $C = U/b$. $K$ is the number of policy evaluation iterations (inner loop) and $t$ is the number of policy improvement steps.
\end{theorem}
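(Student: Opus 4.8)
The plan is to run the classical error-propagation analysis for approximate policy iteration, but to carry it out entirely with respect to the \emph{pessimistic} value functions induced by the filtered operator, and to replace the usual global concentrability coefficient by the derived quantity $C = \densitybound/b$ that the support filter makes available. For a fixed policy $\pi$ let $\widetilde{Q}^{\pi}$ denote the unique fixed point of $\OurTpi$. Since $\zeta \circ f$ is bounded pointwise by $f$ and $\OurTpi$ is still a $\gamma$-contraction in $\|\cdot\|_\infty$, this fixed point exists, lies in $[0,\Vmax]$, and admits the interpretation of the value of $\pi$ in the MDP where every transition into the bad set $G^c := \{(s,a): \muhat(s,a) < b\}$ is made absorbing with zero continuation reward. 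The three ingredients I would assemble are: (i) a per-iteration policy-evaluation bound controlling $\|f_{t,K+1} - \widetilde{Q}^{\widehat{\pi}_t}\|_{2,\mu}$; (ii) a policy-improvement bound controlling how far $\widehat{\pi}_{t+1}$ is from greedy with respect to $\zeta \circ f_{t,K+1}$; and (iii) a performance-difference argument converting these $\mu$-norm errors into the value gap against the comparator $\picomp \in \weakset$.

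For step (i), I would first give a standard least-squares generalization bound for a single inner update: since $f_{t,k+1}$ is the empirical risk minimizer over the finite class $\Fcal$ and Assumption~\ref{asm:api_completeness} guarantees an approximately realizable target, a uniform Bernstein argument yields, with probability $1-\delta$, $\|f_{t,k+1} - \OurTpi f_{t,k}\|_{2,\mu}^2 \lesssim \apicompleteerror + \Vmax^2 \ln(|\Fcal|/\delta)/n$. Summing these one-step residuals geometrically over the $K$ inner iterations, and using the contraction of $\OurTpi$, bounds $\|f_{t,K+1} - \widetilde{Q}^{\widehat{\pi}_t}\|_{2,\mu}$ by the per-step $L_2$ error divided by $1-\gamma$ plus a truncation term $\gamma^{K}\Vmax$ from stopping the inner loop early; the latter is the source of the $\gamma^{K-1}\Vmax$ term. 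The $L_2^2$ form of Assumption~\ref{asm:api_completeness} is exactly why only $\sqrt{\apicompleteerror}$, rather than $\apicompleteerror$, appears in the final bound.

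Step (iii) is where the support filter does its work, and it is the step I expect to be the main obstacle. To apply the performance-difference lemma I must transport the $\mu$-norm evaluation and improvement errors to the occupancy $\eta^{\picomp}$ of the comparator. In the unfiltered analysis this change of measure costs a concentrability coefficient; here every error term is multiplied by $\zeta$ and hence vanishes off the good set $G$, on which Assumption~\ref{asm:bounded_density} ($\eta^{\picomp}_h \le \densitybound$) together with $\muhat(s,a) \ge b$ forces the density ratio $\eta^{\picomp}(s,a)/\mu(s,a) \le \densitybound/b = C$, up to the mismatch between the $\muhat$- and $\mu$-defined good sets. Controlling that mismatch is the delicate part: I would use Assumption~\ref{assume:muhat}, $\|\muhat-\mu\|_{\mathrm{TV}} \le \tvmuerror$, to argue that $\{(s,a):\muhat(s,a)\ge b\}$ and $\{(s,a): \mu(s,a)\ge b\}$ differ only on a set of small $\mu$- and $\eta^{\picomp}$-measure, which contributes the additive $6C\Vmax\tvmuerror$ term and preserves the ratio bound with the correct constant. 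The policy-improvement error is handled by combining the empirical maximization over $\Pcal$ (a $\ln|\Pcal|$ concentration term, giving the $|\Pcal|$ inside the logarithm) with $\Pi$-completeness (Assumption~\ref{asm:weak_api_pi_realizalibity}), so that $\widehat{\pi}_{t+1}$ is $\apigreedyerror$-close in $\|\cdot\|_{1,\mu}$ to the exact filtered-greedy policy; since this is already an $L_1$ quantity it enters with only $(1-\gamma)^{-2}$.

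Finally I would assemble the pieces. The performance-difference lemma gives $v^{\picomp}_M - v^{\widehat{\pi}_t}_M$ as a $(1-\gamma)^{-1}$-scaled, $\eta^{\picomp}$-weighted sum of advantages of $\widehat{\pi}_t$; I would first replace the \emph{true} value $v^{\picomp}_M$ by its pessimistic counterpart $\widetilde{v}^{\picomp}_M$, paying $\Vmax\badstateprob/(1-\gamma)$ since $\picomp\in\weakset$ escapes $G$ with probability at most $\badstateprob$ and the two value functions agree on trajectories that never escape. The remaining gap between $\widetilde{v}^{\picomp}_M$ and $v^{\widehat{\pi}_t}_M$ is then bounded by the evaluation error of step (i) transported through step (iii), contributing the $C$-scaled statistical, completeness, and density terms, each acquiring extra $(1-\gamma)^{-1}$ factors from the occupancy normalization and the per-step-to-value conversion, plus the improvement error of step (iii). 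Collecting the three $(1-\gamma)^{-1}$ factors on the statistical, completeness, and density terms, the two on the greedy and truncation terms, and the single factor on the escape term reproduces the stated bound. The crux throughout is step (iii): making precise that the filter confines all error propagation to $G$, so that $C = \densitybound/b$ -- a tunable function of $b$ rather than an assumed property of the data -- legitimately plays the role of the concentrability coefficient.
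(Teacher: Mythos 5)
Your architecture is largely the one the paper uses: your pessimistic fixed point $\widetilde{Q}^{\pi}$ is exactly the paper's auxiliary MDP $M'$ with absorbing action, where Lemma~\ref{lem:api_fixed_point} shows the fixed point of $\OurTpi$ is $Q^{\projection(\pi)}$; your step (i) matches the Bernstein argument of Lemmas~\ref{lem:concenration_2}--\ref{lem:approximate_policy_value}; your step (ii) matches Lemmas~\ref{lem:concenration_policy_improvement}--\ref{lem:greedy_improvement}; and your $\Vmax\badstateprob/(1-\gamma)$ cost for moving from $v^{\picomp}_M$ to the pessimistic value is Lemma~\ref{lem:constrained_policy_same_value}. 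One bookkeeping difference is harmless but worth noting: since $\zeta$ is defined by thresholding $\muhat$, the paper changes measure to $\muhat$ \emph{exactly} (on the support of $\zeta$, $\nu \le \densitybound \le C\muhat$) and then pays $\Vmax\tvmuerror$ once per application to pass from $\|\cdot\|_{1,\muhat}$ to $\|\cdot\|_{1,\mu}$; your plan to compare the $\muhat$- and $\mu$-defined good sets is a more delicate route to the same constants and is not needed.

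The genuine gap is in your final assembly. A single application of the performance-difference lemma at the last iterate does not close the argument: it expresses $v^{\picomp} - v^{\pi_t}$ through the advantages $Q^{\pi_t}(s,a) - V^{\pi_t}(s)$ under $\eta^{\picomp}$, but the approximate greediness you established is of $\pi_t$ with respect to $f_{t-1,K} \approx Q^{\pi_{t-1}}$, so after substituting you are left with a term of the form $Q^{\pi_t}(s,a) - Q^{\pi_{t-1}}(s,a)$ under the comparator's occupancy. The policy-improvement lemma (the paper's Lemma~\ref{lem:policy_improvement}) gives only the one-sided bound $Q^{\pi_t} \ge Q^{\pi_{t-1}} - \epsilon$; there is no matching upper bound on how much an improvement step can raise $Q$, so the one-shot argument stalls with an uncontrolled $\Vmax$-size term. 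The paper instead proves a per-outer-iteration contraction (Theorem~\ref{thm:api_strongset}): for every admissible $\nu$, $\E_\nu[V^{\picomp} - V^{\pi_{t+1}}] \le \gamma\,\E_{P(\nu\times\picomp)}[V^{\picomp} - V^{\pi_{t}}] + \mathrm{err}$, and unrolls it over the outer loop; the pointwise improvement lemma feeding this recursion is itself a performance-difference argument along $\pi_{t+1}$'s occupancy from \emph{every} start state, which is also why Assumption~\ref{asm:bounded_density} must hold for all admissible distributions rather than just $\eta^{\picomp}$. The unrolling produces a residual $\gamma^{t}\Vmax$, and the hypothesis $t \ge K$ --- which your proposal never uses or derives --- is exactly what absorbs it into the $3\gamma^{K-1}\Vmax/(1-\gamma)^2$ term. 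Your accounting of $(1-\gamma)$ powers happens to land on the stated bound, but without this recursion the proof as proposed does not go through.
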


\textit{Proof sketch}. 
The key is to characterize the filtration effect of the conservative Bellman operators in terms of the resulting value function estimates. As an analysis tool, we construct an auxiliary MDP $M'$ by adding one additional action $a_{\text{abs}}$ in each state, leading to a zero reward absorbing state $s_\text{abs}$. For a \emph{subset} of policies in $M'$, the fixed point of our conservative operator $\OurTpi$ is $Q^\pi$ in $M'$. For that subset, we also have a bounded density ratio, thus we can provide the error bound in $M'$. For any $\zeta$-constrained policy, we show that it can be mapped to that subset of policies in $M'$ without much loss of value, and then we finish the proof to yield a bound in $M$. \qedsymbol 

Our results match in $n$ the fast rate error bound of API/AVI \cite{farahmand2011regularization, pires2012statistical, chen2019information,lazaric2012finite}. 
The dependency on horizon also matches the standard API analysis
and is $\mathcal{O}(1/(1-\gamma)^3)$. 
The dependency on $\densitybound/b$ is same as the dependency on concentrability coefficient for vanilla API analysis, but our guarantee adapts given a hyper-parameter choice instead of imposing a worst case bound over all policies.  Unlike Theorem 4.2 of Kumar et al.~\cite{kumar2019stabilizing}, there is no uncontrolled $f(\epsilon)$-like term. We avoid that term by placing stronger restrictions in our updates and in the comparator class $\weakset$: note that if Kumar et al. were to use our comparator class $\weakset$, they would not recover our guarantees (see also discussion of BEAR in Section~\ref{sec:challenges}). 
When there is an optimal policy with small escaping probability from support of $\zeta$, we obtain an error bound relative to the optimal value.
\begin{corollary}[$\mu$ covers an optimal policy]
\label{cor:api_optgap}
If there exists an optimal policy $\pi^\star$ in $M$ such that $\Pr(\mu(s,a) \le 2b|\piopt) \le \epsilon $.
then under the conditions in Theorem \ref{thm:api}, $\widehat{\pi}_t$ returned by  Algorithm \ref{alg:api} satisfies that with probability at least $1-3\delta$, $v^{\widehat{\pi}_t}_{M} \ge v^{\star}_{M} -\Delta$, where $\Delta$ is the right hand side of Theorem \ref{thm:api} and $\badstateprob$ in $\Delta$ is $\epsilon + C\tvmuerror$.
\end{corollary}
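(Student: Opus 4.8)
The plan is to derive the corollary as a direct specialization of Theorem~\ref{thm:api}, by exhibiting the optimal policy $\piopt$ as an admissible comparator $\picomp$ in the \weaksettext $\weakset$ with escaping probability $\badstateprob = \epsilon + C\tvmuerror$. Once the membership $\piopt \in \weakset$ at this level of $\badstateprob$ is established, we instantiate Theorem~\ref{thm:api} with $\picomp = \piopt$; since $v^{\piopt}_M = v^{\star}_M$, the left-hand side becomes $v^{\star}_M - v^{\widehat{\pi}_t}_M$, and the right-hand side is exactly $\Delta$ with $\badstateprob = \epsilon + C\tvmuerror$ substituted into its final term $\Vmax\badstateprob/(1-\gamma)$. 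The probability budget is unchanged: the event $\|\muhat - \mu\|_{\text{TV}} \le \tvmuerror$ from Assumption~\ref{assume:muhat} is already accounted for within the $1-3\delta$ guarantee of Theorem~\ref{thm:api}, so I simply work on that event throughout.

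The substance of the argument is the membership claim $\Pr(\zeta(s,a)=0 \mid \piopt) \le \epsilon + C\tvmuerror$. Recalling that $\zeta(s,a)=0$ means $\muhat(s,a) < b$, I would bound $\eta^{\piopt}(\{\muhat < b\})$ by splitting the bad set according to the \emph{true} density:
\begin{align*}
\eta^{\piopt}(\{\muhat < b\}) = \eta^{\piopt}(\{\muhat < b,\, \mu \le 2b\}) + \eta^{\piopt}(\{\muhat < b,\, \mu > 2b\}).
\end{align*}
The first term is at most $\eta^{\piopt}(\{\mu \le 2b\}) = \Pr(\mu(s,a) \le 2b \mid \piopt) \le \epsilon$ by the hypothesis of the corollary. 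For the second term, observe that on the set $A := \{\muhat < b,\, \mu > 2b\}$ there is a pointwise separation $\mu(s,a) - \muhat(s,a) > b$, hence $(\mu(s,a)-\muhat(s,a))/b > 1$. Combining this with the bounded-density Assumption~\ref{asm:bounded_density}, which gives $\eta^{\piopt}(s,a) \le \densitybound$ for every $(s,a)$ since $\eta^{\piopt} = (1-\gamma)\sum_h \gamma^h \eta^{\piopt}_h$ is a convex combination of the $\eta^{\piopt}_h \le \densitybound$, yields the key pointwise inequality $\eta^{\piopt}(s,a) \le \densitybound \cdot (\mu(s,a)-\muhat(s,a))/b = C(\mu(s,a)-\muhat(s,a))$ on $A$. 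Summing (or integrating) over $A$ and invoking Assumption~\ref{assume:muhat} gives
\begin{align*}
\eta^{\piopt}(A) \le C \sum_{(s,a)\in A} \big(\mu(s,a)-\muhat(s,a)\big) \le C\,\|\muhat-\mu\|_{\text{TV}} \le C\tvmuerror.
\end{align*}
Adding the two contributions establishes $\Pr(\zeta(s,a)=0\mid \piopt) \le \epsilon + C\tvmuerror$, i.e. $\piopt \in \weakset$ at the stated escaping level, which closes the reduction.

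I expect the main obstacle to be this middle step: converting the pointwise density \emph{separation} on $A$ into an \emph{aggregate} bound on the $\eta^{\piopt}$-measure of $A$. This is exactly where the factor $C = \densitybound/b$ is forced to appear, and it relies on two ingredients acting together --- the bounded-density assumption to cap $\eta^{\piopt}$, and the threshold gap ($\mu > 2b$ against $\muhat < b$) to guarantee a separation of at least $b$. The factor of $2$ in the hypothesis $\mu \le 2b$ (rather than $\mu \le b$) is precisely what manufactures this gap and lets the TV error, rather than an uncontrolled tail of $\mu$, govern the second term. The remaining details --- verifying the convex-combination bound $\eta^{\piopt}\le \densitybound$ and tracking the normalization constant in the definition of $\|\cdot\|_{\text{TV}}$ --- are routine, and once the membership claim is in hand the corollary follows by a single invocation of Theorem~\ref{thm:api}.
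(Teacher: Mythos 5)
Your proposal is correct and matches the paper's proof in essentially every respect: both establish $\piopt \in \weakset$ with $\badstateprob = \epsilon + C\tvmuerror$ by splitting the event $\{\muhat(s,a) < b\}$ according to whether $\mu(s,a) \le 2b$, bounding the remaining piece --- where the separation $\mu - \muhat \ge b$ holds --- by $C\tvmuerror$ via Assumptions~\ref{asm:bounded_density} and~\ref{assume:muhat}, and then invoking Theorem~\ref{thm:api} with $\picomp = \piopt$. The only cosmetic difference is that the paper executes the middle step as Markov's inequality, $\Pr(|\mu - \muhat| \ge b \mid \piopt) \le \EE_{\eta^{\piopt}}[|\mu-\muhat|]/b$, followed by the change of measure $\eta^{\piopt} \le \densitybound$, whereas you argue pointwise on the separation set before summing --- the same computation carried out in a different order.
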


When the completeness assumptions holds without error and $\gamma^{K-1} \leq \epsilon$, the error bound reduces to
\begin{align*}
    v^{\widehat{\pi}_t}_{M} \ge v^{\star}_{M} - \mathcal{O}\left(C\sqrt{  \ln(|\Fcal||\Pcal|/\delta)/n} + C\tvmuerror  + \epsilon\right)\Vmax/(1-\gamma)^3
\end{align*}
 This is similarly tight as prior analysis assuming concentrability~\cite{lagoudakis2003least,bertsekas2011approximate}. For MBS-QI, the error bound is in a similar form. However the general MBS-QI bound of Corollary~\ref{cor:fqi_optgap} has an additional Bellman residual term related to $\OurT$ and $\pi^\star$. This term arises since in value iteration, the fixed point of $\OurT$ may no longer be the value function of the optimal policy under support. When comparing to on-policy policy optimization~\cite{kakade2002approximately,scherrer2014approximate,agarwal2019optimistic,geist2019theory}, the constant $C$ is akin to the density ratio with respect to an optimal policy in those works, though we pay an additional price on the size of densities. The terms regarding value function completeness can be avoided by Monte-carlo estimation in on-policy settings and there is no need for density estimation to regularize value bootstrapping either. 
 
 Note that if we can find a $b$ and $\badstateprob$ such that $\mu \in \weakset$ given sufficient data, then Theorem~\ref{thm:api} immediately yields a policy improvement guarantee too, analogous to the tabular guarantees known for BCQL, SPIBB and other safe policy improvement guarantees. Here, we provide the safe policy improvement guarantees in tabular settings. 
 The details of proof are in Appendix \ref{app:api_safe_policy_improvement}.
 \begin{corollary}[Safe policy improvement -- discrete state space]
 \label{cor:spi_tabular}
For finite state action spaces and $b \le \mu_{\min}$, under the same assumptions as Theorem \ref{thm:api}, there exist function sets $\Fcal$ and $\Pcal$ (specified in the proof) such that $\widehat{\pi}_t$ from Algorithm \ref{alg:api} satisfies that with probability at least $1-3\delta$,
\begin{align*}
    v^{\hat{\pi}_t}_M \ge v^{\mu}_M -  \widetilde{\mathcal{O}}\left( \frac{\Vmax}{b(1-\gamma)^3} \left( \frac{|\Scal||\Acal|}{n} + \sqrt{\frac{|\Scal||\Acal|}{n}} \right) + \frac{\gamma^{K} \Vmax}{(1-\gamma)^2} \right)
\end{align*} 
\end{corollary}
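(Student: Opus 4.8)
The plan is to obtain the guarantee as a direct specialization of Theorem~\ref{thm:api} with the comparator $\picomp = \mu$, i.e.\ the behavior policy itself. Once we verify that $\mu$ belongs to the $\zeta$-constrained set $\weakset$ with a suitably small escaping probability $\badstateprob$, Theorem~\ref{thm:api} gives $v^{\mu}_M - v^{\widehat{\pi}_t}_M \le \Delta$, so the entire task reduces to (i) exhibiting finite classes $\Fcal, \Pcal$ in the tabular regime whose completeness errors are negligible and whose log-cardinalities scale as $\widetilde{O}(|\Scal||\Acal|)$, and (ii) instantiating the problem-dependent constants $\densitybound$, $\tvmuerror$, $C$, and $\badstateprob$ and collecting the terms on the right-hand side of Theorem~\ref{thm:api}.

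For the function class I would take $\Fcal$ to be a finite $\epsilon_0$-cover of $[0,\Vmax]^{\Scal\times\Acal}$ in $\|\cdot\|_\infty$, so that $\ln|\Fcal| = O(|\Scal||\Acal|\ln(\Vmax/\epsilon_0))$, and $\Pcal$ to be the set of all deterministic policies, so that $\ln|\Pcal| = O(|\Scal|\ln|\Acal|)$; together these give $\ln(|\Fcal||\Pcal|) = \widetilde{O}(|\Scal||\Acal|)$. Because $\OurTpi$ maps into $[0,\Vmax]^{\Scal\times\Acal}$ in the tabular setting, Assumption~\ref{asm:api_completeness} holds with $\apicompleteerror = O(\epsilon_0^2)$, and since the $\zeta$-filtered greedy policy is deterministic and hence lies in $\Pcal$, Assumption~\ref{asm:weak_api_pi_realizalibity} holds with $\apigreedyerror = 0$. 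Choosing $\epsilon_0 = 1/n$ makes the completeness contributions $\frac{8C\sqrt{\apicompleteerror}}{(1-\gamma)^3}$ and $\frac{2C\apigreedyerror}{(1-\gamma)^2}$ lower-order and absorbed into the $\widetilde{O}$.

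It remains to control the novel constants. In a discrete MDP every occupancy mass satisfies $\eta^\pi_h(s,a) \le 1$, so Assumption~\ref{asm:bounded_density} holds with $\densitybound = 1$ and $C = \densitybound/b = 1/b$, explaining the $1/b$ prefactor; using a maximum-likelihood (empirical-frequency) estimate $\muhat$ gives the standard rate $\tvmuerror = \widetilde{O}(\sqrt{|\Scal||\Acal|/n})$, which feeds the $\frac{6C\Vmax\tvmuerror}{(1-\gamma)^3}$ term and, together with the generalization term $\frac{C\Vmax\sqrt{\ln(|\Fcal||\Pcal|/\delta)}}{(1-\gamma)^3\sqrt{n}}$, produces the $\frac{\Vmax}{b(1-\gamma)^3}\sqrt{|\Scal||\Acal|/n}$ contribution. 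The heart of the argument is showing $\mu\in\weakset$ with small escaping probability: with $b \le \mu_{\min}$, every state-action pair reachable by running $\mu$ has true density at least $\mu_{\min}\ge b$, so the filter $\zeta$ can only reject a pair $(s,a)$ when the estimate $\muhat(s,a)$ falls below $b$ despite $\mu(s,a)\ge b$. I would bound $\badstateprob = \EE_{\eta^{\mu}}[\mathds{1}(\muhat(s,a) < b)]$ by a union bound over the at most $|\Scal||\Acal|$ pairs together with a multiplicative concentration inequality for the empirical counts about $n\mu(s,a)$, which yields the fast-rate term $\widetilde{O}(|\Scal||\Acal|/n)$; substituting this $\badstateprob$ (carrying the $C$ and horizon factors as in the translation used for Corollary~\ref{cor:api_optgap}) into Theorem~\ref{thm:api} and merging with the leftover $\gamma^{K-1}\Vmax/(1-\gamma)^2$ term completes the bound.

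The main obstacle I anticipate is this last escaping-probability step, precisely because the comparator's occupancy $\eta^{\mu}$ is generally not equal to the sampling distribution $\mu$ used to build the filter. One must argue that running the behavior policy stays within the support of $\mu$ (so that no mass escapes to genuinely unsampled pairs) and then quantify, at the boundary where $\mu(s,a)$ is close to the threshold $b$, the probability that the finite-sample estimate $\muhat$ mis-filters a well-supported pair. Handling this cleanly, rather than incurring a constant-order failure when $b = \mu_{\min}$ exactly, is what forces the $b \le \mu_{\min}$ condition and drives the $\frac{|\Scal||\Acal|}{n}$ term; all remaining steps are routine substitutions into Theorem~\ref{thm:api}.
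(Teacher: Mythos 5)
Your overall skeleton matches the paper's proof: specialize Theorem~\ref{thm:api} to the comparator $\picomp=\mu$, take $\Pcal$ to be all deterministic policies (so $\apigreedyerror=0$ and $\ln|\Pcal|\le|\Scal|\ln|\Acal|$), take $\Fcal$ to be a $1/n$-cover of $[0,\Vmax]^{|\Scal||\Acal|}$, and use $\densitybound\le 1$, $C=1/b$ in discrete spaces. The divergence --- and the genuine gap --- is in how the two remaining quantities are handled. The paper obtains the fast $|\Scal||\Acal|/n$ term \emph{not} from the escaping probability but from a refined bound on $\tvmuerror$: per-entry Bernstein, a union bound over the $|\Scal||\Acal|$ pairs, and Cauchy--Schwarz give $\|\muhat-\mu\|_{TV}\le \widetilde{\mathcal{O}}\bigl(\sqrt{|\Scal||\Acal|/n}+|\Scal||\Acal|/n\bigr)$, which is then fed through the $6C\Vmax\tvmuerror/(1-\gamma)^3$ term of Theorem~\ref{thm:api}; the escaping probability is simply set to $\badstateprob=0$, by arguing that $b\le\mu_{\min}$ makes $\Pr(\mu(s,a)\le b\mid\mu)=0$. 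Your crude rate $\tvmuerror=\widetilde{\mathcal{O}}(\sqrt{|\Scal||\Acal|/n})$ misses the paper's actual source of the fast term, and you instead try to manufacture it from a concentration bound on $\badstateprob$.

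That step fails as stated. Since $\zeta$ thresholds the \emph{estimate} $\muhat$, your quantity is $\badstateprob=\EE_{\eta^{\mu}}\left[\mathds{1}\left(\muhat(s,a)<b\right)\right]$, and the hypothesis $b\le\mu_{\min}$ permits equality: if some reachable pair has $\mu(s,a)=b$, then $\muhat(s,a)$ is an empirical frequency with mean exactly $b$, so $\Pr(\muhat(s,a)<b)\to 1/2$; with constant probability that pair is filtered out and $\badstateprob\ge\eta^{\mu}(s,a)$, a constant, so no $\widetilde{\mathcal{O}}(|\Scal||\Acal|/n)$ high-probability bound on $\badstateprob$ can hold. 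You anticipated exactly this boundary case but concluded, incorrectly, that the condition $b\le\mu_{\min}$ resolves it --- a multiplicative Chernoff lower-tail bound needs a strict margin such as $2b\le\mu_{\min}$, the analogue of the $2b$ slack built into Corollary~\ref{cor:api_optgap}. To be fair, the paper itself glosses the $\mu$-versus-$\muhat$ distinction by checking the filter condition against the true density; but its bound never relies on concentration for $\badstateprob$, whereas yours does, and the rate you claim for it is unattainable under the stated hypothesis. Repairing your route requires either strengthening the margin condition, or following the paper: take $\badstateprob$ to be zero (or of order $C\tvmuerror$) and recover the $|\Scal||\Acal|/n$ term from the Bernstein-based TV bound instead.
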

This corollary is comparable with the safe policy improvement result in \cite{laroche2019safe} in its $\Scal$, $\Acal$ dependence. Note that their hyper-parameter $N_{\land}$ is analogous to $bn$ in our result. Our dependency on $1-\gamma$ is worse, as our algorithm is not designed for the tabular setting, and matches prior results in the function approximation setting as remarked after Theorem~\ref{thm:api}.

\subsection{Main results for MBS-VI}

Now we show the error bound of our value iteration algorithm, MBS-QI. First we introduce a similar completeness assumption about the Bellman optimality operator:
\begin{assumption}[Completeness under $\OurT$] \label{asm:avi_completeness}
	$\max_{f\in\Fcal} \min_{g \in \Fcal} \|g - \OurT f\|_{2, \mu}^2 \le \avicompleteerror$
\end{assumption}
The main theorem for value iteration is similar with the policy iteration error bound
\begin{theorem}
\label{thm:fqi}
Under Assumption \ref{asm:bounded_density} and \ref{assume:muhat}, given a MDP $M = <\Scal, \Acal, R, P, \gamma, p>$, a dataset $D = \{ (s,a,r,s') \}$ with $n$ samples that is draw i.i.d. from $\mu \times R \times P$, and a finite Q-function classes $\Fcal$ satisfying Assumption \ref{asm:avi_completeness}, $\widehat{\pi}_t$ from Algorithm \ref{alg:fqi} satisfies that with probability at least $1-\delta$,
\begin{align*}
    v^{\picomp} - v^{\widehat{\pi}_t} \le \frac{2C\left(\mathcal{O}\left(\sqrt{\frac{\Vmax^2  \ln(|\Fcal|/\delta)}{n}} \right) + \sqrt{\avicompleteerror}+\Vmax\tvmuerror + \left\| Q^{\picomp} - \OurT Q^{\picomp} \right\|_{2,\mu}\right) }{(1-\gamma)^2}  + \frac{(2\gamma^t + \badstateprob) \Vmax}{1-\gamma} 
\end{align*}
for any policy $\picomp \in \weakset$.
\end{theorem}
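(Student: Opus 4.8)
The plan is to mirror the auxiliary-MDP argument sketched for Theorem~\ref{thm:api}, adapting it from the evaluation operator $\OurTpi$ to the optimality operator $\OurT$. First I would build the same extended MDP $M'$: augment each state with an absorbing action $a_{\text{abs}}$ leading to a zero-reward absorbing state $s_\text{abs}$, and reinterpret the filter $\zeta$ as routing every insufficiently-covered state--action pair into $s_\text{abs}$. Under this reinterpretation the quantity $\max_{a'}\zeta\circ f(s',a')$ is exactly a standard optimality backup in $M'$ over the supported actions together with $a_{\text{abs}}$ (whose continuation value is $0$), so that $\OurT$ coincides with the ordinary Bellman optimality operator of $M'$ restricted to this action set. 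Consequently $\OurT$ is a $\gamma$-contraction in $\|\cdot\|_\infty$ with a unique fixed point $\widetilde{Q}^\star$, which equals $Q^\star_{M'}$ on the supported region. The crucial gain is that any quantity appearing inside a backup is multiplied by $\zeta$ and hence supported on $\{\muhat \ge b\}$; on this set the density ratio $\eta^\pi/\mu$ is bounded by $\densitybound/b = C$ (using Assumption~\ref{asm:bounded_density}), which is what replaces the concentrability coefficient and produces the factor $C$ in the bound.

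Second, I would establish the per-iteration regression error. Each update $f_{t+1}=\argmin_{f}\Lcal_D(f;f_t)$ is least-squares regression onto the empirical target $r+\gamma\max_{a'}\zeta\circ f_t(s',a')$, whose population mean is $\OurT f_t$, so a standard uniform-convergence / Bernstein argument over the finite class $\Fcal$ gives $\|f_{t+1}-\OurT f_t\|_{2,\mu} \le \mathcal{O}(\sqrt{\Vmax^2\ln(|\Fcal|/\delta)/n}) + \sqrt{\avicompleteerror}$, where the completeness term comes from Assumption~\ref{asm:avi_completeness}. Because $\zeta$ is formed from $\muhat$ rather than $\mu$, I would separately control the mismatch between the operator we actually run and its $\mu$-based idealization using Assumption~\ref{assume:muhat}, paying an additive $\Vmax\tvmuerror$ (the two filters disagree only on an event of $\mu$-probability at most $\tvmuerror$, each disagreement costing at most $\Vmax$). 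Iterating the contraction, a telescoping error-propagation argument (converting each $\mu$-weighted one-step error to a weighted error along the trajectory of the evaluated policy, picking up one factor of $C$ and one factor of $1/(1-\gamma)$) bounds the suboptimality of $\widehat{\pi}_t$ relative to $\widetilde{Q}^\star$ by $\tfrac{2C}{(1-\gamma)^2}$ times the per-step error, with the initialization contributing the $\gamma^t \Vmax/(1-\gamma)$ term.

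Finally, I would connect $\widetilde{Q}^\star$ back to the comparator and to $M$. Since $\picomp \in \weakset$ escapes the support with probability at most $\badstateprob$, its trajectory distribution in $M$ and in $M'$ differ only on that event, so $v^{\picomp}_M$ and its $M'$-value differ by at most $\Vmax\badstateprob/(1-\gamma)$, supplying the $\badstateprob$ term. The genuinely new feature relative to Theorem~\ref{thm:api} is that value iteration converges to the greedy-optimal-under-support value $\widetilde{Q}^\star$, which need \emph{not} coincide with $Q^{\picomp}$: in policy iteration the inner loop returns (approximately) $Q^{\pi_k}$ for the current comparator, whereas here the comparator is external to the fixed point. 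I would therefore insert the decomposition $Q^{\picomp} = \OurT Q^{\picomp} + (Q^{\picomp}-\OurT Q^{\picomp})$ and carry the residual $\|Q^{\picomp}-\OurT Q^{\picomp}\|_{2,\mu}$ through the change of measure, which is precisely the extra term in the statement; it vanishes when $\picomp$ is itself optimal among supported policies. Threading this residual through the telescoping while keeping every density ratio confined to the supported set (so that the constant stays $C$ rather than a global concentrability coefficient) is the main obstacle, together with bookkeeping the $\tvmuerror$ mismatch consistently inside each backup.
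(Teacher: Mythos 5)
Your high-level architecture is the same as the paper's: the auxiliary MDP $M'$ with the absorbing action, comparison through the projected comparator $\projection(\picomp)\in\strongset$ at a cost of $\Vmax\badstateprob/(1-\gamma)$, a Bernstein/uniform-convergence bound on $\|f_{t+1}-\OurT f_t\|_{2,\mu}$ over the finite class $\Fcal$, and a telescoping recursion driven by the decomposition $f_t - Q^{\picomp} = (f_t-\OurT f_{t-1})+(\OurT f_{t-1}-\OurT Q^{\picomp})+(\OurT Q^{\picomp}-Q^{\picomp})$, where every norm is $\zeta$-filtered so the change of measure $\nu \le C\,\muhat$ holds on the support and the Bellman residual $\|Q^{\picomp}-\OurT Q^{\picomp}\|_{2,\mu}$ is carried through — this is exactly the paper's Lemmas on decomposition, iteration, and concentration. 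However, there are two concrete problems. First, your handling of the density-estimation error would fail as written: you claim ``the two filters disagree only on an event of $\mu$-probability at most $\tvmuerror$,'' but $\|\muhat-\mu\|_{\text{TV}}\le\tvmuerror$ does \emph{not} control $\mu\bigl(\mathds{1}(\muhat\ge b)\neq \mathds{1}(\mu\ge b)\bigr)$. For instance, if $\mu(s,a)=b-\varepsilon$ on a large set while $\muhat(s,a)=b$ there, the filters disagree on a set of large $\mu$-mass even though the TV distance is arbitrarily small. The paper never compares a $\muhat$-filter to a $\mu$-filter: it keeps the single $\muhat$-based $\zeta$ throughout the analysis and pays $\tvmuerror$ only when converting norms, via $\|g\|_{1,\muhat}\le\|g\|_{1,\mu}+\Vmax\tvmuerror$ inside the iteration lemma; where a filter-versus-density comparison is genuinely needed (Corollary~\ref{cor:fqi_optgap}), it costs $C\tvmuerror$ — obtained from Markov's inequality plus the density bound $\densitybound$ and the margin between $b$ and $2b$ — not $\tvmuerror$.

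Second, your fixed-point description is wrong: the fixed point of $\OurT$ is the optimal $Q$-function of the \emph{action-restricted} MDP (equivalently, optimal over $\strongset$), not ``$Q^\star_{M'}$ on the supported region.'' In the paper's $M'$, unsupported real actions $a\in\Acal$ retain their original rewards and dynamics — only $a_\text{abs}$ leads to $s_\text{abs}$, and the routing of unsupported pairs happens at the \emph{policy} level through the projection $\projection$ — so $Q^\star_{M'}$ coincides with $Q^\star_M$ and can be strictly larger than the fixed point of $\OurT$. This slip is mostly harmless because your operative third paragraph correctly abandons the fixed point and works with $Q^{\picomp}$ plus its residual, which is what the paper actually does (its recursion compares $f_t$ to $Q^{\picomp}$ directly and never introduces $\widetilde{Q}^\star$). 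One further small omission: you should state why the bound proved for the projected policy transfers to the algorithm's actual output $\widehat{\pi}_t$, which may select actions where $\zeta\equiv 0$; the paper closes this with $v^{\widehat{\pi}_t}_M = v^{\widehat{\pi}_t}_{M'} \ge v^{\projection(\widehat{\pi}_t)}_{M'}$, since projecting a policy can only decrease its value in $M'$.
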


The error bound has similar dependency on $C$, $\avicompleteerror$, $n$ as in the API case. The dependency on horizon is $\mathcal{O}(1/(1-\gamma)^2)$ which is standard in approximate value iteration analysis \cite{munos2005error,munos2008finite}. The term $\| \zeta (Q^{\projection(\picomp)} -  \OurT Q^{\projection(\picomp)}) \|_{2,\mu}$ in the error bound in Theorem \ref{thm:fqi} comes from the fact that the bound applies to any policy $\picomp$ in the \weaksettext, which might not be optimal. The scale of this term approximately measures the sub-optimality of the policy $\picomp$. $\zeta$ implies that we only measure the sub-optimality in state-action pairs visited often enough in the data. By choosing the best $\picomp$ in $\weakset$ with a good enough $Q^{\picomp}$ that term should be relatively small, and intuitively it measures ``how good is the best policy with sufficient coverage''. This term is not universally guaranteed to be small, but empirically it can often be negligible, as our illustrative examples in Section~\ref{sec:challenges} demonstrate. In Appendix~\ref{appendix:fqi} we develop a stronger notion of a constrained policy set such that, if $\pi^*$ is in that set then this error term is guaranteed to be small. Similar to Corollary \ref{cor:api_optgap}, the same type of bound w.r.t. optimal policy can be derived, formally shown in Corollary \ref{cor:fqi_optgap}.

\begin{corollary}
\label{cor:fqi_optgap}
If there exists an $\pi^\star$ on $M$ such that $\Pr(\mu(s,a) \le 2b|\piopt) \le \epsilon $.
then under the condition as Theorem \ref{thm:fqi}, $\widehat{\pi}_t$ from Algorithm \ref{alg:fqi} satisfies that with probability at least $1-2\delta$, $v^{\widehat{\pi}_t}_{M} \ge v^{\star}_{M} -\Delta$, where $\Delta$ is the right hand side of Theorem \ref{thm:fqi} and $\badstateprob$ in $\Delta$ is $\epsilon + C\tvmuerror$.
\end{corollary}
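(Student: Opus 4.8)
\textit{Proof proposal.} The plan is to instantiate Theorem~\ref{thm:fqi} with the comparator $\picomp = \piopt$. Since $v^{\piopt}_M = v^\star_M$ by definition, the theorem's bound immediately reads $v^\star_M - v^{\widehat{\pi}_t}_M \le \Delta$ with $\Delta$ the stated right-hand side, \emph{provided} $\piopt$ is an admissible comparator, i.e. $\piopt \in \weakset$, and provided we can identify the value of $\badstateprob$ that applies to it. Note that $\weakset$ is itself a random set (it depends on $\muhat$), so membership of $\piopt$ must be argued rather than assumed. Thus the whole content of the corollary is to convert the coverage hypothesis $\Pr(\mu(s,a)\le 2b\mid\piopt)\le\epsilon$ into the membership $\piopt\in\weakset$ with escaping probability $\badstateprob = \epsilon + C\tvmuerror$; everything else is a verbatim appeal to the theorem.

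The key step is this conversion, which I would carry out on the event (of probability at least $1-\delta$, by Assumption~\ref{assume:muhat}) that $\|\mu-\muhat\|_\text{TV}\le\tvmuerror$. By Definition~\ref{def:weak_constrained_set}, membership is controlled by $\Pr(\zeta(s,a)=0\mid\piopt) = \EE_{(s,a)\sim\eta^{\piopt}}[\ind(\muhat(s,a)<b)]$. I would split this expectation according to whether $\mu(s,a)\le 2b$: the part on $\{\mu\le 2b\}$ is at most $\Pr(\mu(s,a)\le 2b\mid\piopt)\le\epsilon$ by hypothesis, while on the complementary event $\{\muhat<b,\ \mu>2b\}$ one has $\mu(s,a)-\muhat(s,a)>b$, so the indicator is bounded pointwise by $|\mu(s,a)-\muhat(s,a)|/b$. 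Invoking the pointwise density bound $\eta^{\piopt}(s,a)\le\densitybound$ from Assumption~\ref{asm:bounded_density} to pass from an expectation under $\eta^{\piopt}$ to one controlled in $\mu$-space, this second part is at most $(\densitybound/b)\,\|\mu-\muhat\|_\text{TV} = C\tvmuerror$. Combining the two pieces gives $\Pr(\zeta(s,a)=0\mid\piopt)\le\epsilon+C\tvmuerror$, so $\piopt\in\weakset$ with $\badstateprob=\epsilon+C\tvmuerror$.

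It then remains to apply Theorem~\ref{thm:fqi} with $\picomp=\piopt$ and to bookkeep the probabilities: the density-estimation event above holds with probability $1-\delta$, and the statistical concentration events internal to Theorem~\ref{thm:fqi} hold with probability $1-\delta$, so by a union bound both hold simultaneously with probability at least $1-2\delta$. On that intersection the theorem gives $v^\star_M - v^{\widehat{\pi}_t}_M = v^{\piopt}_M - v^{\widehat{\pi}_t}_M \le \Delta$, which is exactly the claim. All remaining terms in $\Delta$, including the Bellman residual $\|Q^{\piopt}-\OurT Q^{\piopt}\|_{2,\mu}$ produced by specializing $\picomp=\piopt$, are inherited unchanged and require no further work.

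The only non-mechanical point is the measure-theoretic decomposition in the second paragraph. The factor-of-two buffer in the hypothesis (comparing $\mu$ against $2b$ but thresholding $\muhat$ at $b$) is precisely what lets the density-estimation error enter the escaping probability linearly rather than as a hard cutoff artifact, and care is needed (i) in using the pointwise bound $\densitybound$ to translate a total-variation bound stated in $\mu$-space into a bound under the occupancy $\eta^{\piopt}$, and (ii) in matching the total-variation normalization so that the constant multiplying $C\tvmuerror$ comes out as stated.
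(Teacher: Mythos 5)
Your proposal is correct and takes essentially the same approach as the paper: the paper likewise proves $\piopt \in \weakset$ with $\badstateprob = \epsilon + C\tvmuerror$ by splitting the event $\{\muhat(s,a) \le b\}$ according to whether $\mu(s,a) \le 2b$, bounding the second piece via Markov's inequality (your pointwise bound $\ind(|\mu-\muhat| \ge b) \le |\mu-\muhat|/b$ is the same step) together with the density bound $\eta^{\piopt} \le \densitybound$ from Assumption~\ref{asm:bounded_density}, and then applies Theorem~\ref{thm:fqi} with the comparator $\picomp = \piopt$ and a union bound over the density-estimation and concentration events. Your explicit remarks that $\weakset$ is a random set and that the TV normalization must be tracked are sound points the paper handles only implicitly, but they do not change the argument.
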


To summarize, our analysis makes two main contributions. First, compared to prior work which provides guarantees of the performance relative to the optimal policy under concentrability, we provide similar bounds by only requiring that an optimal policy, instead of every policy, is well supported by $\mu$. Second, when an optimal policy is not well supported, our algorithms are guaranteed to output a policy whose performance is near optimal in the supported policy class.

\section{Experimental Results}
\label{sec:expts}
The key innovation in our algorithm uses an estimated $\zeta$ to filter backups from unsound state-action pairs. In Section ~\ref{sec:challenges} we showed how prior approaches without this can fail in some illustrative examples. We now experiment in two standard domains -- Cartpole and Mujoco, that utilize very different $\zeta$-estimation procedures. We show several experiments where the data collected ranges from being inadequate for batch RL to complete coverage (where even unsafe batch RL algorithms can succeed). 
Our algorithms return policies better than any baseline batch RL algorithm across the entire spectrum of datasets. Our algorithms need the hyperparameter $b$ to trade off conservatism of a large $b$ (where the algorithm stays at its initialization in the limit) and unfounded optimism of $b=0$ (classical FQI/FPI).
In discrete spaces, we can set $b=n_0/n$ where $n_0$ is our prior for the number of samples we need for reliable distribution estimates and $n$ is the total sample size. In continuous spaces, we can set the threshold to be a percentile of $\muhat$, so as to filter out updates from rare outliers in the dataset. We can also run post-hoc diagnostics on the choice of $b$ by computing the average of $\zeta(s,\pi(s))$ for the resulting policy $\pi$ over the batch dataset. If this quantity is too small, we can conclude that $\zeta$ filters out too many Bellman backups and hence rerun the procedure with a lower $b$. 

\subsection{MBS-QI in discrete spaces}
\label{sec:discrete_cartpole}
We first compare MBS-QI with FQI, BCQL \cite{fujimoto2019off}, SPIBB \cite{laroche2019safe} and behavior cloning (BC) in OpenAI gym CartPole-v0, with a discrete state space by binning each state dimension resulting in $10000$ states. FQI uses a vanilla Bellman operator to update the $Q$ function in each iteration. For BCQL, we fit a modified operator in their Eq (10), changing the constraint $\muhat(a|s) > 0$ 
to allow a threshold $b \ge 0$. For SPIBB, we learn the $Q$ function by fitting the operator in their Eq (9) (SPIBB-DQN objective). Both SPIBB and MBS-QI require $\muhat(s,a)$ to construct the modified operator, but use different notions for being conservative. For all algorithms, $\muhat(s,a)$ and $\muhat(a|s)$ are constructed using empirical counts. 

\begin{figure*}[tb]
    \includegraphics[width=\textwidth]{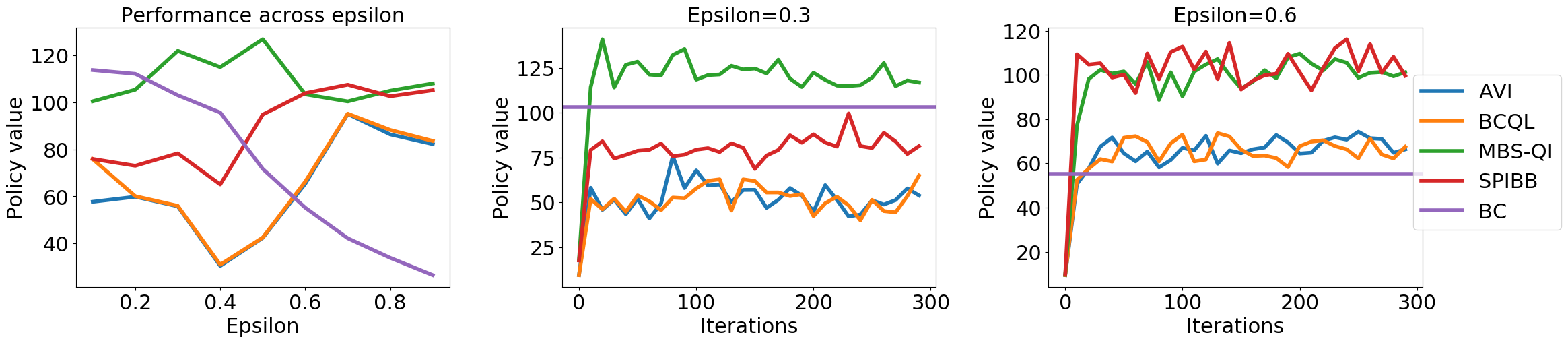}
    \caption{CartPole-v0. Left: convergent policy value across different ($\epsilon$-greedy) behavior policies. Middle and Right: learning curves when $\epsilon=0.3, 0.6$. We allow non-zero threshold for BCQL to subsume the tabular algorithm of BEAR~\cite{kumar2019stabilizing}.}
    \label{fig:cp}
    \vspace{-0.2in}
\end{figure*}

We collect $n=10^4$ transitions from an epsilon-greedy policy w.r.t a near-optimal $Q$ function. We report the final policy value from different algorithms for epsilon from $0.1$ to $0.9$ in Figure \ref{fig:cp} (left), and learning curves for $\epsilon=0.3, 0.6$ in Figure \ref{fig:cp} (middle, right). The results are averaged over 10 random seeds. Notice that BCQL, SPIBB, and our algorithm need a threshold of $\mu(s,a)$ or $\mu(a|s)$ as hyper-parameter. We show the results with the best threshold from $\{0.005, 0.001, 0.0001\}$ for $\mu(s,a)$ (MBS-QI, SPIBB) and $\{0, 0.1\}$ for $\mu(a|s)$. Our algorithm picks larger $b$ for smaller $\epsilon$ which matches the intuition of more conservatism when the batch dataset is not sufficiently diverse. The results show that our algorithm achieves good performance among all different $\epsilon$ values, and is always better than both behavior cloning and vanilla FQI unlike other baselines. 

\subsection{MBS-QL in continuous spaces}
The core argument for MBS is that $\zeta$-filtration should focus on \emph{state-action} $\muhat(s,a)$ distributions rather than $\muhat(a|s)$. To test this argument in a more complex domain, we introduce MBS Q Learning (MBS-QL) for continuous action spaces which incorporates our $\zeta$-filtration 
on top of the BCQ architecture \cite{fujimoto2019off}. MBS-QL (like BCQ) employs an actor in continuous action space and a variational auto-encoder (VAE) to approximate $\mu(a|s)$. We use an additional VAE to fit the marginal state distribution in the dataset. Since the BCQ architecture already prevents backup from $(s,a)$ with small $\mu(a|s)$, we construct an additional filter function $\zeta(s)$ on state space by $\zeta(s) = \mathds{1}(ELBO(s) > P_{2}) $ where $ELBO(s)$ is the evidence lower bound optimized by VAE, and $P_{2}$ is the $2^{nd}$ percentile of ELBO values of $s$ in the whole batch. Thus the only difference between MBS-QL and BCQ is that MBS-QL applies additional filtration $\zeta(s')$ to the update from $s'$ in Eq (13) in \cite{fujimoto2019off}. 
\begin{wrapfigure}{R}{0.45\textwidth}
    \centering
    \includegraphics[width=0.5\textwidth]{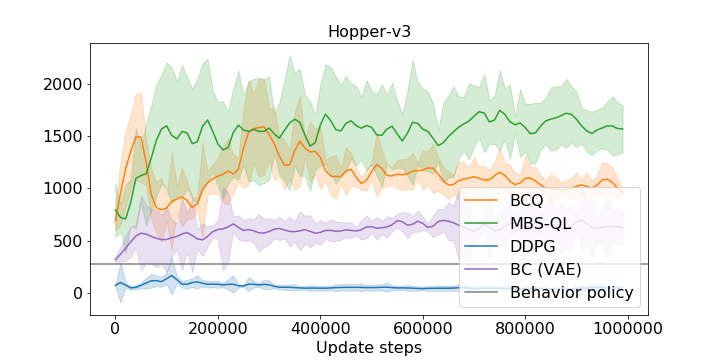}
    \caption{MuJoCo Hopper-v3 domain. Averaged over 5 random seeds and the shadow region in plot shows the standard deviation.}
    \label{fig:hopper}
\end{wrapfigure}

We compared MBS-QL, BCQ, DDPG, and behavior cloning in the Hopper-v3 domain in MuJoCo. DDPG is run with pure offline data to demonstrate the performance of a vanilla RL algorithm without adapting to the batch RL setting. Behavior cloning uses the same VAE to fit the behavior policy as MBS-QL and BCQ, and returns the learned behavior policy. The batch has $n=10^5$ transitions, and we use the same data collection policy as the ``Imperfect'' batch in BCQ~\cite{fujimoto2019off}; a near-optimal policy with two levels of noise: with probability $0.3$ it samples uniformly from action space, with probability $0.7$ it adds a Gaussian noise $\mathcal{N}(0,0.3)$ to the greedy action. In Figure \ref{fig:hopper}, BCQ is able to achieve good policy beyond both vanilla DDPG and behavior cloning method, but can not maintain this improvement as it updates its Q function in an unsafe manner. This experiment demonstrates that the additional state support filter can prevent the collapse of BCQ.

\section{Related Work}

Research in batch RL focuses on deriving the best possible policy from the available data~\cite{lange2012batch}. For practical settings that necessitate using function approximators, fitted value iteration~\cite{ernst2005tree,riedmiller2005neural} and fitted policy iteration~\cite{lagoudakis2003least} provide an empirical foundation that has spawned many successful modern deep RL algorithms. Many prior works provide error bounds as a function of the violation of realizability and completeness assumptions such as~\cite{xie2020q}. In the online RL setting, concentrability can be side-stepped~\cite{yuprovable2019} but can still pose a significant challenge (e.g., the hardness of exploration in~\cite{chen2019information}). A commonly-used equivalent form of the concentrability assumption is on the discounted summation of ratios between the product of probabilities over state actions under any policy, to the data generating distribution~\cite{antos2008learning,le2019batch}. Our goal is to relax such assumptions to make the resulting algorithms more practically useful.

Several heuristics \cite{hafner2011reinforcement,riedmiller2005neural,fujimoto2019off}
show that algorithmic modifications help alleviate the extrapolation error or maximization bias empirically.  
This paper provides a simple modification that has strong theoretical guarantees even with function approximation. In contrast, the theoretical results of BEAR \cite{kumar2019stabilizing} 
(Theorem 4.1 and 4.2) rely on a finite state space, while the algorithm analyzed in the proof is actually the same as BCQL with a  non-zero threshold whose weakness is shown in Sections \ref{sec:challenges} and \ref{sec:discrete_cartpole}. Error bound in Theorem 4.1 of \cite{kumar2019stabilizing} has an additional non-diminishing term $\alpha(\Pi)$, while non-diminishing terms in our Theorem 1 are the same as in standard analysis. Another recent work \cite{agarwal2019optimistic} highlights that sufficiently large and diverse datasets can lead to good batch RL results but we focus on the other side of the coin: a robust update rule even if we do not have a good dataset.

\section{Discussion \& Conclusion}
We study a key assumption for analysis in batch value-based RL, concentrability, and provide policy iteration and $Q$ iteration algorithms with minor modifications that can be agnostic to this assumption. We remark that the other non-standard assumption about bounded density can be relaxed if we could construct the $\zeta$ filter by thresholding density ratios directly, but this results in a different filter for each policy encountered during the algorithm's operation. Being able to threshold density ratios will also allow us to assert that $\mu \in \weakset$ always, yielding imitation and policy improvement guarantees. We anticipate future work that develops batch RL algorithms that exploit this insight. 

\section{Broader Impact}
Our improvements to batch RL may improve sample efficiency of online RL and safety of off-policy RL enough to consider them in some real-world applications. However we caution that more work is needed (e.g., in closely related areas of off-policy evaluation OPE, confidence estimation, interpretability etc.) before these methods can be reliably deployed in practice. For instance (see Figure~\ref{fig:hopper}) no batch RL algorithm has (a) low variability across runs, (b) reliably terminates when discovering a good policy, (c) gives a sharp confidence interval of expected performance. We anticipate future work in batch RL and OPE that addresses these shortcomings. 

\bibliographystyle{plain}
\bibliography{ref,refs}

\appendix
\appendix
\onecolumn

\setcounter{corollary}{0}
\setcounter{definition}{0}

In Appendix \ref{appendix:def} we introduce some basic definitions that are needed for our theoretical results. In Appendix \ref{appendix:assumption}, we provide a sufficient conditions for Assumption \ref{asm:bounded_density} that were mentioned in the main text. In Appendix \ref{appendix:api} and Appendix \ref{appendix:fqi} we prove the error bounds for MBS-PI and MBS-QI. In Appendix \ref{appendix:cartpole} and Appendix \ref{appendix:hopper} we present more details of our experimental results.

\section{Definition of auxiliary MDP and policy projection}
\label{appendix:def}
First we introduce the definition of an auxiliary MDP $M'$ based on M: each state in M has an absorbing action which leads to a self-looping absorbing state.  All the other dynamics are preserved. Rewards are 0 for the absorbing action and unchanged elsewhere. More formally: The auxiliary MDP $M'$ given $M = <\Scal, \Acal, R, P, \gamma, \rho>$ is defined as $M' = <\Sext, \Aext, R', P', \gamma, \rho>$, where $\Sext = \Scal \bigcup \{s_\text{abs}\}$, $\Aext = \Acal \bigcup \{a_\text{abs}\}$. $R'$ and $P'$ are the same as $R$ and $P$ for all $(s,a) \in \Scal \times \Acal$. $R'(s,a)$ if $s = s_\text{abs}$ or $a = a_\text{abs}$ is a point mass on $0$, and $P'(s,a)$ if $s = s_\text{abs}$ or $a = a_\text{abs}$ is a point mass on $s_\text{abs}$. A data set $D$ generated from distribution $\mu$ on $M$ is also from the distribution $\mu$ on $M'$, since all distributions on $\Scal \times \Acal$ are the same between the two MDPs. This MDP is used only to perform our analysis about the error bounds on the algorithm, and is not needed at all for executing Algorithm \ref{alg:api} and \ref{alg:fqi}. As some of the notations is actually a function of the MDP, we clarify the usage of notation w.r.t. $M$/$M'$ in the appendix:
\begin{enumerate}[nolistsep]
\item Policy value functions $V^{\pi}$/$Q^{\pi}$ and Bellman operators $\Tcal$/$\Tcal^{\pi}$ correspond to $M'$ unless they have additional subscripts.
\item The definition of $\Fcal$, $\Pcal$, $\OurT$, $\OurTpi$, $\muhat$ is independent of the change from $M$ to $M'$.
\item $\mu$ is also a distribution over $\Sext \times \Aext$. The definition of $\zeta$ will be extended to $\Sext \times \Aext$ as follow: 
\begin{align*}
    \zeta(s,a) =  \left\{
        \begin{tabular}{ll}
             $\mathds{1}\left(\muhat(s,a) \ge b\right)$ & $ s \in \Scal, a \in \Acal $ \\
             $0$ & $s = s_\text{abs}$ or $a = a_\text{abs}$
        \end{tabular} \right.
\end{align*}
(That means there is only one version of $\mu$ and $\zeta$ across $M$ and $M'$, instead of like we have $\Tcal^\pi_{M'}$ and $\Tcal^\pi_M$ for $M$ and $M'$.)
\end{enumerate}

Recall the definition of semi-norm of any function of state-action pairs. For any function $g: \Sext \times \Aext \to \mathbb{R}$, $\nu \in \Delta(\Sext \times \Aext)$, and $p\ge 1$, define the shorthand $\|g\|_{p, \nu} := (\mathbb{E}_{(s,a) \sim \nu}[|g(s,a)|^p])^{1/p}$. With some abuse of notation, later we also use this norm for $\nu \in \Delta(\Scal \times \Acal)$ (specifically, $\mu$) by viewing the probability of $\nu$ on additional $(s,a)$ pairs as zero. Given a policy $\pi$, let $\eta^{\pi}_h(s)$ be the marginal distribution of $s_h$ under $\pi$, that is, $\eta^{\pi}_h(s) := \Pr[s_h = s | s_0 \sim p, \pi]$, $\eta^{\pi}_h(s,a) = \eta^{\pi}_h(s)\pi(a|s)$, and $\eta^{\pi}(s,a) = (1-\gamma)\sum_{h=0}^\infty \gamma^h\eta^{\pi}_h(s,a) $. We also use $P(s,a)$ and $P(\nu)$ to denote the next state distribution given a state action pair or given the current state action distribution.

The norm $\|\cdot\|_{p,\nu}$ are defined over $\Sext \times \Aext$. Though for the input space of function $f \in \Fcal$ is $\Scal \times \Acal$, the norm can still be well-defined. All of the norm would not need the value of $f(s,a)$ on $s = s_\text{abs}$ or $a = a_\text{abs}$, because the distribution does not cover those $(s,a)$, or the $f$ inside of the norm is multiplied by other function that is zero for those $(s,a)$. 

We first formally state an obvious result about policy value in $M$ and $M'$. 
\begin{lemma}
\label{lem:newmdp_same_value}
For any policy $\pi$ that only have non-zero probability for $a \in \Acal$, $v^{\pi}_{M'} = v^{\pi}_{M}$.
\end{lemma}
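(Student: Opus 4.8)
The plan is to show that, under a policy $\pi$ that never selects the absorbing action $a_\text{abs}$, the entire trajectory distribution induced by $\pi$ in $M'$ coincides with the one induced in $M$; equality of values then follows immediately, since $v^\pi$ is a functional of this trajectory distribution. The only point requiring verification is that such a trajectory never escapes $\Scal \times \Acal$, so that the dynamics and rewards actually encountered are governed by $P' = P$ and $R' = R$ throughout, and the absorbing state $s_\text{abs}$ is never visited.

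First I would fix the coupling at the initial state: since $\rho$ is a distribution over $\Scal$ (identical in both MDPs by construction), $s_0 \sim \rho$ is supported on $\Scal$ in both $M$ and $M'$. I would then proceed by induction on the step index $h$, with inductive hypothesis that the joint law of $(s_0, a_0, r_0, \dots, s_h)$ is identical in $M$ and $M'$ and that $s_h \in \Scal$ almost surely. For the inductive step, observe that conditioned on $s_h \in \Scal$, the action $a_h \sim \pi(s_h)$ lies in $\Acal$ with probability one, because $\pi$ places zero mass on $a_\text{abs}$ by assumption. Hence $(s_h, a_h) \in \Scal \times \Acal$, on which $R'$ and $P'$ agree exactly with $R$ and $P$ by the definition of $M'$; therefore $r_h \sim R'(s_h, a_h) = R(s_h, a_h)$ and $s_{h+1} \sim P'(s_h, a_h) = P(s_h, a_h)$ have identical laws in the two MDPs. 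Crucially, $P(s_h, a_h)$ is supported on $\Scal$ (in $M'$ the state $s_\text{abs}$ is reachable only via $a_\text{abs}$ or from $s_\text{abs}$ itself), so $s_{h+1} \in \Scal$, closing the induction.

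Having established that the laws of all finite trajectory prefixes agree, I would conclude by noting that $v^\pi_{M'} = \E_{\pi, M'}[\sum_{h=0}^\infty \gamma^h r_h]$ and $v^\pi_M = \E_{\pi, M}[\sum_{h=0}^\infty \gamma^h r_h]$ are both functionals of the (identical) trajectory laws. Since the discounted return is a bounded measurable function of the trajectory, with partial sums uniformly bounded via $Q^\pi \in [0, \Vmax]$, the two expectations coincide, giving $v^\pi_{M'} = v^\pi_M$.

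I do not anticipate a genuine obstacle, as the statement is essentially definitional. The single point that warrants explicit mention is the containment argument in the inductive step: one must use both that $\pi$ avoids $a_\text{abs}$ and that $M'$ routes to $s_\text{abs}$ only through the absorbing action, to guarantee that the process stays inside the common sub-MDP on which $M$ and $M'$ are indistinguishable.
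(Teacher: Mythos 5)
Your proposal is correct and follows essentially the same route as the paper, which simply notes that $P$ and $R$ agree on $\Scal \times \Acal$ and equates the two expected discounted returns in a single display. Your inductive argument that the trajectory never leaves $\Scal \times \Acal$ (using both that $\pi$ avoids $a_\text{abs}$ and that $s_\text{abs}$ is reachable only through $a_\text{abs}$) is just a careful spelling-out of the step the paper leaves implicit.
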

\begin{proof}
By the definition of $M'$, $P$ and $R$ are the same with $M$ over $\Scal \times \Acal$.
$$ v^{\pi}_{M} = \mathbb{E}_{M} \left[ \sum_{t=0}^h \gamma^t r_t | s_0 \sim p, \pi\right] = \mathbb{E}_{M'} \left[ \sum_{t=0}^h \gamma^t r_t | s_0 \sim p, \pi\right] = v^{\pi}_{M'} $$
\end{proof}

For the readability we repeat the Definition \ref{def:weak_constrained_set} here

\begin{definition}[\weaksettext] 
Let $\weakset$ be the set of policies $\Scal \to \Delta(\Acal)$ such that $\Pr(\zeta(s,a) = 0|\pi) \le \badstateprob$. That is
\begin{align}
    (1-\gamma) \sum_{h=0}^\infty \gamma^h  \EE_{s,a\sim \eta^{\pi}_h} \left[ \mathds{1} \left( \zeta(s,a) = 0\right) \right] \le \badstateprob
\end{align}
\end{definition} 
Now we introduce another constrained policy set. Different from \weaksettext which we introduced in Definition \ref{def:weak_constrained_set}, this policy set is on $M'$ instead of $M$ and the policy is forced to take action $a_\text{abs}$ when $\zeta(s,a) = 0$ for all $a$. The reason we introduce this is to help us formally analyze the (lower bound of) performance of the resulting policy. We essentially treat any action taken outside of the support to be $a_\text{abs}$. Later we will define a projection to achieve that and show results about how the policy value changes after projection. 
\begin{definition}[strong $\zeta$-constrained policy set] 
Let $\strongset$ be the set of all policies $\Sext \to \Delta(\Aext)$ such that for $\forall (s,a)$ $\pi(a|s) > 0$ then 1) $\zeta(s,a) > 0$, or 2) $a = a_{\text{abs}}$.
\end{definition}
Notice that for \weaksettext we have no requirement for $\pi$ if for any action $\zeta(s,a)$ is zero. For \strongsettext we enforce $\pi$ to take action $a_\text{abs}$. The second difference is \weaksettext requires the condition holds for $s,a$ that is reachable, which means $\eta^{\pi}_h(s) > 0$ and $\pi(a|s) > 0$. Here we require the same condition holds for any $s,a$ such that $\pi(a|s) > 0$. In general, this is a stronger definition. However, we can show that for any policy in \weaksettext, it can be mapped to a policy in \strongsettext, with changing value bounds. Since we only need to change the behavior of policy in the state actions such that the state actions that $\zeta = 0$, the value of policy will not be much different.

Now we define a projection that maps any policy to $\strongset$.
\begin{definition}[$\zeta$-constrained policy projection]
\label{def:constrained_operator}
$(\projection \pi)(a | s)$ equals $\zeta(s,a)\pi(a|s)$ if $a \in \Acal$, and equals $\sum_{a' \in \Aext} \pi(a'|s) (1-\zeta(s,a'))$ if $a = a_\text{abs}$ 
\end{definition}

Next we show that the projection of policy will has an equal or smaller value than the original policy.
\begin{lemma}
\label{lem:projection_smaller_value}
For any policy $\pi: \Sext \to \Delta(\Aext)$, $v^{\pi}_{M'} \ge v^{\projection(\pi)}_{M'}$, and $v^{\pi}_{M'} = v^{\projection(\pi)}_{M'}$ if for any $(s,a)$ reachable by $\pi$, $\zeta(s,a) = 1$. 
\end{lemma}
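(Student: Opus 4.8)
The plan is to compare the two policies by coupling their trajectories in $M'$, exploiting the one structural feature that makes the absorbing construction meaningful: all returns in $M'$ are non-negative, so $Q^\pi \ge 0$ everywhere in $M'$ (consistent with $Q^\pi \in [0,\Vmax]$ and with $V^\pi(s_\text{abs}) = 0$). This is exactly why the zero-reward absorbing state serves as the most pessimistic continuation, and hence why diverting probability mass onto it can only decrease value. Recall from Definition~\ref{def:constrained_operator} that $\projection(\pi)$ agrees with $\pi$ on every action $a \in \Acal$ with $\zeta(s,a)=1$, and re-routes all remaining mass — the mass $\pi$ places on off-support actions ($\zeta(s,a)=0$) or on $a_\text{abs}$ — onto $a_\text{abs}$. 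This places $\projection(\pi) \in \strongset$ and isolates the only place the two policies can differ.

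First I would set up the coupling. Draw $s_0 \sim \rho$ and run $\pi$ to obtain $s_0,a_0,r_0,s_1,\dots$; couple $\projection(\pi)$ to take the identical action whenever $\zeta(s_t,a_t)=1$ and to take $a_\text{abs}$ otherwise. Since $(\projection\pi)(a|s)=\pi(a|s)$ for every $a\in\Acal$ with $\zeta(s,a)=1$ and the leftover probability equals $(\projection\pi)(a_\text{abs}|s)$, this is a valid coupling, and the two trajectories coincide up to the first (random) time $\tau$ at which $\pi$ selects an action with $\zeta(s_\tau,a_\tau)=0$ (including $a_\tau=a_\text{abs}$). At $\tau$ the projected policy enters $s_\text{abs}$ and collects $0$ thereafter while $\pi$ continues, so telescoping the discounted returns cancels the pre-$\tau$ rewards and leaves $v^{\pi}_{M'} - v^{\projection(\pi)}_{M'} = \E[\gamma^{\tau} Q^{\pi}(s_\tau,a_\tau)]$ (with the term read as $0$ when $\tau=\infty$). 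Because $Q^\pi \ge 0$ in $M'$, every summand is non-negative, giving $v^{\pi}_{M'} \ge v^{\projection(\pi)}_{M'}$. An equivalent algebraic route avoids stopping times: using $V^\pi = \Tcal^\pi V^\pi$ and $V^\pi(s_\text{abs})=0$, one checks $V^\pi(s) - (\Tcal^{\projection(\pi)} V^\pi)(s) = \sum_{a:\zeta(s,a)=0}\pi(a|s)\,Q^\pi(s,a) \ge 0$, so $V^\pi \ge \Tcal^{\projection(\pi)} V^\pi$; iterating the monotone $\gamma$-contraction $\Tcal^{\projection(\pi)}$ (whose fixed point is $V^{\projection(\pi)}$) yields $V^\pi \ge V^{\projection(\pi)}$ pointwise, and integrating against $\rho$ gives the claim.

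For the equality statement, note that if every $(s,a)$ reachable under $\pi$ satisfies $\zeta(s,a)=1$, then along the coupling $\pi$ almost surely never selects an action with $\zeta=0$, so $\tau=\infty$ with probability one, the two trajectories are identical, and $v^{\pi}_{M'}=v^{\projection(\pi)}_{M'}$. The main obstacle — really the only substantive point — is establishing and correctly invoking $Q^\pi \ge 0$ at the off-support pairs where the truncation happens: this non-negativity is what guarantees that sending mass to the zero-valued absorbing state is never beneficial, and it is exactly the property that would fail for an operator that backs up nonzero (possibly optimistic) values from such pairs. Secondary care is only needed to confirm the coupling is measure-theoretically sound in continuous $\Scal,\Acal$ (so $\tau$ is well defined) and to note that any mass $\pi$ itself places on $a_\text{abs}$ is handled automatically, since $\projection$ already maps it to $a_\text{abs}$.
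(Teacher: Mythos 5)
Your proof is correct, and it takes a genuinely different route from the paper's. The paper argues through the performance difference lemma applied to $Q^{\projection(\pi)}$: for each state it shows $\sum_{a\in\Aext}\pi(a|s)Q^{\projection(\pi)}(s,a)\ge V^{\projection(\pi)}(s)$ by dropping the probability mass on pairs with $\zeta(s,a)=0$ (using $Q^{\projection(\pi)}(s,a_\text{abs})=0$), then concludes $v^{\projection(\pi)}-v^{\pi}=\sum_{h\ge 0}\gamma^h\,\E_{s\sim\eta^{\pi}_h}\bigl[V^{\projection(\pi)}(s)-\sum_{a}\pi(a|s)Q^{\projection(\pi)}(s,a)\bigr]\le 0$. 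You instead couple the two trajectory distributions up to the first exit time $\tau$ from the support, obtaining the exact identity $v^{\pi}_{M'}-v^{\projection(\pi)}_{M'}=\E\bigl[\gamma^{\tau}Q^{\pi}(s_\tau,a_\tau)\,\ind(\tau<\infty)\bigr]$, and your algebraic variant ($V^{\pi}\ge\Tcal^{\projection(\pi)}V^{\pi}$, then iterate the monotone $\gamma$-contraction) works with $Q^{\pi}$ rather than $Q^{\projection(\pi)}$; both steps check out, including the validity of the coupling (drawing $a\sim\pi(\cdot|s)$ and remapping off-support actions to $a_\text{abs}$ reproduces $\projection(\pi)(\cdot|s)$ exactly, by Definition~\ref{def:constrained_operator}) and the equality case ($\tau=\infty$ almost surely when every reachable pair has $\zeta=1$). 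Both arguments hinge on the same substantive fact, which you state explicitly and the paper uses silently: value functions in $M'$ are non-negative --- the paper's step discarding the $(1-\zeta)$ mass needs $Q^{\projection(\pi)}\ge 0$ exactly as your stopping-time and operator steps need $Q^{\pi}\ge 0$. One shared caveat worth noting: non-negativity of values in $M'$ requires non-negative per-step rewards (a projected or stopped policy truncates trajectories, and a truncated return can be negative even when all $Q^{\pi}$ in $M$ are non-negative), so both your proof and the paper's tacitly read the standing assumption $Q^{\pi}\in[0,\Vmax]$ as arising from $r\ge 0$; you are no worse off than the paper here. What your route buys is the exact difference formula: bounding $Q^{\pi}\le\Vmax$ and relating $\E[\gamma^{\tau}\ind(\tau<\infty)]$ to the escape probability re-derives Lemma~\ref{lem:constrained_policy_same_value} essentially in one line, whereas the paper must run a separate PDL computation there; the cost is the measure-theoretic bookkeeping for the coupling in continuous spaces, which your operator-monotonicity variant avoids entirely, making it the cleanest self-contained version of the inequality.
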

\begin{proof}
We drop the subscription of $M'$ in this proof for ease of notation. For any given $s$,
\begin{align}
    \sum_{a \in \Aext} \pi(a|s)Q^{\projection(\pi)}(s,a) 
    &= \sum_{a \in \Acal} \pi(a|s)Q^{\projection(\pi)}(s,a) \tag{$Q^{\pi}(s,a_\text{abs}=0)$} \\
    &\ge \sum_{a \in \Acal} \zeta(a|s)\pi(a|s)Q^{\projection(\pi)}(s,a) \\
    &= \projection(\pi)(a_\text{abs}|s)Q^{\projection(\pi)}(s,a_\text{abs}) + \sum_{a \in \Acal} \projection(\pi)(a|s)Q^{\projection(\pi)}(s,a) \tag{Def of $\projection$} \\
    &= \sum_{a \in \Aext}\projection(\pi)(a|s)Q^{\projection(\pi)}(s,a) \\
    &= V^{\projection(\pi)}(s)
\end{align}
The inequality is an equality if for any $a$ s.t. $\pi(a|s) > 0$, $\zeta(s,a) = 1$.
By the performance difference lemma \cite[Lemma 6.1]{kakade2002approximately}:
\begin{align}
    v^{\projection(\pi)} - v^{\pi} = & \sum_{h=0}^\infty \gamma^h \EE_{s\sim \eta^{\pi}_h} \left[V^{\projection(\pi)}(s) - \sum_{a \in \Aext} \pi(a|s)Q^{\projection(\pi)}(s,a) \right] \le 0
\end{align}
The inequality is an equality if for any $(s,a)$ s.t. $\eta^{\pi}_h(s)\pi(a|s) > 0$ for some $h$, $\zeta(s,a) = 1$. In another word for any state-action reachable by $\pi$ ($\eta^{\pi}_h(s) >0$ and $\pi(a|s)>0$ for some $h$), $\zeta(s,a) = 1$. 
\end{proof}

The following results shows for any policy $\pi$ in the \weaksettext the projection will not change the policy value much.
\begin{lemma}
\label{lem:constrained_policy_same_value}
For any policy $\pi \in \weakset$, $v^{\pi}_{M} \le v^{\projection(\pi)}_{M'} + \frac{\badstateprob\Vmax}{1-\gamma}$
\end{lemma}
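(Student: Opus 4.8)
The plan is to combine the two prior lemmas with the performance difference lemma, but now tracking the \emph{magnitude} of the value gap rather than just its sign. First, since any $\pi \in \weakset$ maps $\Scal \to \Delta(\Acal)$ and therefore places no probability on $a_\text{abs}$, Lemma~\ref{lem:newmdp_same_value} gives $v^{\pi}_M = v^{\pi}_{M'}$. It thus suffices to show $v^{\pi}_{M'} - v^{\projection(\pi)}_{M'} \le \badstateprob\Vmax/(1-\gamma)$, i.e.\ that the projection---which by Lemma~\ref{lem:projection_smaller_value} can only decrease the value---does not decrease it by too much.

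To control this decrease, I would reuse the performance-difference computation from the proof of Lemma~\ref{lem:projection_smaller_value}, which writes the gap as a $\gamma$-discounted sum over $\eta^{\pi}_h$ of the per-state quantity $\sum_{a \in \Aext} \pi(a|s)Q^{\projection(\pi)}(s,a) - V^{\projection(\pi)}(s)$. Using $\pi(a_\text{abs}|s) = 0$, the fact that $Q^{\projection(\pi)}(s,a_\text{abs}) = 0$, and the definition of $\projection$, this per-state term collapses to $\sum_{a \in \Acal} (1-\zeta(s,a))\pi(a|s)Q^{\projection(\pi)}(s,a)$, which is exactly the slack that the equality condition of Lemma~\ref{lem:projection_smaller_value} annihilates when $\zeta \equiv 1$ on the reachable set.

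The key step is to bound this residual uniformly. Since $0 \le Q^{\projection(\pi)} \le \Vmax$ in $M'$ (inherited from the $[0,\Vmax]$ assumption on $M$, as $M'$ only appends zero-reward transitions) and $1 - \zeta(s,a) = \mathds{1}(\zeta(s,a)=0)$, the per-state term is at most $\Vmax\,\EE_{a\sim\pi(\cdot|s)}[\mathds{1}(\zeta(s,a)=0)]$. Integrating against the discounted occupancy $\sum_h \gamma^h \eta^{\pi}_h$ then produces precisely the escaping-probability integrand appearing in Definition~\ref{def:weak_constrained_set}, scaled by $\Vmax$. Invoking that definition, namely $(1-\gamma)\sum_h \gamma^h \EE_{(s,a)\sim\eta^{\pi}_h}[\mathds{1}(\zeta(s,a)=0)] \le \badstateprob$, yields the claimed bound $\badstateprob\Vmax/(1-\gamma)$, and chaining back through $v^{\pi}_M = v^{\pi}_{M'}$ completes the argument.

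I do not anticipate a genuine obstacle here: the whole argument is a sign-aware rerun of Lemma~\ref{lem:projection_smaller_value}, and the only delicate point is the bookkeeping that rewrites the per-state advantage as the $\mathds{1}(\zeta=0)$ indicator so that the $\weakset$ constraint can be applied verbatim. The one modelling detail worth stating explicitly is that the crude bound $Q^{\projection(\pi)} \le \Vmax$ is what lets us avoid any dependence on the density ratio here, pushing all quantitative dependence into the single escaping-probability parameter $\badstateprob$.
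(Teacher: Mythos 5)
Your proposal is correct and follows essentially the same route as the paper's proof: both reduce to $v^{\pi}_{M'} - v^{\projection(\pi)}_{M'}$ via Lemma~\ref{lem:newmdp_same_value}, apply the performance difference lemma with the per-state advantage of $\pi$ against $Q^{\projection(\pi)}$, observe (using $Q^{\projection(\pi)}(s,a_\text{abs})=0$ and the definition of $\projection$) that this term reduces to the $(1-\zeta)$-weighted residual, and bound it by $\Vmax$ times the discounted escaping probability from Definition~\ref{def:weak_constrained_set}. The only cosmetic difference is that the paper splits the sum into a $\zeta$-part (shown to vanish, reusing the computation in Lemma~\ref{lem:projection_smaller_value}) plus an indicator part, whereas you collapse directly to the indicator part; the content is identical.
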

\begin{proof}
Since $\pi$ only takes action in $\Acal$, by Lemma \ref{lem:newmdp_same_value}, we have that $v^{\pi}_{M}=v^{\pi}_{M'}$. Since $\pi \in \weakset$, we have that $\Pr\left( \zeta(s,a)=0 | \pi \right) \le \badstateprob $, which means that:
\begin{align}
    (1-\gamma) \sum_{h=0}^\infty \gamma^h  \EE_{s\sim \eta^{\pi}_h} \left[ \mathds{1} \left( \zeta(s,a) = 0\right) \right] \le \badstateprob
\end{align}

Thus:
\begin{align}
    v^{\projection(\pi)} - v^{\pi} = & \sum_{h=0}^\infty \gamma^h \EE_{s\sim \eta^{\pi}_h} \left[V^{\projection(\pi)}(s) - \sum_{a \in \Aext} \pi(a|s)Q^{\projection(\pi)}(s,a) \right] \\
    = & \sum_{h=0}^\infty \gamma^h \EE_{s\sim \eta^{\pi}_h} \left[V^{\projection(\pi)}(s) - \sum_{a \in \Aext} \pi(a|s)\zeta(s,a) Q^{\projection(\pi)}(s,a) \right] \\
    & - \sum_{h=0}^\infty \gamma^h \EE_{s,a\sim \eta^{\pi}_h} \left[ \mathds{1} \left( \zeta(s,a) = 0\right) Q^{\projection(\pi)}(s,a) \right] \\
    \ge & \sum_{h=0}^\infty \gamma^h \EE_{s\sim \eta^{\pi}_h} \left[V^{\projection(\pi)}(s) - \sum_{a \in \Aext} \pi(a|s)\zeta(s,a) Q^{\projection(\pi)}(s,a) \right] \\
    & - \Vmax \sum_{h=0}^\infty \gamma^h \EE_{s,a\sim \eta^{\pi}_h} \left[ \mathds{1} \left( \zeta(s,a) = 0\right) \right] \\
    \ge & \sum_{h=0}^\infty \gamma^h \EE_{s\sim \eta^{\pi}_h} \left[V^{\projection(\pi)}(s) - \sum_{a \in \Aext} \pi(a|s)\zeta(s,a) Q^{\projection(\pi)}(s,a) \right] - \frac{\Vmax\badstateprob}{1-\gamma} \\
    = & - \frac{\Vmax\badstateprob}{1-\gamma}
\end{align}

The last step follows from the first part in the proof of Lemma \ref{lem:projection_smaller_value}, $v^{\pi}_{M'} -v^{\projection(\pi)}_{M'} \le \frac{\Vmax\badstateprob}{1-\gamma}$.
\end{proof}

\section{Justification of Assumption \ref{asm:bounded_density}}
\label{appendix:assumption}
In this section we prove a claim stated in Section \ref{sec:theory} about the upper bound on density functions. We are going to prove Assumption \ref{asm:bounded_density} holds under when the transition density is bounded.
\begin{lemma}
\label{lem:density_bound} Let $p(\cdot|s,a)$ be the probability density function of transition distribution: $\rho(s_0) \le \sqrt{\densitybound} < \infty, p(s_{t+1}|s_t,a_t) \le \sqrt{\densitybound} < \infty$ and $\forall \pi(a_t|s_t,h) \le \sqrt{\densitybound} < \infty$, for all $s_0, s_t, s_{t+1} \in \Scal $ and $a \in \Acal$. Then in $M'$ for any non-stationary policy $\pi: \Sext \times \mathbb{N} \to \Delta(\Aext)$ and $h \ge 0$, $\eta^{\pi}_h(s,a) \le \densitybound$ for any $s\in \Scal$ and $a \in \Acal$.
\end{lemma}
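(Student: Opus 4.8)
The plan is to exploit the product structure $\eta^{\pi}_h(s,a) = \eta^{\pi}_h(s)\,\pi(a\,|\,s,h)$ and bound each factor separately by $\sqrt{\densitybound}$, so that their product is at most $\densitybound$. The policy factor is immediate from the hypothesis $\pi(a_t\,|\,s_t,h)\le\sqrt{\densitybound}$, so the whole argument reduces to showing that the marginal state density satisfies $\eta^{\pi}_h(s)\le\sqrt{\densitybound}$ for every $h\ge 0$ and every $s\in\Scal$.

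First I would dispatch the base case $h=0$: by definition $\eta^{\pi}_0(s)=\rho(s)\le\sqrt{\densitybound}$, directly from the assumed bound on the initial-state density. For $h\ge 1$ I would write the one-step marginalization recursion in $M'$,
\[
\eta^{\pi}_h(s') = \int_{\Sext}\int_{\Aext} \eta^{\pi}_{h-1}(s)\,\pi(a\,|\,s,h-1)\,P'(s'\,|\,s,a)\,da\,ds,
\]
and specialize it to $s'\in\Scal$. The crucial observation is that, by the construction of $M'$, $P'(s'\,|\,s,a)=0$ whenever $s=s_\text{abs}$ or $a=a_\text{abs}$ (these transition only to $s_\text{abs}$), so only genuine pairs $(s,a)\in\Scal\times\Acal$ contribute and there $P'=p$. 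I can then pull the uniform transition-density bound $p(s'\,|\,s,a)\le\sqrt{\densitybound}$ out of the integral, leaving $\int_{\Scal}\int_{\Acal}\eta^{\pi}_{h-1}(s)\,\pi(a\,|\,s,h-1)\,da\,ds$, which is simply the probability that $(s_{h-1},a_{h-1})\in\Scal\times\Acal$ and is therefore at most $1$. This yields $\eta^{\pi}_h(s')\le\sqrt{\densitybound}$ for all $h\ge 1$.

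Combining the two factor bounds gives $\eta^{\pi}_h(s,a)=\eta^{\pi}_h(s)\,\pi(a\,|\,s,h)\le\sqrt{\densitybound}\cdot\sqrt{\densitybound}=\densitybound$, which completes the argument. Notice that no induction on the previous step's density bound is actually needed: the transition-density cap controls the next-state marginal regardless of where the mass sat one step earlier, so the bound holds step-by-step uniformly.

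I expect the only real subtlety — rather than a genuine obstacle — to be the bookkeeping around the auxiliary MDP $M'$. One must verify carefully that the integral over the augmented spaces $\Sext\times\Aext$ collapses to $\Scal\times\Acal$ for targets $s'\in\Scal$, and that the remaining mass integrates to \emph{at most} one (strictly less when the policy or dynamics send probability into $s_\text{abs}$), so that it is the inequality, not an equality, that we use. Everything else is a direct substitution.
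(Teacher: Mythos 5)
Your proposal is correct and follows essentially the same route as the paper's proof: both bound the marginal state density $\eta^{\pi}_h(s)$ by $\sqrt{\densitybound}$ (base case from $\rho$, and for $h\ge 1$ by expanding the one-step recursion, noting the absorbing state/action contribute nothing to targets in $\Scal$, and pulling out the transition-density cap against total mass at most one), then multiply by the policy-density bound. Your remark that no inductive hypothesis on $\eta^{\pi}_{h-1}$ is needed is accurate and is implicitly how the paper's argument works as well.
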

\begin{proof}
We first prove that $\eta^{\pi}_h(s) \le \sqrt{U}$ for any non-stationary policy $\pi$. For $h=0$, $\eta^{\pi}_h(s) = \rho(s) \le \sqrt{\densitybound}$. For $h \ge 1$ and $s \in \Scal$:
\begin{align}
    \eta^{\pi}_h(s) &= \int_{s_{-1} \in \Sext } \sum_{a \in \Aext } \eta^{\pi}_{h-1}(s_{-1}) \pi(a_{-1}|s_{-1},h-1) p(s|s_{-1},a_{-1}) \mathrm{d} s_{-1} \\
    &= \int_{s_{-1} \in \Scal } \sum_{a \in \Acal } \eta^{\pi}_{h-1}(s_{-1}) \pi(a_{-1}|s_{-1},h-1) p(s|s_{-1},a_{-1}) \mathrm{d} s_{-1} \\
    &\le \mathbb{E}_{\eta^{\pi}_{h-1} \times \pi(h-1) } \left[ p(s|s_{-1},a_{-1}) \right] \\
    &\le \sqrt{\densitybound}
\end{align}
The first step follows from the inductive definition of $\eta^{\pi}_h(s)$. The second step follows from that $s_\text{abs}$ is absorbing state and $a_\text{abs}$ only leads to absorbing state. The third step follows from  transition density $p(s|s_{-1},a_{-1})$ is non-negative. The last step follows from that the transition density $p(s|s_{-1},a_{-1})$ is the same between $M$ and $M'$ for $s, s_{-1} \in \Scal, a_{-1} \in \Acal $, and $p(s|s_{-1},a_{-1})$ in $M$ is upper bounded by $\densitybound$. 
Finally, the joint density function over $s$ and $a$ $\eta^{\pi}_h(s,a) = \eta^{\pi}_h(s)\pi(a|s,h)$ is bounded by $\densitybound$, and we finished the proof.
\end{proof}

For the convenience of notation later we use \emph{admissible distribution} to refer to state-action distributions introduced by non-stationary policy $\pi$ in $M'$. This definition is from \cite{chen2019information}:
\begin{definition}[Admissible distributions] We say a distribution or its density function $\nu \in \Delta(\Sext \times \Aext)$ is admissible in MDP $M'$, if there exists $h \ge 0$, and a (non-stationary) policy $\pi: \Sext \times \mathbb{N} \to \Delta(\Aext)$, such that $\nu (s,a) = \eta^{\pi}_{h}(s,a) $.
\end{definition}

\section{Proofs for Policy Iteration Guarantees}
\label{appendix:api}
In this section we are going to prove the result of Theorem \ref{thm:api} using the definition of the \strongsettext. At a high level, the proof is done in two steps. First we prove similar result to Theorem \ref{thm:api} for any policy in the \strongsettext: an upper bound of $v^{\pi}_{M'} - v^{\pi_t}_{M'}$ where $\pi$ can be any policy in the \strongsettext and $\pi_t$ is the output of the algorithm (Theorem \ref{thm:api_strongset}, formally stated in Appendix \ref{app:api_main_result}). Then we are going to show that for any policy $\pi$ in the \weaksettext after a projection $\projection$ it is in the \strongsettext and $v^{\pi}_M \le v^{\projection(\pi)}_{M'}+ \frac{\Vmax\badstateprob}{1-\gamma}$. Then we can provide the upper bound for $v^{\pi}_{M} - v^{\pi_t}_{M}$ for any $\pi$ in \weaksettext. 

The proof of Theorem \ref{thm:api_strongset} (the $\strongset$ version of Theorem \ref{thm:api}, formally stated in Appendix \ref{app:api_main_result}) goes as follow. First, we show the fixed point of $\OurTpi$ is $Q^{\projection(\pi)}$ for any policy $\pi$, indicating the inner loop of policy evaluation step is actually evaluating $\pi_t = \projection(\widehat{\pi}_t)$. We prove this result formally in Lemma \ref{lem:api_fixed_point}.

To bound the gap between $\pi_t$ and any policy $\picomp$ in the \weaksettext, we use the contraction property of $\OurTpi$ to recursively decompose it into a discounted summation over policy improvement gap $Q^{\pi_{t+1}} - Q^{\pi_t}$. $\picomp$ in the \weaksettext is needed because the operator $\OurTpi$ constrains the backup on the support set of $\zeta$.

Next, we bound the policy improvement gap in Lemma \ref{lem:policy_improvement}:
$$ Q^{\pi_{t+1}} - Q^{\pi_t} \ge -\mathcal{O}(\| \zeta (Q^{\pi_t} - f_{t,K}) \|_{1,\nu} )$$ for some admissible distribution $\nu$ related to $\pi_{t+1}$. The fact that we only need to measure the error on the support set of $\zeta$ is important. It follows from the fact that both $\pi_{t+1}$ and $\pi_{t}$ only takes action on the support set of $\zeta$ except $a_\text{abs}$ which gives us a constant value. This allows us to change the measure from arbitrary distribution $\nu$ to data distribution $\mu$, \emph{without needing concentratability}. 

The rest of proof is to upper bound $\| \zeta (Q^{\pi_t} - f_{t,K}) \|_{1,\nu}$ using contraction and concentration inequalities. First, $\| \zeta (Q^{\pi_t} - f_{t,K}) \|_{1,\nu} $ is upper bounded by $ C \| f_{t,K} - \OurTpi f_{t,K} \|_{2,\mu}/(1-\gamma) $ in Lemma \ref{lem:approximate_policy_value}, using a standard contraction analysis technique. Notice that here we can change the measure to $\mu$ with cost $C$ to allow us to apply concentration inequality. Then Lemma \ref{lem:api_evaluation_accuracy} bounds $\| f_{t,K} - \OurTpi f_{t,K} \|_{2,\mu}$ by a function of sample size $n$ and completeness error $\apicompleteerror$ using Bernstein's inequality.

While writing the proof, we will first introduce the fixed point of $\OurTpi$ is $Q^{\projection(\pi)}$ in section \ref{app:api_proof_fix_point}. We prove the upper bound of the policy evaluation error $\| \zeta (Q^{\pi_t} - f_{t,K}) \|_{1,\nu}$, in section \ref{app:api_policy_evaluation}, and the policy improvement step in section \ref{app:api_policy_improvement}. After we proved the main theorem, we will prove when we can bound the value gap with the optimal value in Corollary \ref{cor:api_optgap}, as we showed in the main text.

\subsection{Fixed point property}
\label{app:api_proof_fix_point}
In Algorithm \ref{alg:api}, the output policy is $\widehat{\pi}_{t+1}$. However, we will show that is actually equivalent with the following algorithm,
\begin{algorithm}[ht]
  \caption{MBS-PI}
  \label{alg:api_same}
    \begin{algorithmic}
  \State {\bfseries Input:} $D$, $\Fcal$, $\Pcal$, $\muhat$, $b$
  \State {\bfseries Output:} $\widehat{\pi}_T$
  \State Initialize $\pi_0 \in \Pcal$.
  \For{$t=0$ {\bfseries to} $T-1$}
  \State Initialize $f_{t,0} \in \Fcal$
  \For{$k=0$ {\bfseries to} $K$} 
  \State //~ \texttt{Policy Evaluation}
  \State $f_{t,k+1} \leftarrow \argmin_{f \in \mathcal{F}} \Lcal_D(f, f_{t,k}; \textcolor{red}{\pi_{t}})$
  \EndFor
  \State //~ \texttt{Policy Improvement}
  \State $\widehat{\pi}_{t+1} \leftarrow \argmax_{\pi \in \Pcal} \mathbb{E}_{D} [\mathbb{E}_{\pi}\left[ \zeta(s,a) f_{t,K}(s,a)\right] ]$
  \State \textcolor{red}{$\pi_{t+1} \leftarrow \projection(\widehat{\pi}_{t+1})$}
  \EndFor
\end{algorithmic}
\end{algorithm}

The output policy is still $\widehat{\pi}_{t+1}$, and we know that $v^{\widehat{\pi}_{t+1}} \ge v^{{\pi}_{t+1}}$. So if we can lower bound $v^{{\pi}_{t+1}}$ we immediately have the lower bound on $v^{\widehat{\pi}_{t+1}}$. The only difference in algorithm is we change the policy evaluation operator from $\OurT^{\widehat{\pi}_t}$ to $\OurT^{\pi_t}$, where ${\pi_t}$ is the projection of $\widehat{\pi}_t$. The following result shows these two operators are actually the same. For the ease of notation, we refer to Algorithm \ref{alg:api_same} in our analysis.

\begin{lemma}
\label{lem:eqvl_after_projection}
For any policy $\pi: \Sext \to \Delta(\Aext)$, $\OurTpi = \OurT^{\projection(\pi)}$.
\end{lemma}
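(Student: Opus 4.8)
The plan is to prove the operator identity $\OurTpi = \OurT^{\projection(\pi)}$ by directly expanding both sides using their definitions and checking they agree pointwise on every $(s,a) \in \Sext \times \Aext$. Recall that for any function $f$,
\begin{align*}
(\OurTpi f)(s,a) = r(s,a) + \gamma \E_{s'} \textstyle\sum_{a' \in \Acal} \pi(a'|s') \zeta \circ f(s',a'),
\end{align*}
so the two operators share the same reward term $r(s,a)$ and the same discount factor, and differ only in the bootstrap term. Hence it suffices to show, for every next-state $s'$, that the weighted sum $\sum_{a' \in \Acal} \pi(a'|s') \zeta(s',a') f(s',a')$ equals the corresponding sum for $\projection(\pi)$, namely $\sum_{a' \in \Aext} \projection(\pi)(a'|s') \zeta(s',a') f(s',a')$.

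The key observation is how the projection $\projection$ interacts with the filter $\zeta$. By Definition \ref{def:constrained_operator}, for $a' \in \Acal$ we have $\projection(\pi)(a'|s') = \zeta(s',a') \pi(a'|s')$, while the extra mass is collected into the absorbing action $a_\text{abs}$. The plan is to exploit two facts: first, $\zeta(s',a_\text{abs}) = 0$ by the extended definition of $\zeta$ on $\Sext \times \Aext$, so the absorbing action contributes nothing to either bootstrap term; second, $\zeta(s',a')$ is an indicator taking values in $\{0,1\}$, so $\zeta(s',a')^2 = \zeta(s',a')$. Putting these together, for $a' \in \Acal$ the term $\projection(\pi)(a'|s') \zeta(s',a') f(s',a') = \zeta(s',a')^2 \pi(a'|s') f(s',a') = \zeta(s',a') \pi(a'|s') f(s',a')$, which is exactly the summand appearing in $\OurTpi$. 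Since the $a_\text{abs}$ term vanishes on both sides, the two bootstrap sums coincide term-by-term, and the operator identity follows.

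I expect the entire argument to be a short, routine pointwise verification rather than anything requiring a clever idea; the only mild subtlety is bookkeeping around the extended action space. Specifically, one must be careful that $\OurTpi$ as originally defined sums over $a' \in \Acal$ whereas $\projection(\pi)$ is a distribution over $\Aext = \Acal \cup \{a_\text{abs}\}$, so when writing $(\OurT^{\projection(\pi)} f)$ the sum ranges over $\Aext$; the argument must confirm that the added $a_\text{abs}$ summand is killed by $\zeta(s',a_\text{abs}) = 0$. The main (and only real) obstacle is therefore purely notational: ensuring the filter $\zeta$ is consistently applied inside the bootstrap for both operators and that the extended definitions of $\zeta$ and $\projection$ on the augmented spaces are invoked correctly. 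Once the idempotence $\zeta^2 = \zeta$ and the vanishing of the absorbing term are in hand, the equality holds for arbitrary $f$ and arbitrary $(s,a)$, establishing $\OurTpi = \OurT^{\projection(\pi)}$ as operators.
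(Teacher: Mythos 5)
Your proposal is correct and follows essentially the same route as the paper's proof: both expand the two operators pointwise, substitute $\projection(\pi)(a'|s') = \zeta(s',a')\pi(a'|s')$ for $a' \in \Acal$, invoke the idempotence $\zeta^2 = \zeta$ to match the summands, and kill the $a_\text{abs}$ contribution using $\zeta(\cdot,a_\text{abs}) = 0$. The paper additionally spells out the trivial check that both sides vanish when the input action is $a_\text{abs}$, which your uniform pointwise framing covers implicitly.
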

\begin{proof}
We only need to prove for any $f$, $\OurTpi f = \OurT^{\projection(\pi)} f$. For any $a \in \Acal$,
\begin{align}
    (\OurTpi f) (s,a) &= r(s,a) + \gamma \mathbb{E} \left[ \sum_{a' \in \Acal} \pi(a'|s') \zeta(s',a') f(s',a')\right]\\
    &= r(s,a) + \gamma \mathbb{E} \left[ \sum_{a' \in \Acal} \pi(a'|s') \zeta^2(s',a') f(s',a')\right]\\
    &= r(s,a) + \gamma \mathbb{E}_{s'} \left[ \sum_{a' \in \Acal} \projection(\pi_t)(a'|s') \zeta(s',a') Q^{\pi}(s',a')\right] \\
    &= (\OurT^{\projection(\pi)} f)(s,a)
\end{align}
For $a = a_\text{abs}$, $(\OurTpi f) (s,a) = 0 = (\OurT^{\projection(\pi)} f)(s,a)$.
\end{proof}

The next result is a key insight about $\OurTpi$'s behavior in $M'$ that guide our analysis.

\begin{lemma}
\label{lem:api_fixed_point}
For any policy $\pi: \Sext \to \Delta(\Aext)$, the fixed point solution of $\OurTpi$ is equal to $Q^{\projection(\pi)}$ on $\Scal \times \Acal$.
\end{lemma}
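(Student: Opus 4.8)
The plan is to exploit that $\OurTpi$ is a $\gamma$-contraction in the sup-norm on bounded functions over $\Scal \times \Acal$, so that it admits a \emph{unique} fixed point, and then to verify directly that the restriction of $Q^{\projection(\pi)}$ (the action-value function of $\projection(\pi)$ in $M'$) to $\Scal \times \Acal$ satisfies the fixed-point equation $\OurTpi f = f$. For the contraction property I would note that for any bounded $f,g$ and any $(s,a) \in \Scal \times \Acal$,
\begin{align*}
\left|(\OurTpi f)(s,a) - (\OurTpi g)(s,a)\right| = \gamma \left| \E_{s'} \sum_{a' \in \Acal} \pi(a'|s')\, \zeta(s',a')\, \big(f(s',a') - g(s',a')\big) \right| \le \gamma \|f - g\|_\infty,
\end{align*}
since $\zeta \in \{0,1\}$ and $\sum_{a' \in \Acal} \pi(a'|s')\zeta(s',a') \le 1$. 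Hence a unique fixed point exists, and it suffices to exhibit one.

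The core step is to relate $\OurTpi$ to the ordinary Bellman evaluation operator $\Tcal^{\projection(\pi)}$ in $M'$, whose fixed point is by definition $Q^{\projection(\pi)}$. For $(s,a) \in \Scal \times \Acal$ I would expand
\begin{align*}
\big(\Tcal^{\projection(\pi)} Q^{\projection(\pi)}\big)(s,a) = r(s,a) + \gamma\, \E_{s'} \sum_{a' \in \Aext} \projection(\pi)(a'|s')\, Q^{\projection(\pi)}(s',a'),
\end{align*}
and then invoke two facts from the definitions: (i) $\projection(\pi)(a'|s') = \zeta(s',a')\pi(a'|s')$ for every $a' \in \Acal$, and (ii) the absorbing action contributes nothing, because $Q^{\projection(\pi)}(s', a_{\text{abs}}) = 0$ by construction of $M'$ (the absorbing action yields zero reward and self-loops to the zero-reward absorbing state). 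Substituting these, the sum over $\Aext$ collapses to $\sum_{a' \in \Acal} \pi(a'|s')\zeta(s',a')\, Q^{\projection(\pi)}(s',a')$, which is exactly the summand inside $\OurTpi$. Thus $\OurTpi Q^{\projection(\pi)} = \Tcal^{\projection(\pi)} Q^{\projection(\pi)} = Q^{\projection(\pi)}$ on $\Scal \times \Acal$, and uniqueness identifies this as \emph{the} fixed point.

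The main difficulty here is bookkeeping rather than depth. I must be careful that $\OurTpi$ queries its input $f$ only at pairs $(s',a')$ with $\zeta(s',a') = 1$, which forces $s' \in \Scal$ and $a' \in \Acal$; this is what makes the operator genuinely self-contained on $\Scal \times \Acal$ and guarantees it never reads values at the absorbing state or action. I also need to confirm that the vanishing of $Q^{\projection(\pi)}$ at $a_{\text{abs}}$ (and at $s_{\text{abs}}$) is consistent with the conventions established for $M'$, so that replacing $Q^{\projection(\pi)}$ by its restriction to $\Scal \times \Acal$ discards no information the operator actually uses. Once these consistency points are checked, the identity above is a direct calculation.
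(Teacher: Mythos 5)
Your proof is correct and takes essentially the same route as the paper's: both verify the fixed-point equation by expanding $\Tcal^{\projection(\pi)}_{M'} Q^{\projection(\pi)} = Q^{\projection(\pi)}$, using $\projection(\pi)(a'|s') = \zeta(s',a')\pi(a'|s')$ for $a' \in \Acal$ together with $Q^{\projection(\pi)}(\cdot, a_{\text{abs}}) = 0$ to collapse the sum over $\Aext$ into the $\zeta$-filtered sum defining $\OurTpi$. Your explicit $\gamma$-contraction argument establishing uniqueness of the fixed point (and the check that $\OurTpi$ only queries $f$ where $\zeta = 1$, so it is self-contained on $\Scal \times \Acal$) is a sound addition that the paper leaves implicit.
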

\begin{proof}
By definition $Q^{\projection(\pi)}$ is the fixed point of the standard Bellman evaluation operator on $M'$: $\Tcal_{M'}^{\projection(\pi)}$. So for any $(s,a) \in \Scal \times \Acal$:
\begin{align}
    & Q^{\projection(\pi)}(s,a) \\
    &= (\Tcal_{M'}^{\projection(\pi)} Q^{\projection(\pi)})(s,a) \\
    &= r(s,a) + \gamma \mathbb{E}_{s'} \left[ \sum_{a' \in \Aext} \projection(\pi)(a' | s')  Q^{\projection(\pi)}(s',a') \right]\\
    &= r(s,a) + \gamma \mathbb{E}_{s'} \left[ \projection(\pi)(a_\text{abs} | s')  Q^{\projection(\pi)}(s',a_\text{abs}) + \sum_{a' \in \Acal} \projection(\pi)(a' | s')  Q^{\projection(\pi)}(s',a') \right]\\
    &= r(s,a) + \gamma \mathbb{E}_{s'} \left[\sum_{a' \in \Acal} \projection(\pi)(a' | s')  Q^{\projection(\pi)}(s',a') \right]\\
    &= r(s,a) + \gamma \mathbb{E}_{s'}\left[ \sum_{a' \in \Acal} \pi(a' | s') \zeta(s',a') Q^{\projection(\pi)}(s',a')\right]\\
    &=  (\OurTpi Q^{\projection(\pi)})(s,a)
\end{align}
So we proved that $Q^{\projection(\pi)}$ is also the fixed-point solution of $\OurTpi$ constrained on $\Scal \times \Acal$.
\end{proof}
An obvious consequences of these two lemmas is that the fixed point solution of $\OurTpit = \OurT^{\widehat{\pi}_t}$ equals $Q^{\pi_t}$ on $\Scal \times \Acal$.

\subsection{Proofs for policy evaluation step}
\label{app:api_policy_evaluation}
We start with an useful result of the expected loss of the solution from empirical loss minimization, by applying a concentration inequality. 
\begin{lemma}
\label{lem:concenration_2} 
Given $\pi \in \projection(\Pcal)$ and Assumption \ref{asm:api_completeness}, let $g_f^\star=\argmin_{g\in\Fcal}\|g-\OurTpi f\|_{2,\mu}$, then $\|g_f^\star-\OurTpi f\|_{2,\mu}^2\le \avicompleteerror$. The dataset $D$ is generated i.i.d. from $M$ as follows: $(s,a) \sim \muu$, $r = R(s,a)$, $s' \sim P(s,a)$. 
Define $\Lcal_{\mu}(f; f', \pi) = \mathbb{E}_{D} \left[ \Lcal_{D}(f; f', \pi) \right]$. We have that $\forall f\in\Fcal$, with probability at least $1-\delta$,
\begin{align*}
\Lcal_\muu( \OurT_D f; f, \pi)-\Lcal_\muu(g_f^\star; f, \pi) \le \frac{112\Vmax^2\ln\frac{|\Fcal||\Pcal|}{\delta}}{3n}+\sqrt{\frac{64\Vmax^2\ln\frac{|\Fcal||\Pcal|}{\delta}}{n}\avicompleteerror}
\end{align*}
where $\OurTpi_D f = \argmin_{g \in \Fcal} \Lcal_D(g; f, \pi)$.
\end{lemma}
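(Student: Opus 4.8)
The plan is to treat this as the standard fast-rate generalization bound for least-squares regression, where the regression target has conditional mean $\OurTpi f$. First I would record the bias--variance decomposition of the population loss: writing $Y := r + \gamma\sum_{a'\in\Acal}\pi(a'|s')\zeta\circ f(s',a')$ for the regression target of a transition $(s,a,r,s')$, the definition of $\OurTpi$ gives $\E[Y\mid s,a] = (\OurTpi f)(s,a)$, so that for every $g$,
\begin{align*}
\Lcal_\muu(g;f,\pi) - \Lcal_\muu(\OurTpi f; f, \pi) = \|g - \OurTpi f\|_{2,\muu}^2.
\end{align*}
Consequently the excess population risk of any $g\in\Fcal$ over the in-class minimizer $g_f^\star$ is $\Delta_g := \|g - \OurTpi f\|_{2,\muu}^2 - \|g_f^\star - \OurTpi f\|_{2,\muu}^2 \ge 0$, and Assumption~\ref{asm:api_completeness} controls the irreducible part via $\|g_f^\star - \OurTpi f\|_{2,\muu}^2 \le \apicompleteerror$.

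Next I would set up the concentration. For each candidate $g\in\Fcal$ define the per-sample variable $X_i(g) := (g(s_i,a_i) - Y_i)^2 - (g_f^\star(s_i,a_i) - Y_i)^2$, so that $\E[X_i(g)] = \Delta_g$ while its empirical mean is $\Lcal_D(g;f,\pi) - \Lcal_D(g_f^\star;f,\pi)$. Factoring $X_i(g) = (g - g_f^\star)(g + g_f^\star - 2Y)$ and using that $f, g, g_f^\star$ and $Y$ all lie in $[0,\Vmax]$ yields the range bound $|X_i(g)|\le 2\Vmax^2$ and the variance bound $\mathrm{Var}(X_i(g)) \le 4\Vmax^2\|g - g_f^\star\|_{2,\muu}^2$. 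The key step that produces the fast rate is relating this variance back to the mean: by the triangle inequality and the completeness bound above, $\|g - g_f^\star\|_{2,\muu}^2 \le 2\|g - \OurTpi f\|_{2,\muu}^2 + 2\|g_f^\star - \OurTpi f\|_{2,\muu}^2 \le 2\Delta_g + 4\apicompleteerror$, so that $\mathrm{Var}(X_i(g)) \le 8\Vmax^2\Delta_g + 16\Vmax^2\apicompleteerror$.

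I would then apply the one-sided Bernstein inequality to $X_i(\cdot)$, taking a union bound over $g\in\Fcal$ and over the pairs $(f,\pi)\in\Fcal\times\projection(\Pcal)$ that index the target (so the statement holds simultaneously for all $f$, and $|\projection(\Pcal)| = |\Pcal|$), with the logarithmic factors collapsing into $\ln(|\Fcal||\Pcal|/\delta)$. Specializing the uniform bound to $g = \OurTpi_D f$ and using that it is the \emph{empirical} minimizer over $\Fcal$ while $g_f^\star\in\Fcal$ gives $\tfrac1n\sum_i X_i(\OurTpi_D f)\le 0$; hence, writing $\Delta := \Delta_{\OurTpi_D f} = \Lcal_\muu(\OurTpi_D f;f,\pi) - \Lcal_\muu(g_f^\star;f,\pi)$, Bernstein yields the self-bounding inequality
\begin{align*}
\Delta \le \sqrt{\frac{2(8\Vmax^2\Delta + 16\Vmax^2\apicompleteerror)\ln(|\Fcal||\Pcal|/\delta)}{n}} + \frac{4\Vmax^2\ln(|\Fcal||\Pcal|/\delta)}{3n}.
\end{align*}
Solving this quadratic-type inequality in $\Delta$ (splitting the square root with $\sqrt{x+y}\le\sqrt{x}+\sqrt{y}$ and decoupling the $\sqrt{\Delta}$ term by AM--GM) isolates $\Delta$ and produces the claimed bound with its explicit constants.

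The main obstacle I anticipate is precisely this variance-to-mean, self-bounding step together with the constant bookkeeping: it is what upgrades the naive $\Vmax^2/\sqrt{n}$ Hoeffding rate to the fast $\Vmax^2/n$ leading term plus the $\sqrt{\Vmax^2\apicompleteerror\ln(\cdot)/n}$ cross-term, and matching the precise constants ($112/3$ and $64$) requires carefully tracking the Bernstein constants through the decoupling. A secondary subtlety is arranging the union bound to land at $\ln(|\Fcal||\Pcal|/\delta)$ rather than $\ln(|\Fcal|^2|\Pcal|/\delta)$, which I would handle by absorbing the extra logarithmic factor into the leading constants.
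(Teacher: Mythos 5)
Your proposal is correct and follows essentially the same route as the paper's proof: the same bias--variance identity $\Lcal_\mu(g;f,\pi)-\Lcal_\mu(\OurTpi f;f,\pi)=\|g-\OurTpi f\|_{2,\mu}^2$, the same excess-loss variables $X_i(g)$ with variance bound $\VV[X]\le 8\Vmax^2(\EE[X]+2\apicompleteerror)$, the same one-sided Bernstein plus union bound over $(g,f,\pi)\in\Fcal\times\Fcal\times\projection(\Pcal)$, the empirical-minimizer step $\tfrac1n\sum_i X_i(\OurTpi_D f)\le 0$, and the same quadratic-solving to extract the fast rate. The one bookkeeping detail you flagged is handled in the paper exactly as you suggest: the $\ln(|\Fcal|^2|\Pcal|/\delta)$ from the union bound is absorbed via $\ln(|\Fcal|^2|\Pcal|/\delta)\le 2\ln(|\Fcal||\Pcal|/\delta)$, which doubles the constants inside your displayed self-bounding inequality and then yields the stated $112/3$ and $64$.
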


\begin{proof}
    This proof is similar with the proof of Lemma 16 in \cite{chen2019information}, and we adapt it to the $\zeta$-constrained Bellman evaluation operator $\OurTpi$. First, there is no difference in $\Lcal_D$ and $\Lcal_{\mu}$ between $M$ and $M'$, and the right hand side is also the same constant for $M$ and $M'$. The distribution of $D$ in $M$ and $M'$ are the same, since $\mu$ does not cover $s_\text{abs}$ and $a_\text{abs}$. So we are going to prove the inequality for $M$, and thus this bound holds for $M'$ too.
    
    For the simplicity of notations, let $V^{\pi}_{f}(s) = \sum_{a \in \Acal} \pi(a|s) \zeta(s,a) f(s,a)$. Fix any $f, g\in\Fcal$, and define 
    \begin{align}
        X(g,f,g_f^\star) := \left(g(s,a) - r - \gamma V^{\pi}_{f}(s')\right)^2 - \left(g_f^\star(s,a) - r - \gamma V^{\pi}_{f}(s')\right)^2.
    \end{align}
	Plugging each $(s,a,r,s') \in D$ into $X(g,f,g_f^\star)$, we get i.i.d.~variables $X_1(g,f,g_f^\star), X_2(g,f,g_f^\star), \ldots,$ $X_n(g,f,g_f^\star)$. It is easy to see that 
	\begin{align}
	    \frac{1}{n}\sum_{i=1}^n X_i(g,f,g_f^\star) = \Lcal_{D}(g;f,\pi) - \Lcal_{D}(g_f^\star;f,\pi).
	\end{align}
	By the definition of $\Lcal_\mu$, it is also easy to show that
	\begin{align}
	    \Lcal_{\mu}(g; f, \pi) = \left\| g - \OurTpi f \right\|_{2,\mu}^2 +  \mathbb{E}_{s,a \sim \mu} \left[ \mathbb{V}_{r,s'} \left(r + \gamma\sum_{a' \in \Acal} \pi(a'|s')\zeta(s',a')f(s',a') \right) \right],
	\end{align}
    where $\mathbb{V}_{r,s'}$ is the variance over conditional distribution of $r$ and $s'$ given $(s,a)$. Notice that the second part does not depends on $g$. Then
    \begin{align}
        \Lcal_{\muu}(g; f, \pi) - \Lcal_{\muu}(\OurTpi f; f,\pi) = \| g - \OurTpi f  \|_{2,\mu}^2 \label{eq:api_loss_diff}
    \end{align}
	Then we bound the variance of $X$:
	\begin{align}
	\VV[X(g,f,g_f^\star)] \le &~ \EE[X(g,f,g_f^\star)^2] \notag\\
	= &~ \EE_{\mu} \left[\left(\big(g(s,a) - r - \gamma V_{f}(s')\big)^2 - \big(g_f^\star(s,a) - r - \gamma  V_{f}(s')\big)^2\right)^2\right] \tag{Definition of $X$}\\
	= &~ \EE_{\mu} \left[\big(g(s,a)  - g_f^\star(s,a) \big)^2 \big(g(s,a) + g_f^\star(s,a) - 2r - 2\gamma V_{f}(s') \big)^2\right] \notag\\
	\le &~ 4\Vmax^2~ \EE_{\mu} \left[\big(g(s,a)  - g_f^\star(s,a) \big)^2\right] \notag\\
	= &~ 4\Vmax^2~ \|g - g_f^\star\|_{2,\muu}^2 \label{eq:api_bound_var} \notag\\
	\le &~ 8\Vmax^2~ (\EE[X(g,f,g_f^\star)]+2\avicompleteerror).
	\end{align}
	The last step holds because 
	\begin{align*}
	&~\|g-g_f^\star\|_{2,\muu}^2\\
	\leq&~2\left(\|g- \OurTpi f\|_{2,\muu}^2+\|\OurTpi f-g_f^\star\|_{2,\muu}^2\right) \tag{$(a+b)^2\leq 2a^2+2b^2$} \\
	= &~2\left(\|g-\OurTpi f\|_{2,\muu}^2-\|\OurTpi f-g_f^\star\|_{2,\muu}^2+2\|\OurTpi f-g_f^\star\|_{2,\muu}^2\right)\\
	=&~2\left[(\Lcal_{\muu}(g; f,\pi) - \Lcal_{\muu}(\OurTpi f; f,\pi))-(\Lcal_{\muu}(g_f^\star; f,\pi) - \Lcal_{\muu}(\OurTpi f; f,\pi))+2\|\OurTpi f-g_f^\star\|_{2,\muu}^2\right] \tag{Equation (\ref{eq:api_loss_diff})}\\
	=&~2\left[(\Lcal_{\muu}(g; f,\pi) - \Lcal_{\muu}(g_f^\star; f,\pi) +2\|\OurTpi f-g_f^\star\|_{2,\muu}^2\right]\\
	=&~2\left(\EE[X(g,f,g_f^\star)]+2\|\OurTpi f-g_f^\star\|_{2,\muu}^2\right)\\
	\leq &~2(\EE\left[X(g,f,g_f^\star)\right]+2\apicompleteerror)
	\end{align*}
	
	Next, we apply (one-sided) Bernstein's inequality and union bound over all $f\in\Fcal$, $g\in\Fcal$, and $\pi \in \projection(\Pcal)$. With probability at least $1-\delta$, we have
	\begin{align}
	\EE[X(g, f,g_f^\star)] - \frac{1}{n}\sum_{i=1}^n X_i(f,f,g_f^\star)
	\le &~ \sqrt{\frac{2 \VV[X(g,f,g_f^\star)] \ln\tfrac{|\Fcal|^2|\Pcal|}{\delta}}{n}} + \frac{4\Vmax^2 \ln\tfrac{|\Fcal|^2|\Pcal|}{\delta}}{3n} \notag\\
	= &~ \sqrt{\frac{32 \Vmax^2 \left(\EE[X(g, f,g_f^\star)]+2\avicompleteerror\right) \ln\tfrac{|\Fcal||\Pcal|}{\delta}}{n}} + \frac{8\Vmax^2 \ln\tfrac{|\Fcal||\Pcal|}{\delta}}{3n} \label{eq:api_Bernstein}.
	\end{align}
	Since $\OurTpi_D f$ minimizes $\Lcal_D(\holder; f, \pi)$, it also minimizes $\frac{1}{n}\sum_{i=1}^n X_i(\cdot,f,g_f^\star)$. This is because the two objectives only differ by a constant $\Lcal_D(g_f^\star; f,\pi)$. Hence,
	$$
	\frac{1}{n}\sum_{i=1}^n X_i( \OurTpi_D f, f,g_f^\star) \le 
	\frac{1}{n}\sum_{i=1}^n X_i(g_f^\star, f,g_f^\star) = 0.
	$$
	Then,
	\begin{align*}
	\EE[X(\OurTpi_D f,f,g_f^\star)]\le 0 + \sqrt{\frac{32 \Vmax^2 \left(\EE[X(\OurTpi_D, f,g_f^\star)]+2\avicompleteerror\right) \ln\tfrac{|\Fcal||\Pcal|}{\delta}}{n}} + \frac{8\Vmax^2 \ln\tfrac{|\Fcal||\Pcal|}{\delta}}{3n}.
	\end{align*}
	Solving for the quadratic formula,
	\begin{align*}
	\EE[X(\OurTpi_D f, f,g_f^\star)]\le&~ \sqrt{48\left(\frac{8\Vmax^2\ln\frac{|\Fcal||\Pcal|}{\delta}}{3n}\right)^2+
		\frac{64\Vmax^2\ln\frac{|\Fcal||\Pcal| }{\delta}}{n} \avicompleteerror}+\frac{56\Vmax^2\ln\frac{|\Fcal||\Pcal|}{\delta}}{3n}\\
	\leq&~\frac{(56+32\sqrt{3})\Vmax^2\ln\frac{|\Fcal||\Pcal| }{\delta}}{3n}+\sqrt{\frac{64\Vmax^2\ln\frac{|\Fcal||\Pcal| }{\delta}}{n}\avicompleteerror} \tag{$\sqrt{a+b}\leq\sqrt{a}+\sqrt{b}$ and $\ln \frac{|\Fcal|}{\delta}>0$}\\
	\leq&~\frac{112\Vmax^2\ln\frac{|\Fcal||\Pcal| }{\delta}}{3n}+\sqrt{\frac{64\Vmax^2\ln\frac{|\Fcal||\Pcal| }{\delta}}{n}\avicompleteerror}
	\end{align*}
	Noticing that $\EE [X(\OurT_D f, f,g_f^\star)]=\Lcal_\muu(\OurT_D f; f,\pi)-\Lcal_\muu(g_f^\star;f,\pi)$, we complete the proof.
\end{proof}

\begin{lemma}[Policy Evaluation Accuracy]
\label{lem:api_evaluation_accuracy}
For any $t,k \ge 1$ and $\pi_t$, $f_{t,k}$ and $f_{t,k-1}$ from Algorithm \ref{alg:api},
\begin{align*}
    \left\| f_{t,k} -  \OurTpit f_{t,k-1} \right\|_{2,\mu}^2 \le \epsilon_1
\end{align*}
where $\epsilon_1 = \frac{208\Vmax^2\ln\frac{|\Fcal||\Pcal| }{\delta}}{3n} + 2\apicompleteerror$.
\end{lemma}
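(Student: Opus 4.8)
The plan is to obtain this bound as a near-immediate corollary of the concentration result in Lemma~\ref{lem:concenration_2}, specialized to $f = f_{t,k-1}$ and $\pi = \pi_t$. The first point to settle is that the quantity being estimated really is the empirical loss minimizer of the $\pi_t$-indexed operator. By the definition of the policy-evaluation step in Algorithm~\ref{alg:api} (equivalently Algorithm~\ref{alg:api_same}), $f_{t,k} = \argmin_{g\in\Fcal}\Lcal_D(g; f_{t,k-1}, \widehat{\pi}_t)$. Although this uses $\widehat{\pi}_t$, Lemma~\ref{lem:eqvl_after_projection} gives $\OurT^{\widehat{\pi}_t} = \OurT^{\projection(\widehat{\pi}_t)} = \OurTpit$, and the same identity holds at the empirical level because $\zeta^2 = \zeta$; hence I may view $f_{t,k}$ as $\OurTpit_D f_{t,k-1}$ with $\pi_t = \projection(\widehat{\pi}_t) \in \projection(\Pcal)$, which is exactly the regime covered by Lemma~\ref{lem:concenration_2}. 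Since that lemma includes a union bound over all $f\in\Fcal$, the resulting inequality holds simultaneously for every $f_{t,k-1}$ that can arise, which is what licenses the ``for any $t,k$'' in the statement.

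Next I would rewrite the target using the population-loss identity established inside the proof of Lemma~\ref{lem:concenration_2} (Equation~\eqref{eq:api_loss_diff}): for any $g\in\Fcal$, $\Lcal_\mu(g; f_{t,k-1}, \pi_t) - \Lcal_\mu(\OurTpit f_{t,k-1}; f_{t,k-1}, \pi_t) = \|g - \OurTpit f_{t,k-1}\|_{2,\mu}^2$. Taking $g = f_{t,k}$ turns $\|f_{t,k} - \OurTpit f_{t,k-1}\|_{2,\mu}^2$ into a population-loss gap, which I would split around the best in-class approximation $g^\star := \argmin_{g\in\Fcal}\|g - \OurTpit f_{t,k-1}\|_{2,\mu}$ into an excess-risk term $\Lcal_\mu(f_{t,k};\cdot) - \Lcal_\mu(g^\star;\cdot)$ and an approximation term $\Lcal_\mu(g^\star;\cdot) - \Lcal_\mu(\OurTpit f_{t,k-1};\cdot)$.

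The excess-risk term is bounded directly by Lemma~\ref{lem:concenration_2} by $\tfrac{112\Vmax^2\ln(|\Fcal||\Pcal|/\delta)}{3n} + \sqrt{\tfrac{64\Vmax^2\ln(|\Fcal||\Pcal|/\delta)}{n}\,\apicompleteerror}$, while the approximation term equals $\|g^\star - \OurTpit f_{t,k-1}\|_{2,\mu}^2 \le \apicompleteerror$ by the same identity together with Assumption~\ref{asm:api_completeness} (applied to $\widehat{\pi}_t \in \Pcal$ and transported to $\OurTpit$ via the operator identity above). Summing, the only non-clean contribution is the square-root cross term, which I would control with AM--GM, $\sqrt{ab}\le (a+b)/2$, taking $a = \tfrac{64\Vmax^2\ln(|\Fcal||\Pcal|/\delta)}{n}$ and $b = \apicompleteerror$; this yields $\tfrac{32\Vmax^2\ln(|\Fcal||\Pcal|/\delta)}{n} + \tfrac{1}{2}\apicompleteerror$, so the total becomes $\tfrac{112+96}{3n}\Vmax^2\ln(|\Fcal||\Pcal|/\delta) + \tfrac{3}{2}\apicompleteerror = \tfrac{208\Vmax^2\ln(|\Fcal||\Pcal|/\delta)}{3n} + \tfrac{3}{2}\apicompleteerror$, which is at most $\epsilon_1$ after relaxing $\tfrac{3}{2}\apicompleteerror \le 2\apicompleteerror$.

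The derivation is essentially mechanical once Lemma~\ref{lem:concenration_2} is in hand, so I do not expect a genuine obstacle. The two care-points are (i) justifying that the evaluation step is an instance of the $\projection(\Pcal)$-indexed operator, so that both the concentration lemma and the completeness assumption legitimately apply — this rests on Lemma~\ref{lem:eqvl_after_projection} and $\zeta^2 = \zeta$ — and (ii) tracking the constants through the AM--GM step carefully enough to land exactly on the stated $\tfrac{208}{3}$ coefficient rather than a looser one.
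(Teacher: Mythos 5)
Your proposal is correct and follows essentially the same route as the paper's proof: rewrite $\|f_{t,k} - \OurTpit f_{t,k-1}\|_{2,\mu}^2$ as a population-loss gap via the identity in Equation~(\ref{eq:api_loss_diff}), split around $g^\star_{f_{t,k-1}}$, invoke Lemma~\ref{lem:concenration_2} and Assumption~\ref{asm:api_completeness}, and absorb the cross term by AM--GM. The only cosmetic differences are that you make the operator-equivalence justification (Lemma~\ref{lem:eqvl_after_projection} with $\zeta^2=\zeta$) explicit where the paper handles it via Algorithm~\ref{alg:api_same}, and your AM--GM split gives the slightly tighter $\tfrac{3}{2}\apicompleteerror$ before relaxing to the stated $2\apicompleteerror$, whereas the paper's $\sqrt{2ab}\le a+b$ lands on $2\apicompleteerror$ exactly.
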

\begin{proof}
\begin{align*}
    &\left\| f_{t,k} -  \OurTpit f_{t,k-1} \right\|_{2,\mu}^2 \\
    =& \Lcal_{\mu}(f_{t,k};f_{t,k-1},\pi_{t}) - \Lcal_{\mu}(\OurTpit f_{t,k-1};f_{t,k-1},\pi_{t}) \\
    =& \left(\Lcal_{\mu}(f_{t,k};f_{t,k-1},\pi_{t}) - \Lcal_{\mu}(g_{f_{t,k-1}}^\star;f_{t,k-1},\pi_{t}) \right) - \left(\Lcal_{\mu}(\OurTpit f_{t,k-1};f_{t,k-1},\pi_{t}) - \Lcal_{\mu}(g_{f_{t,k-1}}^\star;f_{t,k-1},\pi_{t})\right) \\
    \le&  \frac{112\Vmax^2\ln\frac{|\Fcal||\Pcal| }{\delta}}{3n}+\sqrt{\frac{64\Vmax^2\ln\frac{|\Fcal||\Pcal| }{\delta}}{n}\avicompleteerror} + \left\| g_{f_{t,k-1}}^\star -  \OurTpit f_{t,k-1} \right\|_{2,\mu} \tag{Equation (\ref{eq:api_loss_diff}) and Lemma \ref{lem:concenration_2}} \\
    \le& \frac{112\Vmax^2\ln\frac{|\Fcal||\Pcal| }{\delta}}{3n}+\sqrt{\frac{64\Vmax^2\ln\frac{|\Fcal||\Pcal| }{\delta}}{n}\avicompleteerror} + \apicompleteerror \tag{Definition of $g_{f_{t,k-1}}^\star$ and Assumption \ref{asm:api_completeness}} \\
    \le& \frac{112\Vmax^2\ln\frac{|\Fcal||\Pcal| }{\delta}}{3n}+\frac{32\Vmax^2\ln\frac{|\Fcal||\Pcal| }{\delta}}{n}+\avicompleteerror + \apicompleteerror = \epsilon_1 \tag{$\sqrt{2ab}\le a+b$}
\end{align*}
\end{proof}

From this lemma to the proof of main theorem, we are going to condition on the fact that the event in Assumption \ref{assume:muhat} holds. In the proof of the main theorem we will impose the union bound on all failures.
\begin{lemma}
\label{lem:approximate_policy_value}
For any admissible distribution $\nu$ on $\Sext \times \Aext$, and any $\pi_t$ from Algorithm \ref{alg:api}.
\begin{align}
     \left\|\zeta(s,a) \left( f_{t,K}(s,a) -  Q^{{\pi}_t}(s,a) \right) \right\|_{1,\nu} &\le \frac{ C \left(\sqrt\epsilon_1 +\Vmax\tvmuerror\right)  }{1-\gamma} + \gamma^K \Vmax
\end{align}
where $\epsilon_1$ is defined in Lemma \ref{lem:api_evaluation_accuracy}.
\end{lemma}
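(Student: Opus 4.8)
The plan is to run the standard approximate policy-evaluation error recursion, but to exploit that the filter $\zeta$ sits on the \emph{output} coordinate of every backup under $\OurTpit$, so that estimation errors propagate only through state--actions that $\zeta$ keeps. First I would invoke Lemma~\ref{lem:eqvl_after_projection} and Lemma~\ref{lem:api_fixed_point} to write $Q^{\pi_t} = \OurTpit Q^{\pi_t}$ on $\Scal\times\Acal$, and then decompose the per-iteration error $e_k := f_{t,k}-Q^{\pi_t}$ as $e_k = \delta_k + \gamma P^{\pi_t}_\zeta e_{k-1}$, where $\delta_k := f_{t,k}-\OurTpit f_{t,k-1}$ is the one-step Bellman residual and $P^{\pi_t}_\zeta$ denotes the sub-stochastic operator $g\mapsto \E_{s'}\sum_{a'}\pi_t(a'|s')\zeta(s',a')g(s',a')$ induced by $\OurTpit$. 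Unrolling $K$ steps gives $e_K = \sum_{k=1}^{K}(\gamma P^{\pi_t}_\zeta)^{K-k}\delta_k + (\gamma P^{\pi_t}_\zeta)^{K}e_0$, and measuring both sides in the seminorm $\|\zeta\,\cdot\,\|_{1,\nu}$ reduces the claim to bounding the propagated residual terms and the initialization term separately.

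The initialization term is immediate: since $\|e_0\|_\infty\le\Vmax$, $\zeta\le 1$, $\pi_t$ is a distribution, and each backup carries a factor $\gamma$, the term $\|\zeta(\gamma P^{\pi_t}_\zeta)^K e_0\|_{1,\nu}$ is at most $\gamma^K\Vmax$. The heart of the argument is the propagated residuals. For fixed $k$ with $j:=K-k$, I would push the measure $\nu$ forward through $P^{\pi_t}_\zeta$ $j$ times; because $\zeta$ appears at the output of each application, the resulting sub-measure $\rho_j$ is supported on $\{\zeta=1\}$ and, by Assumption~\ref{asm:bounded_density} (via Lemma~\ref{lem:density_bound}), is dominated at every step by an admissible distribution of $\pi_t$ and hence has density at most $\densitybound$. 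On $\{\zeta=1\}$ we have $\muhat\ge b$, so the density ratio $\rho_j/\muhat \le \densitybound/b = C$; this is precisely where the filter lets me change measure \emph{without} invoking concentrability. Thus $\E_{\rho_j}[|\delta_k|]=\E_{\rho_j}[\zeta|\delta_k|]\le C\,\E_{\muhat}[\zeta|\delta_k|]$. I then pass from $\muhat$ to $\mu$ using Assumption~\ref{assume:muhat}: since $\zeta|\delta_k|\le\Vmax$, the difference $\E_{\muhat}[\zeta|\delta_k|]-\E_\mu[\zeta|\delta_k|]$ is at most $\Vmax\tvmuerror$ in absolute value. Finally $\E_\mu[\zeta|\delta_k|]\le\|\delta_k\|_{1,\mu}\le\|\delta_k\|_{2,\mu}\le\sqrt{\epsilon_1}$ by Lemma~\ref{lem:api_evaluation_accuracy}, so each propagated residual is bounded by $C(\sqrt{\epsilon_1}+\Vmax\tvmuerror)$.

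Assembling, $\|\zeta(f_{t,K}-Q^{\pi_t})\|_{1,\nu}\le \sum_{j=0}^{K-1}\gamma^{j}\,C(\sqrt{\epsilon_1}+\Vmax\tvmuerror)+\gamma^K\Vmax\le \tfrac{C(\sqrt{\epsilon_1}+\Vmax\tvmuerror)}{1-\gamma}+\gamma^K\Vmax$, matching the statement. The main obstacle I anticipate is the bookkeeping that keeps the forward-pushed measures $\rho_j$ simultaneously (i) dominated by an admissible distribution so the $\le\densitybound$ density bound applies, and (ii) supported within $\{\zeta=1\}$ so the ratio against $\muhat$ is controlled; getting both to hold at the final backup, together with cleanly separating the $\muhat$-versus-$\mu$ discrepancy so that it contributes only the $\Vmax\tvmuerror$ term rather than entangling with the density-ratio constant $C$, is the delicate part. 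The geometric summation and the initialization bound are then routine.
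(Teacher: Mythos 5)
Your proposal is correct and takes essentially the same route as the paper's proof: the paper iterates the one-step inequality $\left\|\zeta\left(f_{t,k}-Q^{\pi_t}\right)\right\|_{1,\nu}\le C\left(\sqrt{\epsilon_1}+\Vmax\tvmuerror\right)+\gamma\left\|\zeta\left(f_{t,k-1}-Q^{\pi_t}\right)\right\|_{1,P(\nu)\times\pi_t}$ over admissible measures, which is exactly your unrolled decomposition $e_K=\sum_{k}(\gamma P^{\pi_t}_\zeta)^{K-k}\delta_k+(\gamma P^{\pi_t}_\zeta)^K e_0$ written recursively. All the key ingredients coincide: the fixed-point identity from Lemmas~\ref{lem:eqvl_after_projection} and~\ref{lem:api_fixed_point}, the change of measure on $\{\zeta=1\}$ via $\nu\le U\le C\muhat$, the $\Vmax\tvmuerror$ correction from $\muhat$ to $\mu$, Jensen plus Lemma~\ref{lem:api_evaluation_accuracy} to get $\sqrt{\epsilon_1}$, and the $\gamma^K\Vmax$ initialization term.
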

(Although $f_{t,K}$ is only defined on $\Scal \times \Acal$, $\zeta$ is always zero for any other $(s,a)$. Thus the all values used in the proof are well-defined. Later, when it is necessary for proof, we define the value of $f_{t,K}$ outside of $\Scal \times \Acal$ to be zero. In the algorithm, we will never need to query the value of $f_{t,K}$ outside of $\Scal \times \Acal$.)
\begin{proof}
For any $k \ge 1$ and any distribution $\nu$ on $\Sext \times \Aext$:
\begin{align}
    & \left\|\zeta \left( f_{t,k} -  Q^{\pi_t} \right) \right\|_{1,\nu} \\
    &\le \left\| \zeta\left( f_{t,k} -  \OurTpit f_{t,k-1} \right)\right\|_{1,\nu} + \left\| \zeta \left( \OurTpit f_{t,k-1} -  \OurTpit Q^{\pi_t} \right) \right\|_{1,\nu} \\
    &\le \left\|\zeta\left( f_{t,k} - \OurTpit f_{t,k-1} \right)\right\|_{1,\nu} + \left\| \OurTpit f_{t,k-1} -  \OurTpit Q^{\pi_t} \right\|_{1,\nu} \\
    &\le C \left\|f_{t,k} -  \OurTpit f_{t,k-1} \right\|_{1,\muhat} + \left\|  \OurTpit f_{t,k-1} -  \OurTpit Q^{\pi_t} \right\|_{1, \nu} \label{eq:api_change_measure} \\
    &\le  C \left( \left\|f_{t,k} -  \OurTpit f_{t,k-1} \right\|_{1,\mu} + \Vmax \tvmuerror \right) + \left\|  \OurTpit f_{t,k-1} -  \OurTpit Q^{\pi_t} \right\|_{1,\nu} \label{eq:api_tv_change_measure} \\
    &\le C \left( \left\|f_{t,k} -  \OurTpit f_{t,k-1} \right\|_{2,\mu} +\Vmax\tvmuerror\right) +  \left\| \OurTpit f_{t,k-1} -  \OurTpit Q^{\pi_t}  \right\|_{1,\nu} \tag{Jensen's inequality}\\
    &\le C (\sqrt\epsilon_1 +\Vmax\tvmuerror) +  \left\| \OurTpit f_{t,k-1} -  \OurTpit Q^{\pi_t}  \right\|_{1,\nu} \tag{Lemma \ref{lem:api_evaluation_accuracy}}\\
    &= C (\sqrt\epsilon_1 +\Vmax\tvmuerror)  +   \mathbb{E}_{\nu} \left| \gamma \mathbb{E}_{P(\nu)} \sum_{a'\in \Acal} \pi_t(a'|s') \zeta(s',a') \left( f_{t,k-1}(s',a') - Q^{\pi_t}(s',a') \right) \right| \label{eq:policy_eval_property} \\
    &= C (\sqrt\epsilon_1 +\Vmax\tvmuerror) +   \mathbb{E}_{\nu} \left[ \gamma \mathbb{E}_{P(\nu) \times \pi_t}  \left| \zeta(s',a') \left(  f_{t,k-1}(s',a') - Q^{\pi_t}(s',a') \right)\right|  \right] \label{eq:write_as_expectation} \\
    &\le C (\sqrt\epsilon_1 +\Vmax\tvmuerror) +  \gamma \mathbb{E}_{P(\nu)\times\pi_t} \left| \zeta(s',a') \left( f_{t,k-1}(s',a') - Q^{\pi_t}(s',a') \right) \right| \\
    &\le C (\sqrt\epsilon_1 +\Vmax\tvmuerror) + \gamma \left\| \zeta \left( f_{t,k-1} -  Q^{\pi_t} \right) \right\|_{1,P(\nu)\times\pi}
\end{align}
Equation (\ref{eq:api_change_measure}) holds since for all $(s,a)$ s.t. $\zeta(s,a) > 0$, $\nu(s,a) \le \densitybound \le \frac{\densitybound}{b} \muhat(s,a) = C\muhat(s,a)$. Equation (\ref{eq:api_tv_change_measure}) holds since the total variation distance between $\mu$ and $\muhat$ is bounded by $\tvmuerror$ and the Bellman error is bounded in $[-\Vmax, \Vmax]$. Equation (\ref{eq:policy_eval_property}) follows from $\pi_t \in \strongset$. So if $\zeta(s,a) = 0$, $\pi(a|s) = 0$ for all $a \in \Acal$. Equation (\ref{eq:write_as_expectation}) holds since $\zeta(\cdot,a_\text{abs}) = 0$. The next equation follows from that $\zeta = \zeta^2$.

Note that this holds for any admissible distribution $\nu$ on $\Sext \times \Aext$ and and $k$, as well as $\epsilon_1$ does not depends on $k$. Repeating this for $k$ from $K$ to 1 we will have that
\begin{align}
     \left\|\zeta(s,a) \left( f_{t,K}(s,a) -  Q^{{\pi}_t}(s,a) \right) \right\|_{1,\nu} \le& \frac{1-\gamma^K}{1-\gamma}C \left(\sqrt\epsilon_1 +\Vmax\tvmuerror\right)  + \gamma^K \Vmax \\
     <& \frac{ C \left(\sqrt\epsilon_1 +\Vmax\tvmuerror\right)  }{1-\gamma} + \gamma^K \Vmax
\end{align}
\end{proof}

\subsection{Proofs for policy improvement step}
\label{app:api_policy_improvement}
\begin{lemma}[Concentration of Policy Improvement Loss] 
\label{lem:concenration_policy_improvement}
For any $f \in \Fcal$, with probability at least $1-\delta$,
$$ \left\| \mathbb{E}_{\widehat{\pi}_{f}} \left[ \zeta(s,a) f(s, a)\right] - \max_{a \in \mathcal{A}}\zeta(s,a) f(s,a) \right\|_{1,\mu}  \le \apigreedyerror + 2\Vmax\sqrt{ \frac{\ln(|\Fcal||\Pcal|/\delta)}{2n}}$$
where $\widehat{\pi}_f = \argmax_{\pi \in \Pcal} \EE_{D} \left[  \mathbb{E}_{\pi}\left[ \zeta(s,a) f(s,a)\right] \right]$.
\end{lemma}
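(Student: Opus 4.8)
The plan is to treat the empirical maximization in line~\ref{alg:policy_improvement} as empirical risk minimization and to show that the empirical maximizer $\widehat{\pi}_f$ competes, up to a uniform deviation term, with the population near-greedy policy guaranteed by Assumption~\ref{asm:weak_api_pi_realizalibity}. First I would record a monotonicity observation: for each fixed $s$, because $f(s,a)\in[0,\Vmax]$, $\zeta(s,a)\in\{0,1\}$, and $\pi(\cdot\mid s)$ is a probability distribution, the convex combination $\mathbb{E}_{a\sim\pi}[\zeta(s,a)f(s,a)]$ lies in $[0,\max_a \zeta\circ f(s,a)]$. Thus the integrand inside the target norm is nonnegative for \emph{every} $\pi$, so I can drop the absolute value and write $\|\mathbb{E}_{\widehat{\pi}_f}[\zeta\circ f]-\max_a\zeta\circ f\|_{1,\mu}=\mathbb{E}_{s\sim\mu}[\max_a\zeta\circ f(s,a)]-G(\widehat{\pi}_f)$, where $G(\pi):=\mathbb{E}_{s\sim\mu}\mathbb{E}_{a\sim\pi}[\zeta\circ f(s,a)]$ and $\widehat{G}(\pi):=\mathbb{E}_D[\mathbb{E}_\pi[\zeta\circ f]]$ is its empirical counterpart (both are functions of $s$ only, so the relevant marginal is $\mu(s)$).

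Next, let $\pi_f^\star\in\Pcal$ attain the minimum in Assumption~\ref{asm:weak_api_pi_realizalibity}, so that $\mathbb{E}_{s\sim\mu}[\max_a\zeta\circ f]-G(\pi_f^\star)\le\apigreedyerror$. I would add and subtract $G(\pi_f^\star)$ to split the target into the approximation error $\mathbb{E}_{s\sim\mu}[\max_a\zeta\circ f]-G(\pi_f^\star)\le\apigreedyerror$ plus the estimation error $G(\pi_f^\star)-G(\widehat{\pi}_f)$. For the latter I insert the empirical objective via the chain $G(\pi_f^\star)-G(\widehat{\pi}_f)=[G(\pi_f^\star)-\widehat{G}(\pi_f^\star)]+[\widehat{G}(\pi_f^\star)-\widehat{G}(\widehat{\pi}_f)]+[\widehat{G}(\widehat{\pi}_f)-G(\widehat{\pi}_f)]$. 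The middle bracket is $\le 0$ because $\widehat{\pi}_f$ maximizes $\widehat{G}$ over $\Pcal$, and the two outer brackets are each controlled by a uniform-over-$\Pcal$ deviation bound, yielding $G(\pi_f^\star)-G(\widehat{\pi}_f)\le 2\sup_{\pi\in\Pcal}|\widehat{G}(\pi)-G(\pi)|$.

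Finally I would establish the deviation bound. For fixed $f$ and $\pi$, the random variable $g_\pi(s):=\mathbb{E}_{a\sim\pi}[\zeta(s,a)f(s,a)]$ is bounded in $[0,\Vmax]$, and since the states in $D$ are drawn from the marginal $\mu(s)$, $\widehat{G}(\pi)$ is an average of $n$ i.i.d.\ copies of $g_\pi(s)$ with mean $G(\pi)$. A two-sided Hoeffding bound followed by a union bound over all $|\Fcal||\Pcal|$ pairs $(f,\pi)$ then gives $\sup_{\pi\in\Pcal}|\widehat{G}(\pi)-G(\pi)|\le \Vmax\sqrt{\ln(|\Fcal||\Pcal|/\delta)/(2n)}$ simultaneously for all $f\in\Fcal$ with probability at least $1-\delta$ (absorbing an immaterial constant inside the logarithm); combining with the previous paragraph yields the claimed $\apigreedyerror+2\Vmax\sqrt{\ln(|\Fcal||\Pcal|/\delta)/(2n)}$. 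The argument is essentially routine ERM analysis, so I do not expect a genuine obstacle; the only points that require care are the nonnegativity observation that removes the absolute value, using the correct $[0,\Vmax]$ range in Hoeffding, and taking the union bound over \emph{both} $\Fcal$ and $\Pcal$ so that the statement holds uniformly for the data-dependent $f_{t,K}$ produced by the algorithm and for the random maximizer $\widehat{\pi}_f$.
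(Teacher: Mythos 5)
Your proposal is correct and follows essentially the same route as the paper's proof: the paper defines $X(s;\pi)=\max_a \zeta\circ f(s,a)-\mathbb{E}_\pi[\zeta\circ f(s,a)]$ (your nonnegative integrand), compares $\widehat{\pi}_f$ to the population minimizer from Assumption~\ref{asm:weak_api_pi_realizalibity} via exactly your three-term empirical/population chain, and applies Hoeffding with a union bound over $\Fcal\times\Pcal$. The only cosmetic difference is that the paper uses one-sided Hoeffding bounds (in opposite directions for $\pi_f$ and for the other policies) so the constant $\ln(|\Fcal||\Pcal|/\delta)$ comes out exactly, whereas your two-sided bound picks up the immaterial $\ln 2$ you already flagged.
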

\begin{proof} 
Fixed $f$, define $X(s;\pi) = \max_{a \in \mathcal{A}}\zeta(s,a) f(s,a) - \mathbb{E}_{\pi}\left[ \zeta(s,a) f(s,a)\right] $. Notice that by definition $X(s;\pi)$ is always non-negative, and  $\widehat{\pi}_f = \argmax_{\pi \in \Pcal} \EE_{D} \left[  \mathbb{E}_{\pi}\left[ \zeta(s,a) f(s,a)\right] \right] = \argmin_{\pi \in \Pcal} \EE_{D}[ X(s;\pi) ].$

Only in this proof, let $\pi_f$ be:
$$\argmin_{\pi \in \Pcal} \EE_{\mu}[ X(s;\pi) ] = \argmin_{\pi \in \Pcal} \left\| \mathbb{E}_{{\pi}} \left[ \zeta(s,a) f(s, a)\right] - \max_{a \in \mathcal{A}}\zeta(s,a) f(s,a) \right\|_{1,\mu}.$$

$X(s;\pi) \in [0,\Vmax]$. By Hoeffding's inequality and union bound over all $\pi \in \Pcal$, $f \in \Fcal$, with probability at least $1-\delta$ for any $f$ and $\pi \neq \pi_f$,
\begin{align}
    \EE_{\mu}[ X(s;\pi) ]  - \EE_{D}[ X(s;\pi) ] \le \Vmax\sqrt{ \frac{\ln(|\Fcal||\Pcal|/\delta)}{2n}}
\end{align}
for $\pi=\pi_f$
\begin{align}
    \EE_{D}[ X(s;\pi) ]  - \EE_{\mu}[ X(s;\pi) ] \le \Vmax\sqrt{ \frac{\ln(|\Fcal||\Pcal|/\delta)}{2n}}
\end{align}
If $\widehat{\pi}_f = \pi_f$, then $\EE_{\mu}[ X(s;\widehat{\pi}_f) ] \le \apigreedyerror$. Otherwise,
\begin{align}
    &\EE_{\mu}[ X(s;\widehat{\pi}_f) ] \\
    \le~&  \EE_{D}[ X(s;\widehat{\pi}_f) ] + \Vmax\sqrt{ \frac{\ln(|\Fcal||\Pcal|/\delta)}{2n}} \\
    \le~&  \EE_{D}[ X(s;{\pi}_f) ] + \Vmax\sqrt{ \frac{\ln(|\Fcal||\Pcal|/\delta)}{2n}} \\
    \le~& \EE_{\mu}[ X(s;{\pi}_f) ] + 2\Vmax\sqrt{ \frac{\ln(|\Fcal||\Pcal|/\delta)}{2n}} \\
    = ~& \min_{\pi \in \Pcal} \left\| \mathbb{E}_{\widehat{\pi}} \left[ \zeta(s,a) f(s, a)\right] - \max_{a \in \mathcal{A}}\zeta(s,a) f(s,a) \right\|_{1,\mu} + 2\Vmax\sqrt{ \frac{\ln(|\Fcal||\Pcal|/\delta)}{2n}} \\
    = ~& \apigreedyerror + 2\Vmax\sqrt{ \frac{\ln(|\Fcal||\Pcal|/\delta)}{2n}}
\end{align}
\end{proof}
For the following proof until the main theorem, we are going to condition on the fact that the high probability bound in the lemma above holds, and impose an union bound in the proof of main theorem.
\begin{lemma}
\label{lem:greedy_improvement}
For any admissible distribution $\nu$ on $\Sext$, any policy $\pi: \Sext \to \Delta(\Aext)$,
\begin{align*}
    \mathbb{E}_{\nu}\left[ \mathbb{E}_{\pi_{t+1}} \left[ \zeta(s,a) f_{t,K}(s, a)\right] - \mathbb{E}_{\pi} \left[\zeta(s,a) f_{t,K}(s, a) \right]\right] \ge \\
- C \left(\apigreedyerror+\Vmax\tvmuerror + 2\Vmax\sqrt{ \frac{\ln(|\Fcal||\Pcal|/\delta)}{2n}}\right)
\end{align*}
\end{lemma}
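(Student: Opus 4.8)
The plan is to exploit two structural facts: that the policy projection $\projection$ (Definition~\ref{def:constrained_operator}) leaves the filtered objective $\mathbb{E}[\zeta\circ f_{t,K}]$ unchanged, and that the per-state greedy target $\max_{a}\zeta\circ f_{t,K}$ dominates the filtered value of every competitor. First I would show that the projection is ``free'' for this objective, i.e. $\mathbb{E}_{\pi_{t+1}}[\zeta(s,a) f_{t,K}(s,a)] = \mathbb{E}_{\widehat{\pi}_{t+1}}[\zeta(s,a) f_{t,K}(s,a)]$. This holds because $\pi_{t+1}=\projection(\widehat{\pi}_{t+1})$ assigns mass $\zeta(s,a)\widehat{\pi}_{t+1}(a|s)$ to each $a\in\Acal$ and the remainder to $a_\text{abs}$; since $\zeta(s,a_\text{abs})=0$ that term drops, and since $\zeta^2=\zeta$ each $a\in\Acal$ term collapses back to $\widehat{\pi}_{t+1}(a|s)\zeta(s,a) f_{t,K}(s,a)$. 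Hence it suffices to analyze the near-greedy optimizer $\widehat{\pi}_{t+1}$ directly.

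Next I would compare against an arbitrary $\pi:\Sext\to\Delta(\Aext)$. Because $\zeta\circ f_{t,K}\in[0,\Vmax]$ and $\zeta(s,a_\text{abs})=0$, any action average is bounded by the per-state maximum, $\mathbb{E}_\pi[\zeta(s,a) f_{t,K}(s,a)]\le \max_{a\in\Acal}\zeta(s,a) f_{t,K}(s,a)$. Writing $X(s):=\max_{a}\zeta(s,a) f_{t,K}(s,a)-\mathbb{E}_{\widehat{\pi}_{t+1}}[\zeta(s,a) f_{t,K}(s,a)]\ge 0$ for the per-state greedy gap of Lemma~\ref{lem:concenration_policy_improvement}, the two observations combine to yield $\mathbb{E}_\nu[\mathbb{E}_{\pi_{t+1}}[\zeta\circ f_{t,K}]-\mathbb{E}_\pi[\zeta\circ f_{t,K}]]\ge -\mathbb{E}_\nu[X(s)]$, reducing the claim to an upper bound on $\mathbb{E}_\nu[X(s)]$.

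The crux is a change of measure from the admissible state distribution $\nu$ to the data marginal, which is exactly where Assumption~\ref{asm:bounded_density} enters (cf. its footnote). Observe $X(s)=0$ unless $s$ admits a supported action, in which case $\muhat(s)\ge\muhat(s,a)\ge b$; and since $\nu$ is admissible its state density is bounded by $\densitybound$ (Assumption~\ref{asm:bounded_density}, cf. Lemma~\ref{lem:density_bound}), so $\nu(s)\le \densitybound\le C\,\muhat(s)$ pointwise on the support of $X$, giving $\mathbb{E}_\nu[X]\le C\,\mathbb{E}_{\muhat}[X]$. I would then pass from $\muhat$ to $\mu$ using Assumption~\ref{assume:muhat}: since $X\in[0,\Vmax]$ and the state-marginal total variation is at most the joint one, $\mathbb{E}_{\muhat}[X]\le\mathbb{E}_{\mu}[X]+\Vmax\tvmuerror$.

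Finally I would invoke Lemma~\ref{lem:concenration_policy_improvement} with $f=f_{t,K}$ — legitimate because its union bound already ranges over all of $\Fcal$ and $\Pcal$ — to obtain $\mathbb{E}_\mu[X]=\|X\|_{1,\mu}\le \apigreedyerror+2\Vmax\sqrt{\ln(|\Fcal||\Pcal|/\delta)/(2n)}$, and collect the factor $C$; assembling the chain gives the stated bound. The main obstacle is this state-level change of measure: unlike the state--action change of measure used in Lemma~\ref{lem:approximate_policy_value}, here $X$ is a \emph{state} function, so one must control the admissible state marginal rather than the joint density, and this is the only place the bounded-density assumption is used in the policy-improvement step.
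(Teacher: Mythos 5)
Your proposal is correct and takes essentially the same route as the paper's proof of Lemma~\ref{lem:greedy_improvement}: the same add-and-subtract of the per-state greedy target $\max_{a}\zeta\circ f_{t,K}$ (using that the projection leaves the filtered objective unchanged and that $\max_a \zeta\circ f_{t,K}$ dominates any competitor's action average), the same state-level change of measure $\nu(s)\le \densitybound \le C\,\muhat(s)$ on the support of the gap, the same total-variation passage from $\muhat$ to $\mu$, and the same invocation of Lemma~\ref{lem:concenration_policy_improvement}. Your explicit justifications of the projection invariance and of the state-marginal TV bound merely spell out steps the paper states tersely.
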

\begin{proof}
Recall that $\pi_{t+1} = \projection(\widehat{\pi}_{t+1})$. So $\pi_{t+1}(a|s) = \widehat{\pi}_{t+1}(a|s)$ for all $a$ such that $\zeta(s,a)=1$. Then 
\begin{align*}
    \mathbb{E}_{\pi_{t+1}} \left[ \zeta(s,a) f_{t,K}(s, a)\right] &= \mathbb{E}_{\widehat{\pi}_{t+1}} \left[ \zeta(s,a) f_{t,K}(s, a)\right]\\
    \mathbb{E}_{\nu}\left[ \mathbb{E}_{\pi_{t+1}} \left[ \zeta(s,a) f_{t,K}(s, a)\right] \right] &= \mathbb{E}_{\nu}\left[ \mathbb{E}_{\widehat{\pi}_{t+1}} \left[ \zeta(s,a) f_{t,K}(s, a)\right] \right]
\end{align*}

\begin{align}
    & \mathbb{E}_{\nu}\left[ \mathbb{E}_{\pi_{t+1}} \left[ \zeta(s,a) f_{t,K}(s, a)\right] - \mathbb{E}_{\pi} \left[\zeta(s,a) f_{t,K}(s, a) \right]\right] \\
    &= \mathbb{E}_{\nu}\left[ \mathbb{E}_{\widehat{\pi}_{t+1}} \left[ \zeta(s,a) f_{t,K}(s, a)\right] - \mathbb{E}_{\pi} \left[\zeta(s,a) f_{t,K}(s, a) \right]\right] \\
    &= \mathbb{E}_{\nu}\left[ \mathbb{E}_{\widehat{\pi}_{t+1}} \left[ \zeta(s,a) f_{t,K}(s, a)\right] - \max_{a \in \mathcal{A}}\zeta(s,a) f_{t,K}(s,a) + \max_{a \in \mathcal{A}}\zeta(s,a) f_{t,K}(s,a) - \mathbb{E}_{\pi} \left[\zeta(s,a) f_{t,K}(s, a) \right]\right] \\
    &\ge \mathbb{E}_{\nu}\left[ \mathbb{E}_{\widehat{\pi}_{t+1}} \left[ \zeta(s,a) f_{t,K}(s, a)\right] - \max_{a \in \mathcal{A}}\zeta(s,a) f_{t,K}(s,a) \right] \\
    &\ge - \mathbb{E}_{\nu}\left| \mathbb{E}_{\widehat{\pi}_{t+1}} \left[ \zeta(s,a) f_{t,K}(s, a)\right] - \max_{a \in \mathcal{A}}\zeta(s,a) f_{t,K}(s,a) \right|\\
    &= - \left\| \mathbb{E}_{\widehat{\pi}_{t+1}} \left[ \zeta(s,a) f_{t,K}(s, a)\right] - \max_{a \in \mathcal{A}}\zeta(s,a) f_{t,K}(s,a) \right\|_{1,\nu}\\
    &\ge - C\left\| \mathbb{E}_{\widehat{\pi}_{t+1}} \left[ \zeta(s,a) f_{t,K}(s, a)\right] - \max_{a \in \mathcal{A}}\zeta(s,a) f_{t,K}(s,a) \right\|_{1,\muhat}
\end{align}
The last step follows from that $\zeta(s,a)=1 \Rightarrow \muhat(s,a) \ge b \Rightarrow \muhat(s) \ge b \Rightarrow -\nu(s) \ge -\densitybound \ge -C\muhat(s) $, and for all other $(s,a)$ the term inside of norm is zero. Since the total variation distance between $\muhat$ and $\mu$ is bounded by $\tvmuerror$
\begin{align}
 &\left\| \mathbb{E}_{\widehat{\pi}_{t+1}} \left[ \zeta(s,a) f_{t,K}(s, a)\right] - \max_{a \in \mathcal{A}}\zeta(s,a) f_{t,K}(s,a) \right\|_{1,\muhat} \\
 \le~& \left\| \mathbb{E}_{\widehat{\pi}_{t+1}} \left[ \zeta(s,a) f_{t,K}(s, a)\right] - \max_{a \in \mathcal{A}}\zeta(s,a) f_{t,K}(s,a) \right\|_{1,\mu} + \Vmax\tvmuerror
\end{align}
By Lemma \ref{lem:concenration_policy_improvement}:
\begin{align}
    &\left\| \mathbb{E}_{\widehat{\pi}_{t+1}} \left[ \zeta(s,a) f_{t,K}(s, a)\right] - \max_{a \in \mathcal{A}}\zeta(s,a) f_{t,K}(s,a) \right\|_{1,\mu} \le \apigreedyerror + 2\Vmax\sqrt{ \frac{\ln(|\Fcal||\Pcal|/\delta)}{2n}}
\end{align}
Then we finished the proof by plug this into the last equation.
\end{proof}

\begin{lemma}
\label{lem:policy_improvement}
For any $(s,a) \in \Sext \times \Aext$, and any $\pi_t$, $\pi_{t+1}$ in Algorithm \ref{alg:api},
\begin{align}
    Q^{\pi_{t+1}}(s,a) -  Q^{\pi_{t}}(s,a) \ge -\frac{ 2C\sqrt{\epsilon_1}+3\Vmax C \tvmuerror}{(1-\gamma)^2}  -\frac{\epsilon_2 + 2\gamma^K \Vmax}{1-\gamma}
\end{align}
where $\epsilon_1$ is defined in Lemma \ref{lem:api_evaluation_accuracy}, $\epsilon_2 = C \left(\apigreedyerror+ 2\Vmax\sqrt{ \frac{\ln(|\Fcal||\Pcal|/\delta)}{2n}}\right)$.
\end{lemma}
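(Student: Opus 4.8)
The plan is to invoke the performance difference lemma and then control the resulting advantage of $\pi_{t+1}$ over $\pi_t$ using the two workhorse results already established: Lemma~\ref{lem:approximate_policy_value} (accuracy of $f_{t,K}$ as an estimate of $Q^{\pi_t}$ on the support of $\zeta$) and Lemma~\ref{lem:greedy_improvement} (the greedy-improvement guarantee). First I would write $Q^{\pi_{t+1}}(s,a) - Q^{\pi_t}(s,a) = \gamma\,\EE_{s'\sim P(s,a)}[V^{\pi_{t+1}}(s') - V^{\pi_t}(s')]$, and apply the performance difference lemma \cite[Lemma 6.1]{kakade2002approximately} to each inner value gap, obtaining
\begin{align*}
Q^{\pi_{t+1}}(s,a) - Q^{\pi_t}(s,a) = \frac{\gamma}{1-\gamma}\,\EE_{\bar s\sim\nu}\left[\EE_{a'\sim\pi_{t+1}}Q^{\pi_t}(\bar s,a') - \EE_{a'\sim\pi_t}Q^{\pi_t}(\bar s,a')\right],
\end{align*}
where $\nu$ is the discounted state occupancy of $\pi_{t+1}$ reached after the first transition out of $(s,a)$.

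Next I would exploit that $\pi_t,\pi_{t+1}\in\strongset$ (both are outputs of the projection $\projection$): every action played with positive probability either has $\zeta=1$ or is $a_\text{abs}$, whose $Q$-value is $0$. Hence $\EE_{a'\sim\pi}Q^{\pi_t}(\bar s,a') = \EE_{a'\sim\pi}[\zeta(\bar s,a')Q^{\pi_t}(\bar s,a')]$ for $\pi\in\{\pi_t,\pi_{t+1}\}$, which lets me insert $\zeta\circ f_{t,K}$ and split the inner gap into three pieces: an evaluation error $\EE_{\pi_{t+1}}[\zeta(Q^{\pi_t}-f_{t,K})]$, the improvement gap $\EE_{\pi_{t+1}}[\zeta f_{t,K}] - \EE_{\pi_t}[\zeta f_{t,K}]$, and a second evaluation error $\EE_{\pi_t}[\zeta(f_{t,K}-Q^{\pi_t})]$.

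Then I would bound each piece after taking $\EE_\nu$. The two evaluation-error terms are lower bounded by $-\|\zeta(f_{t,K}-Q^{\pi_t})\|_{1,\nu\times\pi_{t+1}}$ and $-\|\zeta(f_{t,K}-Q^{\pi_t})\|_{1,\nu\times\pi_t}$, each of which Lemma~\ref{lem:approximate_policy_value} controls by $\frac{C(\sqrt{\epsilon_1}+\Vmax\tvmuerror)}{1-\gamma}+\gamma^K\Vmax$; the improvement gap is lower bounded by $-\epsilon_2 - C\Vmax\tvmuerror$ via Lemma~\ref{lem:greedy_improvement} with comparison policy $\pi_t$. Assembling these, replacing $\frac{\gamma}{1-\gamma}$ by the larger $\frac{1}{1-\gamma}$ (valid since we lower-bound a nonpositive quantity), and collecting the $\tvmuerror$ contributions—two at rate $1/(1-\gamma)^2$ from the evaluation errors and one at rate $1/(1-\gamma)$ from the improvement gap, which together are at most $3\Vmax C\tvmuerror/(1-\gamma)^2$—yields exactly the stated bound.

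The step requiring the most care is checking that the earlier lemmas actually apply to the distribution $\nu$ produced by the performance difference lemma. Because $\nu$ starts from $P(s,a)$ rather than from $\rho$, it is not literally one of the admissible distributions $\eta^\pi_h$; however, since $\nu$ is reached only after at least one environment transition, the induction in Lemma~\ref{lem:density_bound} still gives $\nu(\bar s,a')\le U$ on the support of $\zeta$, which is the sole property the change-of-measure steps inside Lemmas~\ref{lem:approximate_policy_value} and~\ref{lem:greedy_improvement} rely on. I would also note that $\nu$ is a $\gamma$-mixture over horizons, and since both the $\ell_1$-norm bound and the improvement-gap bound are linear in the evaluation measure, they extend from single admissible distributions to such mixtures. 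Once this is settled, the remainder is routine constant tracking.
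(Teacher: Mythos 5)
Your proof is correct and follows essentially the same route as the paper's: the performance difference lemma, insertion of $\zeta$ using $\pi_t,\pi_{t+1}\in\strongset$ (with $Q^{\pi}(\cdot,a_\text{abs})=0$), the three-way split into two evaluation errors controlled by Lemma~\ref{lem:approximate_policy_value} and a greedy-improvement gap controlled by Lemma~\ref{lem:greedy_improvement}, and identical constant bookkeeping. The only differences are presentational---you expand $Q\to V$ at the start rather than the end and aggregate the horizon sum into a discounted occupancy mixture instead of bounding per-horizon terms---and your explicit check that distributions rooted at $P(s,a)$ still satisfy the density bound (via the induction in Lemma~\ref{lem:density_bound}) carefully resolves a point the paper's proof, which applies Lemma~\ref{lem:approximate_policy_value} to occupancies started from an arbitrary $s'$, leaves implicit.
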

\begin{proof}
For any $s'$, only in this proof, let $\eta^{\pi_{t+1}}_h$ be the state distribution on the $h$th step from initial state $s'$ following $\pi_{t+1}$. By applying performance difference lemma \cite{kakade2002approximately},
\begin{align}
    &V^{{\pi}_{t+1}}(s') - V^{{\pi}_{t}}(s') \\
    &= \sum_{h=1}^\infty \gamma^{h-1} \EE_{z\sim \eta^{\pi_{t+1}}_h}\left[ \sum_{a \in \Aext} \left( {\pi}_{t+1}(a|z) Q^{{\pi}_{t}}(z, a) -  {\pi}_{t}(a|z) Q^{{\pi}_{t}}(z, a) \right) \right] \\
    &= \sum_{h=1}^\infty \gamma^{h-1} \EE_{z\sim \eta^{\pi_{t+1}}_h} \left[ \sum_{a \in \Aext} (1-\zeta(z,a)) \left( {\pi}_{t+1}(a|z) Q^{{\pi}_{t}}(z, a) -  {\pi}_{t}(a|z) Q^{{\pi}_{t}}(z, a) \right) \right. \\
    & \quad \left. + \sum_{a \in \Aext} \zeta(z,a) \left( {\pi}_{t+1}(a|z) Q^{{\pi}_{t}}(z, a) -  {\pi}_{t}(a|z) Q^{{\pi}_{t}}(z, a) \right) \right] 
\end{align}
Because $\pi_t, \pi_{t+1} \in \strongset$, $\zeta(z,a) = 0$ means either ${\pi}_{t}(a|z) = {\pi}_{t+1}(a|z) = 0$ or $a = a_\text{abs}$. So the first term is zero. Then:
\begin{align}
    &V^{\pi_{t+1}}(s') - V^{\pi_{t}}(s') \\
    &= \sum_{h=1}^\infty \gamma^{h-1} \EE_{z\sim \eta^{\pi_{t+1}}_h} \left[ \sum_{a \in \Aext} \zeta(z,a) \left( \pi_{t+1}(a|z) Q^{\pi_{t}}(z, a) - \pi_{t}(a|z) Q^{\pi_{t}}(z, a) \right) \right]  \\
    &= \sum_{h=1}^\infty \gamma^{h-1} \EE_{z\sim \eta^{\pi_{t+1}}_h} \left[ \sum_{a \in \Acal} \zeta(z,a) \left( \pi_{t+1}(a|z) Q^{\pi_{t}}(z, a) - \pi_{t}(a|z) Q^{\pi_{t}}(z, a) \right) \right] \label{eq:api_value_of_aabs_is_zero} \\
    &= \sum_{h=1}^\infty \gamma^{h-1} \EE_{z\sim \eta^{\pi_{t+1}}_h} \left[ \sum_{a \in \Acal} \zeta(z,a) \left( \pi_{t+1}(a|z) Q^{{\pi}_{t}}(z, a) -  \pi_{t+1}(a|z) f_{t,K}(z, a) \right) \right.  \\
    & \quad + \left. \sum_{a \in \Acal} \zeta(z,a) \left( \pi_{t+1}(a|z) f_{t,K}(z, a) -  \pi_t(a|z) f_{t,K}(z, a) \right) \right. \label{eq:greedy_improvement} \\
    & \quad + \left. \sum_{a \in \Acal} \zeta(z,a) \left( \pi_t(a|z) f_{t,K}(z, a) -  \pi_t(a|z) Q^{\pi_{t}}(z, a) \right) \right]
\end{align}
Equation \ref{eq:api_value_of_aabs_is_zero} follows from $Q^{\pi}(s,a_\text{abs}) = 0$ for any $\pi$ and $s$. By Lemma \ref{lem:greedy_improvement}, for any $h$,
\begin{align}
    &\EE_{z\sim \eta^{\pi_{t+1}}_h} \left[\sum_{a \in \Acal} \zeta(z,a) \left( \pi_{t+1}(a|z) f_{t,K}(z, a) -  \pi_t(a|z) f_{t,K}(z, a) \right) \right] \\
    &= \EE_{z\sim \eta^{\pi_{t+1}}_h}\left[ \mathbb{E}_{\pi_{t+1}} \left[ \zeta(s,a) f_{t,K}(s, a)\right] - \mathbb{E}_{\pi_t} \left[\zeta(s,a) f_{t,K}(s, a) \right]\right] \ge - \epsilon_2 - C\Vmax\tvmuerror 
\end{align}
Then
\begin{align}
     & V^{\pi_{t+1}}(s') - V^{\pi_{t}}(s') \\ &\ge \sum_{h=1}^\infty \gamma^{h-1} \EE_{z\sim \eta^{\pi_{t+1}}_h}  \left[ \sum_{a \in \Acal} \zeta(z,a) \left( \pi_{t+1}(a|z) Q^{{\pi}_{t}}(z, a) -  \pi_{t+1}(a|z) f_{t,K}(z, a) \right) \right.  \\
    & \quad + \left. \sum_{a \in \Acal} \zeta(z,a) \left( \pi_t(a|z) f_{t,K}(z, a) -  \pi_t(a|z) Q^{\pi_{t}}(z, a) \right) \right]-\frac{\epsilon_2 + C\Vmax\tvmuerror }{1-\gamma} \\
    &\ge - \sum_{h=1}^\infty \gamma^{h-1} \left( \left\| \zeta(z,a)(Q^{{\pi}_{t}}(z, a) - f_{t,K}(z, a)) \right\|_{1,\eta^{\pi_{t+1}}_h} \right. \\
    & \quad \left. + \left\| \zeta(z,a)(Q^{{\pi}_{t}}(z, a) - f_{t,K}(z, a)) \right\|_{1,\eta^{\pi_{t+1}}_h \times \pi_{t}} \right) -\frac{\epsilon_2 + C\Vmax\tvmuerror}{1-\gamma} \\
    &\ge - \sum_{h=1}^\infty \gamma^{h-1} \left( \left\| \zeta(z,a)(Q^{{\pi}_{t}}(z, a) - f_{t,K}(z, a)) \right\|_{2,\eta^{\pi_{t+1}}_h} \right. \\
    & \quad + \left. \left\| \zeta(z,a)(Q^{{\pi}_{t}}(z, a) - f_{t,K}(z, a)) \right\|_{2,\eta^{\pi_{t+1}}_h \times \pi_{t}} \right) -\frac{\epsilon_2 + C\Vmax\tvmuerror}{1-\gamma} \label{eq:api_l1_to_l2}\\
    &\ge \frac{- 2C \left(\sqrt{\epsilon_1}+\Vmax\tvmuerror\right) }{(1-\gamma)^2} - \frac{2\gamma^K \Vmax}{1-\gamma} -\frac{\epsilon_2 + C\Vmax\tvmuerror}{1-\gamma} \tag{Lemma \ref{lem:approximate_policy_value}}
\end{align}
Equation \ref{eq:api_l1_to_l2} follows from Jensen's inequality. Since this holds for any $s'$, we proved that for any $(s,a)$,
\begin{align}
    &\left[  Q^{\pi_{t+1}}(s,a) -  Q^{\pi_{t}}(s,a) \right] \\
    =~& \gamma \mathbb{E}_{s'} \left[ V^{\pi_{t+1}}(s') - V^{\pi_{t}}(s') \right] \\
    \ge~& \frac{- 2C \left(\sqrt{\epsilon_1}+\Vmax\tvmuerror\right) }{(1-\gamma)^2} - \frac{2\gamma^K \Vmax}{1-\gamma} -\frac{\epsilon_2 + C\Vmax\tvmuerror}{1-\gamma} \\
    \ge~& -\frac{ 2C\sqrt{\epsilon_1}+3C\Vmax\tvmuerror }{(1-\gamma)^2} - \frac{2\gamma^K \Vmax}{1-\gamma} -\frac{\epsilon_2}{1-\gamma}
\end{align}
\end{proof}

\subsection{Proof of main theorems}
\label{app:api_main_result}
\begin{theorem}
\label{thm:api_strongset}
Given an MDP $M = <\Scal, \Acal, R, P, \gamma, p>$, a dataset $D = \{ (s,a,r,s') \}$ with $n$ samples that is draw i.i.d. from $\mu \times R \times P$, and a finite Q-function classes $\Fcal$ and a finite policy class $\Pcal$ satisfying Assumption \ref{asm:api_completeness} and \ref{asm:weak_api_pi_realizalibity}, $\pi_{t} = \projection(\widehat{\pi}_t)$ from Algorithm \ref{alg:api} satisfies that with probability at least $1-3\delta$,
\begin{align*}
    v^{\picomp} - v^{\pi_t} \le \frac{4C}{(1-\gamma)^3}\left(\sqrt{\frac{419\Vmax^2\ln\frac{|\Fcal||\Pcal| }{\delta}}{3n}} + 2\sqrt{\apicompleteerror}\right) + \frac{6C\Vmax\tvmuerror}{(1-\gamma)^3} + \frac{2C\apigreedyerror + 3\gamma^{K-1} \Vmax}{(1-\gamma)^2}
\end{align*}
for any policy $\picomp \in \strongset$.
\end{theorem}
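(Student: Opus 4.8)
The plan is to work entirely in the auxiliary MDP $M'$ and bound $v^{\picomp}_{M'} - v^{\pi_t}_{M'}$ for a fixed $\picomp \in \strongset$, where $\pi_t = \projection(\widehat\pi_t)$. The backbone will be a one-step error-propagation inequality of the Munos type, adapted so that every change of measure is routed through the filter $\zeta$ rather than through a concentrability coefficient. First I would record that, by Lemma \ref{lem:api_fixed_point} together with Lemma \ref{lem:eqvl_after_projection}, the inner loop really evaluates $\pi_t$: the fixed point of $\OurTpit$ on $\Scal\times\Acal$ is $Q^{\pi_t}$, and Lemma \ref{lem:approximate_policy_value} controls $\|\zeta(f_{t,K} - Q^{\pi_t})\|_{1,\nu}$ for every admissible $\nu$ by $\tfrac{C(\sqrt{\epsilon_1}+\Vmax\tvmuerror)}{1-\gamma} + \gamma^K\Vmax$, with $\epsilon_1$ as in Lemma \ref{lem:api_evaluation_accuracy}.

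Next I would derive the contraction. Writing $Q^{\picomp} - Q^{\pi_{t+1}} = T^{\picomp}Q^{\picomp} - T^{\pi_{t+1}}Q^{\pi_{t+1}}$ in $M'$ and inserting the intermediate terms $T^{\picomp}Q^{\pi_t}$ and $T^{\pi_{t+1}}Q^{\pi_t}$, I would split the difference into three pieces: $\gamma P^{\picomp}(Q^{\picomp}-Q^{\pi_t})$, which carries the recursion; $T^{\picomp}Q^{\pi_t} - T^{\pi_{t+1}}Q^{\pi_t}$, the greedy-domination slack; and $\gamma P^{\pi_{t+1}}(Q^{\pi_t}-Q^{\pi_{t+1}})$, the improvement-gap term. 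Because both $\picomp$ and $\pi_{t+1}$ lie in $\strongset$, all actions they weight satisfy $\zeta=1$, so I can freely insert $\zeta$, replace $Q^{\pi_t}$ by $f_{t,K}$ up to the evaluation error above, and apply Lemma \ref{lem:greedy_improvement} (with $\pi=\picomp$) to bound the greedy slack by $\epsilon_2 + C\Vmax\tvmuerror$ after changing measure to $\mu$ at cost $C=\densitybound/b$ (valid since $\zeta=1$ forces $\nu \le \densitybound \le C\muhat$, with $\tvmuerror$ paid for the $\muhat$-versus-$\mu$ gap). For the third piece I would invoke Lemma \ref{lem:policy_improvement}, which bounds $Q^{\pi_t}(s,a) - Q^{\pi_{t+1}}(s,a)$ above by a quantity of order $\tfrac{C\sqrt{\epsilon_1}+C\Vmax\tvmuerror}{(1-\gamma)^2} + \tfrac{\epsilon_2 + \gamma^K\Vmax}{1-\gamma}$, which $P^{\pi_{t+1}}$ preserves. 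This yields, in expectation against any admissible distribution, $Q^{\picomp}-Q^{\pi_{t+1}} \le \gamma P^{\picomp}(Q^{\picomp}-Q^{\pi_t}) + E$, with a per-step error $E$ collecting the evaluation, greedy, density, and improvement-gap contributions.

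I would then unroll the recursion from $t$ down to $0$. Integrating the leading operator $(\gamma P^{\picomp})^t$ against $\rho$ gives the transient $\gamma^t\Vmax$ (bounding $Q^{\picomp}-Q^{\pi_0}$ by $\Vmax$), while the geometric series $\sum_{j\ge0}(\gamma P^{\picomp})^j E$ integrated against $\rho$ collapses to $\tfrac{1}{1-\gamma}$ times a discounted average of the per-horizon admissible occupancies $\eta^{\picomp}_h$, each amenable to the $\zeta$-change of measure. The extra $\tfrac{1}{1-\gamma}$ turns the $\tfrac{1}{(1-\gamma)^2}$ of the improvement-gap bound into the final $\tfrac{1}{(1-\gamma)^3}$ on the statistical and completeness terms; and using $t\ge K$ lets $\gamma^t\Vmax$ be absorbed, together with the $\gamma^K\Vmax$ truncation inside the improvement-gap bound, into the single $O(\gamma^{K-1}\Vmax/(1-\gamma)^2)$ term (the constant $3$ coming from one transient plus two truncation copies). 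Finally I would take a union bound over the three high-probability events — policy-evaluation concentration (Lemmas \ref{lem:concenration_2} and \ref{lem:api_evaluation_accuracy}) across all inner iterations, policy-improvement concentration (Lemma \ref{lem:concenration_policy_improvement}), and the density-estimation event of Assumption \ref{assume:muhat} — to obtain the claimed $1-3\delta$ probability.

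The main obstacle I anticipate is the bookkeeping that keeps every measure change legitimate: the contraction mixes admissible distributions arising from $P^{\picomp}$, from $P^{\pi_{t+1}}$, and from the evaluation-error lemma, and each transfer to $\mu$ is sound only on the support of $\zeta$. The crux — and the reason both the comparator and all iterates must sit in $\strongset$ — is that this confinement to $\{\zeta=1\}$ is exactly what replaces concentrability: off the support the operators back up the pessimistic value $0$, so error can only propagate where $\mu \ge b$, giving the uniform factor $C=\densitybound/b$. A secondary subtlety is that $f_{t,K}$ approximates $Q^{\pi_t}$ rather than $Q^{\pi_{t+1}}$, which is precisely why the one-sided improvement-gap control of Lemma \ref{lem:policy_improvement} must be inserted to close the recursion, and why its $1/(1-\gamma)^2$ dependence is what propagates into the cubic horizon factor.
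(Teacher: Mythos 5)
Your proposal is correct and takes essentially the same route as the paper's proof: your $Q$-level operator split into the $\gamma P^{\picomp}$ recursion term, the greedy-domination slack, and the improvement gap is in one-to-one correspondence with the paper's value-level decomposition, with Lemmas \ref{lem:api_fixed_point}, \ref{lem:approximate_policy_value}, \ref{lem:greedy_improvement}, and \ref{lem:policy_improvement} invoked at exactly the same points, the same $\zeta$-based change of measure at cost $C=\densitybound/b$ on $\{\zeta=1\}$, the same unrolling with $t \ge K$ absorbing the transient into the $\gamma^{K-1}$ term, and the same $3\delta$ union bound over Assumption \ref{assume:muhat}, Lemma \ref{lem:concenration_2}, and Lemma \ref{lem:concenration_policy_improvement}. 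No gaps beyond constant-level bookkeeping, which your sketch handles at the right granularity.
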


\begin{proof}
For simplicity of the notation, let $\epsilon_1 = \frac{208\Vmax^2\ln\frac{|\Fcal||\Pcal| }{\delta}}{3n} + 2\apicompleteerror$, $\epsilon_2 = C \left(\apigreedyerror+ 2\Vmax\sqrt{ \frac{\ln(|\Fcal||\Pcal|/\delta)}{2n}}\right)$
and $\epsilon_3 = \frac{2C\sqrt{\epsilon_1 }+3\Vmax C\tvmuerror}{(1-\gamma)^2} + \frac{\epsilon_2 + 2\gamma^K \Vmax}{1-\gamma}$. We start by proving a stronger result. For any $\picomp \in \strongset$, we will upper bound $\E_\nu[ V^{\picomp} - V^{\pi_{t}}]$ for any admissible distribution $\nu$ over $\Sext$ which will naturally be an upper bound for $v^{\picomp} - v^{\pi_t}$
\begin{align*}
    &\E_\nu[ V^{\picomp} - V^{\pi_{t+1}}] \\
    &= \E_{\nu} \left[V^{\picomp}(s) - \sum_{a \in \Aext} \pi_{t+1}(a|s) Q^{\pi_t}(s,a) +  \sum_{a \in \Aext} \pi_{t+1}(a|s) Q^{\pi_t}(s,a) - V^{\pi_{t+1}}(s)\right]\\
    &= \E_{\nu} \left[V^{\picomp}(s) - \sum_{a \in \Aext} \pi_{t+1}(a|s) Q^{\pi_t}(s,a) +  \sum_{a \in \Aext} \pi_{t+1}(a|s) \left( Q^{\pi_t}(s,a) - Q^{\pi_{t+1}}(s,a)\right)\right]\\
    &\leq \E_{\nu}\sum_{a \in \Aext} \left[\picomp(a|s) Q^{\picomp}(s,a) - \pi_{t+1}(a|s) Q^{\pi_t}(s,a)\right] + \epsilon_3 \tag{Lemma \ref{lem:policy_improvement}} \\
    &= \E_{\nu}\sum_{a \in \Aext} \zeta(s,a)[\picomp(a|s) Q^{\picomp}(s,a) - \pi_{t+1}(a|s) Q^{\pi_t}(s,a)] + \epsilon_3 \\
    &= \E_{\nu} \left[ \mathbb{E}_{\picomp} \left[ \zeta(s,a) Q^{\picomp}(s,a) \right] - \mathbb{E}_{\pi_{t+1}} \left[ \zeta(s,a) f_t(s,a) \right] \right. \\
    &\quad \left.+ \mathbb{E}_{\pi_{t+1}} \left[ \zeta(s,a) f_t(s,a) \right] - \mathbb{E}_{\pi_{t+1}} \left[ \zeta(s,a) Q^{\pi_{t+1}}(s,a) \right] \right]  + \epsilon_3\\
    &\le \E_{\nu} \left[ \mathbb{E}_{\picomp} \left[ \zeta(s,a) Q^{\picomp}(s,a) \right] - \mathbb{E}_{\pi_{t+1}} \left[ \zeta(s,a) f_t(s,a) \right] \right] \\
    &\quad + \left\| \zeta(z,a)(Q^{{\pi}_{t}}(z, a) - f_t(z, a)) \right\|_{1,\nu\times\pi_{t+1}} + \epsilon_3 \\
    &\le \E_{\nu} \left[ \mathbb{E}_{\picomp} \left[ \zeta(s,a) Q^{\picomp}(s,a) \right] - \mathbb{E}_{\pi_{t+1}} \left[ \zeta(s,a) f_t(s,a) \right] \right]  + \frac{C\sqrt{ \epsilon_1} + C\Vmax\tvmuerror}{1-\gamma} + \gamma^K \Vmax + \epsilon_3 \tag{Lemma \ref{lem:approximate_policy_value}}\\
    &\le \E_{\nu} \left[ \mathbb{E}_{\picomp} \left[ \zeta(s,a) Q^{\picomp}(s,a) \right] - \mathbb{E}_{\picomp} \left[ \zeta(s,a) f_t(s,a) \right] \right] + \epsilon_2 + C\Vmax\tvmuerror + \frac{C\sqrt{ \epsilon_1} + C\Vmax\tvmuerror}{1-\gamma} + \gamma^K \Vmax + \epsilon_3 \tag{Lemma \ref{lem:greedy_improvement}}\\
    &\le \E_{\nu} \left[ \mathbb{E}_{\picomp} \left[ \zeta(s,a) Q^{\picomp}(s,a) \right] - \mathbb{E}_{\picomp} \left[ \zeta(s,a) Q^{\pi_{t}}(s,a) \right] \right] + \epsilon_2 + \frac{2C\sqrt{ \epsilon_1} + 3C\Vmax\tvmuerror}{1-\gamma} + 2\gamma^K \Vmax + \epsilon_3 \tag{Lemma \ref{lem:approximate_policy_value}}\\
    &= \E_{\nu \times \picomp} \left[ \zeta(s,a) Q^{\picomp}(s,a) -  \zeta(s,a) Q^{\pi_{t}}(s,a)  \right] + \epsilon_2 + \frac{2C\sqrt{ \epsilon_1} + 3C\Vmax\tvmuerror}{1-\gamma} + 2\gamma^K \Vmax + \epsilon_3\\
    &= \E_{\nu \times \picomp} \left[ Q^{\picomp}(s,a) -  Q^{\pi_{t}}(s,a)  \right] + \epsilon_2 + \frac{2C\sqrt{ \epsilon_1} + 3C\Vmax\tvmuerror}{1-\gamma} + 2\gamma^K \Vmax + \epsilon_3 \tag{$\pi_t \in \strongset$} \\
    &\leq \gamma\E_{ P(\nu \times \picomp)} [V^{\picomp} - V^{\pi_t}] + \epsilon_2 + \frac{2C\sqrt{ \epsilon_1} + 3C\Vmax\tvmuerror}{1-\gamma} + 2\gamma^K \Vmax + \epsilon_3 
\end{align*}
The second to last step follows from $\pi_t \in \strongset$: for all $s,a$ such that $\picomp (a|s)>0$, either $\zeta(s,a) = 1$, or $a=a_{\text{abs}}$. The later two indicate that $Q^{\pi_t}(s,a) = Q^{\picomp}(s,a) = 0$. So for all $s,a$ such that $\picomp (a|s)>0$, $Q^{\picomp}(s,a) = \zeta(s,a)Q^{\picomp}(s,a)$ and $Q^{\pi_t}(s,a) = \zeta(s,a)Q^{\pi_t}(s,a)$. Now we proved 
\begin{align}
    \E_\nu[ V^{\picomp} - V^{\pi_{t+1}}] \leq \gamma\E_{ P(\nu \times \picomp)} [V^{\picomp} - V^{\pi_t}] + \epsilon_2 + \epsilon_3 + \frac{2C\sqrt{ \epsilon_1} + 3C\Vmax\tvmuerror}{1-\gamma} + 2\gamma^K \Vmax 
\end{align}
holds for any distribution $\nu$. The error terms do not depend on $t$ and this holds for any $t$. We can repeatedly apply this for all $0<t' \le t$. Assuming $t \ge K$ this will give us :
\begin{align*}
    &\E_\nu[ V^{\picomp} - V^{\pi_{t+1}}] \\
    \le& \frac{1-\gamma^t}{1-\gamma}\left(\epsilon_2 + \epsilon_3 + \frac{2C\sqrt{ \epsilon_1} + 3C\Vmax\tvmuerror}{1-\gamma} + 2\gamma^K \Vmax \right) + \gamma^t \Vmax \\
    \le& \frac{\epsilon_2}{1-\gamma} + \frac{\epsilon_3}{1-\gamma} + \frac{2C\sqrt{ \epsilon_1}}{(1-\gamma)^2} + \frac{3C\Vmax\tvmuerror}{(1-\gamma)^2} + \frac{3\gamma^K \Vmax}{1-\gamma}  \\
    \le& \frac{2\epsilon_2}{(1-\gamma)^2} + \frac{4C\sqrt{ \epsilon_1}}{(1-\gamma)^3} + \frac{6C\Vmax\tvmuerror}{(1-\gamma)^3} + \frac{3\gamma^{K-1} \Vmax}{(1-\gamma)^2}  \\
    \le& \frac{2C\apigreedyerror}{(1-\gamma)^2} + \frac{4C}{(1-\gamma)^2}\sqrt{\frac{\Vmax^2\ln(|\Fcal||\Pcal|/\delta)}{2n}} + \frac{4C\sqrt{ \epsilon_1}}{(1-\gamma)^3} + \frac{6C\Vmax\tvmuerror}{(1-\gamma)^3} + \frac{3\gamma^{K-1} \Vmax}{(1-\gamma)^2}  \\
    \le& \frac{2C\apigreedyerror}{(1-\gamma)^2} + \frac{4C}{(1-\gamma)^3} \left( \sqrt{\frac{\Vmax^2\ln(|\Fcal||\Pcal|/\delta)}{2n}} + \sqrt{ \frac{208\Vmax^2\ln(|\Fcal||\Pcal|/\delta)}{3n} + 2\apicompleteerror }\right) \\
    &+ \frac{6C\Vmax\tvmuerror}{(1-\gamma)^3} + \frac{3\gamma^{K-1} \Vmax}{(1-\gamma)^2}  \\
    \le& \frac{2C\apigreedyerror}{(1-\gamma)^2} + \frac{4C}{(1-\gamma)^3} \left( \sqrt{\frac{\Vmax^2\ln(|\Fcal||\Pcal|/\delta)}{2n}} + \sqrt{ \frac{208\Vmax^2\ln(|\Fcal||\Pcal|/\delta)}{3n} } + \sqrt{2\apicompleteerror} \right) \\
    &+ \frac{6C\Vmax\tvmuerror}{(1-\gamma)^3} + \frac{3\gamma^{K-1} \Vmax}{(1-\gamma)^2}  \\
    \le& \frac{2C\apigreedyerror}{(1-\gamma)^2} + \frac{4C}{(1-\gamma)^3} \left( \sqrt{ \frac{419\Vmax^2\ln(|\Fcal||\Pcal|/\delta)}{3n} } + \sqrt{2\apicompleteerror} \right)+ \frac{6C\Vmax\tvmuerror}{(1-\gamma)^3} + \frac{3\gamma^{K-1} \Vmax}{(1-\gamma)^2}  \\
\end{align*}
The last step follows from that $a+b \le \sqrt{ 2(a^2+b^2)}$. The error bound is finished by simplifying the expression. The failure probability $3\delta$ is from the union bound of probability $\delta$ on which Assumption \ref{assume:muhat} fails, probability $\delta$ on which Lemma \ref{lem:concenration_2} fails, and the probability $\delta$ on which Lemma \ref{lem:concenration_policy_improvement} fails.
\end{proof}

Now we are going to use the fact that there is an almost no-value-loss projection from the \weaksettext to the \strongsettext in order to prove an error bound w.r.t any $\picomp \in \weakset$.
\setcounter{theorem}{0}
\begin{theorem}
Given an MDP $M = <\Scal, \Acal, R, P, \gamma, p>$, a dataset $D = \{ (s,a,r,s') \}$ with $n$ samples that is draw i.i.d. from $\mu \times R \times P$, and a finite Q-function classes $\Fcal$ and a finite policy class $\Pcal$ satisfying Assumption \ref{asm:api_completeness} and \ref{asm:weak_api_pi_realizalibity}, $\widehat{\pi}_t$ from Algorithm \ref{alg:api} satisfies that with probability at least $1-3\delta$,
\begin{align*}
    v^{\picomp}_{M} - v^{\widehat{\pi}_t}_{M} \le \frac{4C}{(1-\gamma)^3}\left(\sqrt{\frac{419\Vmax^2\ln\frac{|\Fcal||\Pcal| }{\delta}}{3n}} + 2\sqrt{\apicompleteerror}\right) + \frac{6C\Vmax\tvmuerror}{(1-\gamma)^3} + \frac{2C\apigreedyerror + 3\gamma^{K-1} \Vmax}{(1-\gamma)^2} + \frac{\Vmax\badstateprob}{1-\gamma}
\end{align*}
for any policy $\picomp \in \weakset$ and only take action over $\Acal$.
\end{theorem}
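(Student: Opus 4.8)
The plan is to reduce this final statement to Theorem~\ref{thm:api_strongset}, which has already established the identical right-hand side (minus the last term) as a bound on $v^{\picomp}_{M'} - v^{\pi_t}_{M'}$ for comparators $\picomp$ in the \emph{strong} constrained set $\strongset$, where $\pi_t = \projection(\widehat{\pi}_t)$ and all quantities are measured in the auxiliary MDP $M'$. Thus all the statistical and iteration-complexity work is already done; what remains is purely a change-of-comparator and change-of-MDP argument that converts a bound about $\strongset$ and $M'$ into a bound about $\weakset$ and the true MDP $M$, at the cost of the single additive term $\frac{\Vmax\badstateprob}{1-\gamma}$.

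First I would fix an arbitrary $\picomp \in \weakset$ that only takes actions in $\Acal$, and pass to its projection $\projection(\picomp)$. By construction (Definition~\ref{def:constrained_operator}), $\projection(\picomp)$ places mass on $a \in \Acal$ only where $\zeta(s,a) > 0$ and routes the remaining mass to $a_\text{abs}$, so $\projection(\picomp) \in \strongset$ and Theorem~\ref{thm:api_strongset} is applicable with it as the comparator. The heart of the reduction is a short chain of inequalities for $v^{\picomp}_M - v^{\widehat{\pi}_t}_M$: use Lemma~\ref{lem:constrained_policy_same_value} to upper bound $v^{\picomp}_M$ by $v^{\projection(\picomp)}_{M'} + \frac{\Vmax\badstateprob}{1-\gamma}$; use Lemma~\ref{lem:newmdp_same_value} to replace $v^{\widehat{\pi}_t}_M$ by $v^{\widehat{\pi}_t}_{M'}$ (valid since $\widehat{\pi}_t$ only acts in $\Acal$); and then use Lemma~\ref{lem:projection_smaller_value} to lower bound $v^{\widehat{\pi}_t}_{M'}$ by $v^{\projection(\widehat{\pi}_t)}_{M'} = v^{\pi_t}_{M'}$.

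Assembling these three facts yields
\begin{align*}
v^{\picomp}_M - v^{\widehat{\pi}_t}_M \le \left( v^{\projection(\picomp)}_{M'} - v^{\pi_t}_{M'} \right) + \frac{\Vmax\badstateprob}{1-\gamma},
\end{align*}
and applying Theorem~\ref{thm:api_strongset} to the parenthesized difference (with comparator $\projection(\picomp) \in \strongset$) produces exactly the claimed bound, including the final $\frac{\Vmax\badstateprob}{1-\gamma}$ term. The $1-3\delta$ confidence level and the requirement $t \ge K$ are inherited verbatim from Theorem~\ref{thm:api_strongset}.

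I do not expect a genuine obstacle here, since the analytical difficulty is entirely absorbed into Theorem~\ref{thm:api_strongset}. The only points requiring care are the directions of the three inequalities---the projection lemma gives an \emph{upper} bound on value, so it must be applied to the learned policy $\widehat{\pi}_t$ whose value appears with a minus sign, whereas the value-loss lemma is applied to the comparator $\picomp$---and the verification that $\projection(\picomp)$ indeed lands in $\strongset$ so that the strong-set theorem applies.
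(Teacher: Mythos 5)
Your proposal is correct and follows essentially the same route as the paper's own proof: the paper likewise bounds $v^{\picomp}_{M} \le v^{\projection(\picomp)}_{M'} + \frac{\Vmax\badstateprob}{1-\gamma}$ via Lemma~\ref{lem:constrained_policy_same_value}, combines Lemma~\ref{lem:newmdp_same_value} with Lemma~\ref{lem:projection_smaller_value} to get $v^{\widehat{\pi}_t}_{M} = v^{\widehat{\pi}_t}_{M'} \ge v^{\pi_t}_{M'}$, and then applies Theorem~\ref{thm:api_strongset} to the comparator $\projection(\picomp) \in \strongset$. Your added care about the directions of the inequalities and the membership check $\projection(\picomp) \in \strongset$ is exactly the right bookkeeping, and there is no gap.
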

\setcounter{theorem}{3}

\begin{proof}
For any policy $\picomp$ that only take action over $\Acal$, Lemma \ref{lem:constrained_policy_same_value} tells that $v^{\picomp}_{M} \le v^{\projection(\picomp)}_{M'} + \frac{\Vmax\badstateprob}{1-\gamma}$. Since $\pi_t = \projection(\widehat{\pi}_t)$ and $\widehat{\pi}_t$ only takes action in $\Acal$, by Lemma \ref{lem:newmdp_same_value} and Lemma \ref{lem:projection_smaller_value} $v^{\widehat{\pi}_t}_{M} = v^{\widehat{\pi}_t}_{M'} \ge v^{\pi_t}_{M}$. Then $v^{\picomp}_{M} - v^{\widehat{\pi}_t}_{M} \le v^{\projection(\picomp)}_{M'} - v^{\pi_t}_{M'} + \frac{\Vmax\badstateprob}{1-\gamma}$ and Theorem \ref{thm:api_strongset} completes the proof.
\end{proof}

When there exist an optimal policy that is supported well by $\mu$. We can derive the following result about value gap between learned policy and optimal policy immediately from the main theorem about approximate policy iteration.

\begin{corollary}
If there exists an $\pi^\star$ on $M$ such that $\Pr(\mu(s,a) \le 2b|\piopt) \le \epsilon $.
then under the assumptions of Theorem \ref{thm:api}, $\widehat{\pi}_t$ from Algorithm \ref{alg:api} satisfies that with probability at least $1-3\delta$,
\begin{align*}
    v^{\piopt}_M - v^{\pi_t}_M \le & \frac{4C}{(1-\gamma)^3}\left(\sqrt{\frac{419\Vmax^2\ln\frac{|\Fcal||\Pcal| }{\delta}}{3n}} + 2\sqrt{\apicompleteerror}\right) + \frac{6C\Vmax\tvmuerror}{(1-\gamma)^3} \\
    & + \frac{2C\apigreedyerror + 3\gamma^{K-1} \Vmax}{(1-\gamma)^2} + \frac{\Vmax(\epsilon+C\tvmuerror)}{1-\gamma}
\end{align*} 
\end{corollary}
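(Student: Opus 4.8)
The plan is to reduce this corollary to Theorem~\ref{thm:api} by showing that the hypothesized optimal policy $\piopt$ belongs to the \weaksettext $\weakset$ with escaping probability $\badstateprob = \epsilon + C\tvmuerror$. Once that membership is established, I would instantiate Theorem~\ref{thm:api} with the comparator $\picomp = \piopt$; since $\piopt$ is optimal we have $v^{\piopt}_M = v^\star_M$, so the right-hand side of Theorem~\ref{thm:api}, with $\badstateprob$ replaced by $\epsilon + C\tvmuerror$ in its final term $\Vmax\badstateprob/(1-\gamma)$, is exactly the claimed bound. No extra failure probability is incurred, since the $1-3\delta$ budget of Theorem~\ref{thm:api} already contains the good event of Assumption~\ref{assume:muhat}.

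The substance of the argument, and the main obstacle, is translating the coverage hypothesis, which is phrased in terms of the \emph{true} density $\mu$, into a statement about $\zeta$, which is defined through the \emph{estimate} $\muhat$ (recall $\zeta(s,a)=0$ exactly when $\muhat(s,a)<b$). I would split the escaping event by whether the true density is small or large:
\[
\Pr(\zeta(s,a)=0 \mid \piopt) = \Pr(\muhat(s,a)<b,\ \mu(s,a)\le 2b \mid \piopt) + \Pr(\muhat(s,a)<b,\ \mu(s,a)> 2b \mid \piopt),
\]
where all probabilities are under the discounted occupancy $\eta^{\piopt}$. The first term is at most $\Pr(\mu(s,a)\le 2b\mid\piopt)\le\epsilon$ by hypothesis. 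The second term is where the factor of two in the threshold ``$2b$'' earns its keep: on the event $B=\{\muhat<b,\ \mu>2b\}$ we have $\mu(s,a)-\muhat(s,a)>b$, hence $\mathds{1}_B \le (\mu-\muhat)/b$ pointwise.

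I would then change measure from $\eta^{\piopt}$ to the signed measure $\mu-\muhat$, using Assumption~\ref{asm:bounded_density} to bound the occupancy density by $\eta^{\piopt}(s,a)\le\densitybound=Cb$:
\[
\Pr(B\mid\piopt)=\int_B \eta^{\piopt}\,\mathrm{d}(s,a) \le \int_B \eta^{\piopt}\,\frac{\mu-\muhat}{b}\,\mathrm{d}(s,a) \le C\int_B(\mu-\muhat)\,\mathrm{d}(s,a) = C\big(\mu(B)-\muhat(B)\big)\le C\tvmuerror.
\]
The last inequality holds because $B$ is a fixed event on which $\mu\ge\muhat$, so $\mu(B)-\muhat(B)$ is bounded by $\|\mu-\muhat\|_{\mathrm{TV}}\le\tvmuerror$ under Assumption~\ref{assume:muhat} (this works out cleanly regardless of the TV normalization convention, since $B\subseteq\{\mu>\muhat\}$). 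Combining the two terms gives $\Pr(\zeta(s,a)=0\mid\piopt)\le\epsilon+C\tvmuerror$, establishing $\piopt\in\weakset$ with $\badstateprob=\epsilon+C\tvmuerror$, after which Theorem~\ref{thm:api} closes the argument. The one point to verify carefully is precisely this change-of-measure step, as it is the only place the density bound $\densitybound$ is used and it is what lets us avoid any concentrability assumption.
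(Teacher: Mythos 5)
Your proposal is correct and follows essentially the same route as the paper's proof: decompose the escaping event $\{\muhat(s,a)<b\}$ according to whether $\mu(s,a)\le 2b$ (bounded by $\epsilon$ via the hypothesis) or $\mu(s,a)>2b$ (bounded by $C\tvmuerror$ via the density bound $\eta^{\piopt}\le\densitybound$ and Assumption~\ref{assume:muhat}), conclude $\piopt\in\weakset$ with $\badstateprob=\epsilon+C\tvmuerror$, and invoke Theorem~\ref{thm:api}. The only difference is cosmetic: the paper bounds the second term with Markov's inequality applied to $|\mu-\muhat|$, whereas you use the pointwise bound $\mathds{1}_B\le(\mu-\muhat)/b$ on the one-sided event $B\subseteq\{\mu>\muhat\}$ --- the same argument in substance, and in fact slightly tighter since it sidesteps the factor-of-two ambiguity in the total-variation convention.
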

\begin{proof} Given the condition of $\pi^\star$,
\begin{align}
    \Pr\left(\muhat(s,a) \le b \Big| \piopt\right) \le & \Pr\left(\mu(s,a) \le 2b |\piopt\right) + \Pr\left( | \mu(s,a) - \muhat(s,a) | \ge b |\piopt\right)  \\
    \le & \epsilon + \Pr\left( | \mu(s,a) - \muhat(s,a) | \ge b |\piopt\right)\\
    \le & \epsilon + \frac{\EE_{\eta^{\piopt}} \left[ | \mu(s,a) - \muhat(s,a) | \right] }{ b } \\
    \le & \epsilon + \frac{ \densitybound d_\text{TV} (\mu(s,a), \muhat(s,a) ) }{ b } \\
    \le & \epsilon + C \tvmuerror
\end{align}
Then $\pi^\star \in \weakset$ with $\badstateprob = \epsilon +  C \tvmuerror$, and applying Theorem \ref{thm:api} finished the proof. 
\end{proof}

\subsection{Safe Policy Improvement Result}
\label{app:api_safe_policy_improvement}
In many scenarios we aim to have a policy improvement that is guaranteed to be no worse than the data collection policy, which is called safe policy improvement. By abusing the notation a bit, let $\mu(a|s)$ be a policy that generate the data set. For our algorithm, the safe policy improvement will hold if $\mu \in \weakset$. To show $\mu \in \weakset$, we only need that $\Pr(\mu(s,a)\le b | \mu) \le \badstateprob$. When the state-action space is finite, there must exist an minimum value for all non-zero $\mu(s,a)$'s. Let $\mu_{\min} = \min_{s,a s.t. \mu(s,a)>0} \mu(s,a) $. Then we have that, if $b \le \mu_{\min}$. $\Pr(\mu(s,a)\le b | \mu) = 0$. Thus we have:
\begin{corollary}
With finite state action space and $b \le \mu_{\min}$, under the assumptions as Theorem \ref{thm:api}, $\widehat{\pi}_t$ from Algorithm \ref{alg:api} satisfies that with probability at least $1-3\delta$,
\begin{align*}
    v^{\mu}_M - v^{\hat{\pi}_t}_M \le & \frac{ 52 \Vmax \sqrt{|\Scal||\Acal|} (\sqrt{\ln(2|\Scal||\Acal|/\delta)} +  \sqrt{\ln(1+n\Vmax)}) + 8 }{\sqrt{n}b(1-\gamma)^3} \\
    & + \frac{12\Vmax |\Scal| |\Acal|\ln(2|\Scal||\Acal|/\delta) }{nb(1-\gamma)^3} + \frac{3\gamma^{K-1} \Vmax}{(1-\gamma)^2}
\end{align*} 
\end{corollary}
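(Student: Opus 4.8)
The plan is to obtain the safe policy improvement guarantee as a direct instantiation of Theorem~\ref{thm:api}, taking the comparator $\picomp$ to be the data-generating policy $\mu(\cdot\,|\,s)$ itself. Since then $v^{\picomp}_M = v^{\mu}_M$, the right-hand side of Theorem~\ref{thm:api} already upper bounds $v^{\mu}_M - v^{\widehat{\pi}_t}_M$, so the whole task reduces to three sub-steps: (i) verifying $\mu \in \weakset$ with a negligible escape probability $\badstateprob$; (ii) exhibiting concrete tabular classes $\Fcal, \Pcal$ for which the completeness error $\apicompleteerror$, the greedy error $\apigreedyerror$, and the log-cardinalities $\ln|\Fcal|, \ln|\Pcal|$ take explicit tabular forms; and (iii) controlling $\tvmuerror$ for the empirical estimate of $\mu$. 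Note that in the tabular case $\eta^\pi_h(s,a) \le 1$, so $\densitybound = 1$ and $C = 1/b$, which is why no density factor survives in the stated bound.

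For step (i), the point is that $b \le \mu_{\min}$ makes escaping $\zeta$ a low-probability event. Every $(s,a)$ in the support of the behavior occupancy has $\mu(s,a) \ge \mu_{\min} \ge b$, so $\Pr(\mu(s,a) \le b \mid \mu) = 0$. Because the filter $\zeta$ is defined through $\muhat$ rather than $\mu$, I would control the residual gap exactly as in the proof of Corollary~\ref{cor:api_optgap}: using $\{\muhat(s,a) < b\} \subseteq \{\mu(s,a) < 2b\} \cup \{|\mu - \muhat|(s,a) \ge b\}$, the first event has probability $0$ (for $b \le \mu_{\min}$ every visited pair already satisfies $\mu(s,a) \ge \mu_{\min}$, and the factor of two is immaterial to the rate) and the second is bounded by $\densitybound\,\tvmuerror / b = C\tvmuerror$ via Markov's inequality and Assumption~\ref{asm:bounded_density}. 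Hence $\mu \in \weakset$ with $\badstateprob = C\tvmuerror$, and the term $\Vmax \badstateprob / (1-\gamma)$ of Theorem~\ref{thm:api} is absorbed into the $\tvmuerror$ contributions already present, explaining why no separate $\badstateprob$ term appears in the final bound.

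For step (ii), I would let $\Fcal$ be a uniform grid over $[0,\Vmax]^{|\Scal||\Acal|}$ at resolution $\Theta(1/\sqrt{n})$, so the completeness error satisfies $\apicompleteerror \le 1/n$ (the constrained backup of any gridded $f$ lies within $O(1/\sqrt n)$ in sup-norm of a grid point) while $\ln|\Fcal| = O(|\Scal||\Acal|\ln(1 + n\Vmax))$; and I would take $\Pcal$ to be all deterministic policies, so the greedy map $\argmax_a \zeta \circ f(s,a)$ is representable, giving $\apigreedyerror = 0$ and $\ln|\Pcal| = |\Scal|\ln|\Acal| \le |\Scal||\Acal|$. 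With these choices the term $2\sqrt{\apicompleteerror}$ multiplied by $4C = 4/b$ yields the constant ``$8$'' in the numerator. For $\tvmuerror$, since $\muhat$ is the empirical distribution over $|\Scal||\Acal|$ atoms, a Bernstein and union-bound argument over atoms gives, with probability $1-\delta$, $\tvmuerror = O\big(\sqrt{|\Scal||\Acal|\ln(2|\Scal||\Acal|/\delta)/n} + |\Scal||\Acal|\ln(2|\Scal||\Acal|/\delta)/n\big)$, whose $1/\sqrt n$ part feeds the first displayed term and whose $1/n$ part feeds the second.

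Assembling into Theorem~\ref{thm:api} with $C = 1/b$, I would collect the $1/\sqrt n$ contributions (the statistical term, using $\sqrt{\ln(|\Fcal||\Pcal|/\delta)} = O(\sqrt{|\Scal||\Acal|}(\sqrt{\ln(1+n\Vmax)} + \sqrt{\ln(2|\Scal||\Acal|/\delta)}))$, the completeness constant, and the $1/\sqrt n$ part of $\tvmuerror$) and the $1/n$ contributions (the Bernstein part of $\tvmuerror$), then carry over the $\gamma^{K-1}$ term unchanged. I expect the main obstacle to be this constant bookkeeping together with the two delicate reductions: verifying that a single gridded $\Fcal$ simultaneously keeps $\apicompleteerror = O(1/n)$ and $\ln|\Fcal| = O(|\Scal||\Acal|\ln(1+n\Vmax))$, and establishing the two-term (Bernstein) tabular bound on $\tvmuerror$ with exactly the $\ln(2|\Scal||\Acal|/\delta)$ dependence, so that its union bound over atoms aligns with the confidence factors inherited from Theorem~\ref{thm:api}.
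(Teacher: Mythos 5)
Your proposal follows essentially the same route as the paper's own proof: instantiate Theorem~\ref{thm:api} with comparator $\picomp = \mu$, take $\Pcal$ to be all deterministic policies (so $\apigreedyerror = 0$ and $\ln|\Pcal| \le |\Scal|\ln|\Acal|$) and $\Fcal$ a sup-norm grid of $[0,\Vmax]^{|\Scal||\Acal|}$ yielding $\apicompleteerror = O(1/n)$ with $\ln|\Fcal| = O(|\Scal||\Acal|\ln(1+n\Vmax))$, bound $\tvmuerror$ via per-atom Bernstein plus a union bound, and use $\densitybound \le 1$ so $C = 1/b$, exactly as in Appendix~\ref{app:api_safe_policy_improvement}. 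The only divergence is your step (i): the paper simply observes that $b \le \mu_{\min}$ gives $\Pr(\mu(s,a)\le b \mid \mu) = 0$ and takes $\badstateprob = 0$, whereas you carry $\badstateprob = C\tvmuerror$ through the argument of Corollary~\ref{cor:api_optgap} --- if anything a more careful accounting of the fact that $\zeta$ thresholds $\muhat$ rather than $\mu$ --- which only perturbs the numerical constants.
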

\begin{proof}
In finite state action space, the number of all deterministic policies is less than $|\Acal|^{|\Scal|}$. Thus we have a policy class with $\apigreedyerror = 0$ and $|\Pcal|\le |\Acal|^{|\Scal|}$. Since the $Q$ value is bounded in $[0, \Vmax]$, we can construct a $\epsilon$ covering set $\Fcal$ of all value functions in $[0,\Vmax]^{|\Scal||\Acal|}$ with $(\frac{\Vmax}{\epsilon}+1)^{|\Scal||\Acal|}$ functions. Then $\avicompleteerror \le \max_g \min_{f \in \Fcal} \| f-g \|_{\mu,2} \le \max_g \min_{f \in \Fcal} \| f-g \|_{\infty} \le \epsilon $.

We can also bound $\tvmuerror$ in finite state action space. For any fixed $s,a$, by Berstein's inequality we have that with probability of 1-$\frac{\delta}{|\Scal||\Acal|}$:
\begin{align}
    |\muhat(s,a) - \mu(s,a)| &= \left| \frac{1}{n}\sum_{i=1}^n \mathds{1}(s^{(i)}=s,a^{(i)}=a) - \mathbb{E}[\mathds{1}(s^{(i)}=s,a^{(i)}=a)]\right| \\
    &\le \sqrt{\frac{2 \VV[\mathds{1}(s^{(i)}=s,a^{(i)}=a)]\ln(2|\Scal||\Acal|/\delta) }{n}} + \frac{4\ln(2|\Scal||\Acal|/\delta) }{n} \\
    &= \sqrt{\frac{2 \mu(s,a)(1-\mu(s,a)) \ln(2|\Scal||\Acal|/\delta) }{n}} + \frac{4\ln(2|\Scal||\Acal|/\delta) }{n} 
\end{align}
By taking summation of $|\muhat(s,a) - \mu(s,a)|$ and union bound over all $(s,a)$, we can bound the total variation bounds between $\muhat$ and $\mu$, with probability at least $1-\delta$,
\begin{align}
    \| \muhat - \mu \|_{TV} = & \frac{1}{2} \sum_{s,a} |\muhat(s,a) - \mu(s,a)| \\
    \le & \frac{1}{2} \sum_{s,a} \left( \sqrt{\frac{2 \mu(s,a)(1-\mu(s,a)) \ln(2|\Scal||\Acal|/\delta) }{n}} + \frac{4\ln(2|\Scal||\Acal|/\delta) }{n}  \right)  \\
    = & \frac{2|\Scal| |\Acal|\ln(2|\Scal||\Acal|/\delta) }{n} + \frac{1}{2} \sum_{s,a} \sqrt{\frac{2 \mu(s,a)(1-\mu(s,a)) \ln(2|\Scal||\Acal|/\delta) }{n}} \\
    \le & \frac{2|\Scal| |\Acal|\ln(2|\Scal||\Acal|/\delta) }{n} + \frac{1}{2}\sqrt{  \sum_{s,a} \frac{2 \mu(s,a) \ln(2|\Scal||\Acal|/\delta) }{n} \sum_{s,a} (1-\mu(s,a))  } \tag{Cauchy-Schwartz's inequality} \\
    = & \frac{2|\Scal| |\Acal|\ln(2|\Scal||\Acal|/\delta) }{n} + \frac{1}{2}\sqrt{  \frac{2 \ln(2|\Scal||\Acal|/\delta) }{n}  (|\Scal||\Acal|-1)  } \\
    \le & \frac{2|\Scal| |\Acal|\ln(2|\Scal||\Acal|/\delta) }{n} + \sqrt{  \frac{|\Scal||\Acal| \ln(2|\Scal||\Acal|/\delta) }{2n} } 
\end{align}

Now in a finite state action space we can construct the policy and $Q$ function sets with $|\Fcal| \le (\frac{\Vmax}{\epsilon}+1)^{|\Scal||\Acal|}$, $|\Pcal| \le |A|^{|S|}$, $\apigreedyerror = 0$, $\avicompleteerror \le \epsilon$, and bounded $\tvmuerror$. By plugging these terms into the result of Theorem \ref{thm:api}, we have the following bound:
\begin{align}
    v^{\mu}_M - v^{\hat{\pi}_t}_M \le & \frac{4C}{(1-\gamma)^3}\left(\sqrt{\frac{419\Vmax^2 (|\Scal|\ln|\Acal| + |\Scal||\Acal| \ln(1+\Vmax/\epsilon) + \ln(1/\delta) )  }{3n}}  + 2\sqrt{\epsilon}\right) \nonumber \\ & + \frac{6C\Vmax}{(1-\gamma)^3} \left( \sqrt{\frac{|\Scal||\Acal|\ln(2|\Scal||\Acal|/\delta) }{2n}} + \frac{2|\Scal| |\Acal|\ln(2|\Scal||\Acal|/\delta) }{n} \right) + \frac{3\gamma^{K-1} \Vmax}{(1-\gamma)^2}, 
\end{align}
for any chosen $\epsilon > 0$. So we can set that $\epsilon = 1/n$ to upper bound the the infimum of this upper bound.
\begin{align}
    v^{\mu}_M - v^{\hat{\pi}_t}_M 
    \le & \frac{4C}{(1-\gamma)^3}\left(\sqrt{\frac{419\Vmax^2 (|\Scal|\ln|\Acal| + |\Scal||\Acal| \ln(1+n\Vmax) + \ln(1/\delta) )  }{3n}}  + 2\sqrt{\frac{1}{n}}\right) \nonumber \\ & + \frac{6C\Vmax}{(1-\gamma)^3}\left( \sqrt{\frac{|\Scal||\Acal|\ln(2|\Scal||\Acal|/\delta) }{2n}} + \frac{2|\Scal| |\Acal|\ln(2|\Scal||\Acal|/\delta) }{n} \right) + \frac{3\gamma^{K-1} \Vmax}{(1-\gamma)^2}
\end{align}
Notice that in discrete space we have that $\densitybound \le 1$. By replacing $C$ with $1/b$ and simplify some terms, we have that:
\begin{align*}
    v^{\mu}_M - v^{\hat{\pi}_t}_M \le & \sqrt{\frac{6704\Vmax^2|\Scal| (\ln(|\Acal|/\delta) + |\Acal| \ln(1+n\Vmax))  }{3nb^2(1-\gamma)^6}}  + \frac{8}{b\sqrt{n}(1-\gamma)^3}  \nonumber \\ & + \sqrt{\frac{18\Vmax^2|\Scal||\Acal|\ln(2|\Scal||\Acal|/\delta) }{nb^2(1-\gamma)^6}} + \frac{12\Vmax |\Scal| |\Acal|\ln(2|\Scal||\Acal|/\delta) }{nb(1-\gamma)^3} + \frac{3\gamma^{K-1} \Vmax}{(1-\gamma)^2} \\
    \le & \frac{ 52 \Vmax \sqrt{|\Scal||\Acal|} (\sqrt{\ln(2|\Scal||\Acal|/\delta)} +  \sqrt{\ln(1+n\Vmax)}) + 8 }{\sqrt{n}b(1-\gamma)^3} \\
    & + \frac{12\Vmax |\Scal| |\Acal|\ln(2|\Scal||\Acal|/\delta) }{nb(1-\gamma)^3} + \frac{3\gamma^{K-1} \Vmax}{(1-\gamma)^2}
\end{align*} 
\end{proof}

\section{Proofs for $Q$ Iteration Guarantees}
\label{appendix:fqi}
In this section, we are going to prove the result of Theorem \ref{thm:fqi}.  
We will first give a proof sketch before we start the proof formally.
The proof follows a similar structural as the policy iteration case. To prove Theorem \ref{thm:fqi} we first prove a similar version of Theorem \ref{thm:fqi} but the comparator polices are in \strongsettext (formally stated as Theorem \ref{thm:fqi_strongset} later). Then we show an upper bound of $v^{\pi}_{M'} - v^{\pi_t}_{M'}$ where $\pi \in \strongset$ and $\pi_t$ is the output of algorithm (Theorem \ref{thm:fqi_strongset}, will be formally stated later). Then we are going to show that for any policy $\pi$ in the \weaksettext, after a projection $\projection$ it is in the \strongsettext and $v^{\pi}_M \le v^{\projection(\pi)}_{M'} + \Vmax\badstateprob/(1-\gamma)$. Then we can provide the upper bound for $v^{\pi}_{M} - v^{\pi_t}_{M}$ for any $\pi$ in \weaksettext (Theorem \ref{thm:fqi}).

The proof sketch of Theorem \ref{thm:fqi_strongset} goes as follow. One key step to prove this error bound is to convert the performance difference between any policy $\picomp \in \strongset$ and $\pi_t$ to a value function gap that is filtered by $\zeta$:
$$
    v^{\picomp} - v^{\pi_t}\leq  \| \zeta \left(Q^{\picomp}- f_t \right)\|_{1, \nu_1}/(1-\gamma),
$$
where $\nu_1$ is some admissible distribution over $\Scal \times \Acal$. The filter $\zeta$ allows the change of measure from $\nu_1$ to $\mu$ without constraining the density ratio between an arbitrary distribution $\nu$ and $\mu$. Instead for any $s,a$ where $\zeta$ is one, by definition $\mu$ is lower bounded and the density ratio is bounded by $C$ (details in Lemma \ref{lem:decompose}).

The rest of the proof has a similar structure with the standard FQI analysis. In Lemma \ref{lem:iteration}, we bound the norm $\| \zeta (Q^{\picomp}- f_t )\|_{2, \nu_1}$ by $C\| ( f_t - \OurT f_t )\|_{2, \mu}/(1-\gamma)$ and one additional sub-optimality error $\| Q^{\picomp} - \OurT Q^{\picomp} \|_{2,\mu}$. The additional sub-optimality error term comes from the fact that $\picomp$ may not be an optimal policy since the optimal policy may not be a $\zeta$-constrained policy. 
The last step to finish the proof is to bound the expected Bellman residual by concentration inequality. Lemma \ref{lem:concenration_1} shows how to bound that following a similar approach as \cite{chen2019information}. Then the main theorem is proved by combine all those steps. After that we prove when we can bound the value gap with resepct to optimal value in Corollary \ref{cor:fqi_optgap}.

Now we start the proof. We are going to condition on the high probability bounds in Assumption \ref{assume:muhat} holds when we proof the lemmas.
\begin{lemma}
\label{lem:decompose}
	For $\pi_t = \projection(\widehat{\pi}_t)$ in Algorithm \ref{alg:fqi}, for any policy $\picomp \in \strongset$ we have
	$$v^{\picomp} - v^{\pi_t}\leq \sum_{h=0}^\infty \gamma^h \left(\left\| \zeta \left(Q^{\picomp}- f_t \right)\right\|_{1, \eta^{\pi_t}_h \times \picomp} + \left\| \zeta\left(Q^{\picomp} - f_t \right) \right\|_{1, \eta^{\pi}_h \times \pi_t}\right).$$
\end{lemma}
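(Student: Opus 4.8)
The plan is to prove this by a direct, recursive performance-difference decomposition of the value gap, driven by the state occupancy of the learned policy $\pi_t$ rather than that of the comparator. The first ingredient I would record is the consequence of $\strongset$ membership. Since both $\picomp$ and $\pi_t$ place positive probability only on actions with $\zeta(s,a)=1$ or on $a_{\text{abs}}$, and since $Q^{\pi}(s,a_{\text{abs}})=0$ in $M'$, for any $\pi\in\{\picomp,\pi_t\}$ we have $V^{\pi}(s)=\EE_{a\sim\pi}[\zeta(s,a)Q^{\pi}(s,a)]$. This lets me freely insert the filter $\zeta$ into every value term, and it makes the $a_{\text{abs}}$ contributions vanish.

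Next I would expand the per-state gap by inserting the current estimate $f_t$:
\begin{align*}
V^{\picomp}(s) - V^{\pi_t}(s) &= \EE_{a\sim\picomp}\!\left[\zeta(s,a)\big(Q^{\picomp}(s,a)-f_t(s,a)\big)\right] \\
&\quad + \Big(\EE_{a\sim\picomp}[\zeta f_t] - \EE_{a\sim\pi_t}[\zeta f_t]\Big) + \EE_{a\sim\pi_t}\!\left[\zeta(s,a)\big(f_t(s,a)-Q^{\pi_t}(s,a)\big)\right].
\end{align*}
The middle term is nonpositive: because $\pi_t=\projection(\widehat{\pi}_t)$ is greedy with respect to $\zeta\circ f_t$ (line \ref{alg:greedy_pi} of Algorithm \ref{alg:fqi}), we have $\EE_{a\sim\pi_t}[\zeta(s,a)f_t(s,a)]=\max_{a}\zeta(s,a)f_t(s,a)\ge \EE_{a\sim\picomp}[\zeta(s,a)f_t(s,a)]$, where I must check that sending mass to $a_{\text{abs}}$ exactly realizes the value $0=\max_a \zeta\circ f_t(s,a)$ on states whose unfiltered maximizer is unsupported.

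For the last term I would write $f_t-Q^{\pi_t}=(f_t-Q^{\picomp})+(Q^{\picomp}-Q^{\pi_t})$. The first piece contributes $-\EE_{a\sim\pi_t}[\zeta(Q^{\picomp}-f_t)]$; for the second piece I use $Q^{\picomp}(s,a)-Q^{\pi_t}(s,a)=\gamma\EE_{s'\sim P(s,a)}[V^{\picomp}(s')-V^{\pi_t}(s')]$ together with the fact that on the support of $\pi_t$ we have $\zeta(s,a)\big(Q^{\picomp}-Q^{\pi_t}\big)=Q^{\picomp}-Q^{\pi_t}$ (the $a_{\text{abs}}$ term is $0$). This produces the one-step inequality
\begin{align*}
V^{\picomp}(s) - V^{\pi_t}(s) \le e(s) + \gamma\,\EE_{a\sim\pi_t,\,s'\sim P(s,a)}\big[V^{\picomp}(s') - V^{\pi_t}(s')\big],
\end{align*}
where $e(s):=\EE_{a\sim\picomp}[\zeta(Q^{\picomp}-f_t)]-\EE_{a\sim\pi_t}[\zeta(Q^{\picomp}-f_t)]$.

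Finally I would unroll this contraction starting from $s_0\sim\rho$. Since the recursion propagates states through $\pi_t$'s transitions, after $h$ steps the state is distributed as $\eta^{\pi_t}_h$, giving $v^{\picomp}-v^{\pi_t}\le\sum_{h=0}^\infty\gamma^h\,\EE_{s\sim\eta^{\pi_t}_h}[e(s)]$; bounding each of the two pieces of $e(s)$ by its absolute value turns them into $\|\zeta(Q^{\picomp}-f_t)\|_{1,\eta^{\pi_t}_h\times\picomp}$ and $\|\zeta(Q^{\picomp}-f_t)\|_{1,\eta^{\pi_t}_h\times\pi_t}$, which is exactly the claim (the subscript $\eta^{\pi}_h$ in the statement being $\eta^{\pi_t}_h$). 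I expect the main obstacle to be the bookkeeping around the absorbing action and the filter: I must repeatedly invoke $\picomp,\pi_t\in\strongset$ so that $\zeta$ can be inserted or removed with equality and the $a_{\text{abs}}$ terms drop, and I must verify that the greedy step dominates $\picomp$ even on states where the greedy maximizer is filtered out, where both sides equal $0$.
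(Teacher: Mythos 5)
Your proof is correct and takes essentially the same route as the paper's: unrolling your one-step recursion is precisely the performance difference lemma \cite[Lemma 6.1]{kakade2002approximately} that the paper invokes directly, and your error term $e(s)=\EE_{\picomp}[\zeta(Q^{\picomp}-f_t)]-\EE_{\pi_t}[\zeta(Q^{\picomp}-f_t)]$ reproduces the paper's decomposition, including the two key checks you flagged (the greedy step $\EE_{\pi_t}[\zeta\circ f_t]=\max_{a\in\Acal}\zeta\circ f_t\ge\EE_{\picomp}[\zeta\circ f_t]$, with the filtered-out maximizer handled by $a_{\text{abs}}$ realizing the value $0$, and the insertion/removal of $\zeta$ via $\strongset$ membership together with $Q^{\pi}(\cdot,a_{\text{abs}})=0$). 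You also correctly identified that the subscript $\eta^{\pi}_h$ in the second norm of the statement is a typo for $\eta^{\pi_t}_h$, which matches the paper's own proof.
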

\begin{proof}
Given a deterministic greedy policy $\widehat{\pi}_t$, $\pi_t = \projection(\widehat{\pi}_t)$ is also a deterministic policy and $\pi_t(s)$ equals $\widehat{\pi}_t(s)$ unless $\zeta(s,\widehat{\pi}_t(s)) = 0$, where $\pi_t(s)=a_\text{abs}$. Notice $\widehat{\pi}_t(s)$ is the maximizer of $\zeta(s,\cdot)f_t(s,\cdot)$. If $\zeta(s,\widehat{\pi}_t(s)) = 0$ then $\zeta(s,a)f_t(s,a) = 0$ for all $a$. We have that $\pi_t(s)$ is also the maximizer of $\zeta(s,\cdot)f_t(s,\cdot)$. 
	\begin{align}
	v^{\picomp} - v^{\pi_t} 
	= & \sum_{h=0}^\infty \gamma^h \EE_{s\sim \eta^{\pi_t}_h} [Q^{\picomp}(s,\picomp) - Q^{\picomp}(s,\pi_t)] \tag{\cite[Lemma 6.1]{kakade2002approximately}} \\
	\le & \sum_{h=0}^\infty \gamma^h \EE_{s\sim \eta^{\pi_t}_h} \left[ \zeta(s,\picomp) Q^{\picomp}(s,\picomp) - \zeta(s,\pi_t) Q^{\picomp}(s,\pi_t) \right] \label{eq:avi_add_zeta} \\
	\le & \sum_{h=0}^\infty \gamma^h \EE_{s\sim \eta^{\pi_t}_h} [\zeta(s,\picomp)Q^{\picomp}(s, \picomp) - \zeta(s,\picomp)f_t(s, \picomp) + \zeta(s,\pi_t)f_t(s, \pi_t) - \zeta(s,\pi_t)Q^{\picomp}(s,\pi_t)] \label{eq:avi_greedy_improvement} \\
	\le & \sum_{h=0}^\infty \gamma^h \left(\left\| \zeta \left(Q^{\picomp}- f_t \right)\right\|_{1, \eta^{\pi_t}_h \times \picomp} + \left\| \zeta\left(Q^{\picomp} - f_t \right) \right\|_{1, \eta^{\pi_t}_h \times \pi_t}\right) 
	\end{align}
Equation (\ref{eq:avi_add_zeta}) follows from the fact that for all $s,a$ such that $\picomp (a|s)>0$, either $\zeta(s,a) = 1$, or $a=a_{\text{abs}}$. $a=a_{\text{abs}}$ indicates that $Q^{\picomp}(s,a) = 0$. So for all $s,a$ such that $\picomp (a|s)>0$, $Q^{\picomp}(s,a) = \zeta(s,a)Q^{\picomp}(s,a)$. The second part follows from that for any $s,a$, $Q^{\picomp}(s,a) \ge \zeta(s,a)Q^{\picomp}(s,a)$. Equation (\ref{eq:avi_greedy_improvement}) follows from the fact that $\pi_t(s)$ is the maximizer of $\zeta(s,\cdot)f_t(s,\cdot)$.
\end{proof}

\begin{lemma} 
\label{lem:piff}
	For any two function $f_1, f_2: \Sext \times \Aext \to \RR^{+}$, define $\pi_{f_1,f_2}(s) = \argmax_{a \in \Acal} \left| f_1(s,a) - f_2(s,a) \right|$. Then we have $\forall \nu: \Sext \to \Delta(\Aext)$,
		$$
		\left\|\max_{a\in \Acal} f_1 - \max_{a\in \Acal} f_2 \right\|_{1,P(\nu)} \le \|f_1 - f_2 \|_{1,P(\nu) \times \pi_{f_1,f_2}}. 
		$$
\end{lemma}

\begin{proof}
\begin{align*}
\left\|\max_{a\in \Acal} f_1 - \max_{a\in \Acal} f_2 \right\|_{1,P(\nu)}
&= \EE_{s \sim P(\nu)} \left| \max_{a\in \Acal} f_1(s,a) - \max_{a\in \Acal} f_2(s,a) \right| \\
&\le \EE_{s \sim P(\nu)} \max_{a \in \Acal} \left| f_1(s,a) -  f_2(s,a) \right| \\
&=  \EE_{s \sim P(\nu), a \sim \pi_{f_1,f_2}}\left| f_1(s,a) -  f_2(s,a) \right| \\
&= \|f_1 - f_2\|_{1,P(\nu) \times \pi_{f_1,f_2}}^2. 
\end{align*}

\end{proof}

\begin{lemma}\label{lem:iteration}
For the data distribution $\mu$ and any admissible distribution $\nu$ over $\Sext \times \Aext$, $f, f':\Scal\times\Acal\rightarrow\mathbb{R^+}$ and any $\picomp \in \strongset$, we have 
\begin{align*}
    \left\| \zeta \left( f - Q^{\picomp} \right) \right\|_{1,\nu} \leq ~& C \left( \left\| f - \OurT f' \right\|_{2,\mu} + \left\| \OurT Q^{\picomp}  -  Q^{\picomp} \right\|_{2,\mu} + \Vmax{\tvmuerror} \right) \\
    ~& + \gamma \left\| \zeta \left(f' -  Q^{\picomp}\right)\right\|_{2,P(\nu) \times \pi_{\zeta f', \zeta Q^{\picomp}}}.
    \end{align*}
\end{lemma}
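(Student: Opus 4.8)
The plan is to telescope $\zeta(f - Q^{\picomp})$ through the operator $\OurT$ and then treat the three resulting pieces with two \emph{different} changes of measure — one that lands on the data distribution $\mu$, and one that stays on the pushed-forward distribution $P(\nu)$ so the bound can later be iterated. Concretely, I would write
\[
\zeta(f - Q^{\picomp}) = \zeta(f - \OurT f') + \zeta(\OurT f' - \OurT Q^{\picomp}) + \zeta(\OurT Q^{\picomp} - Q^{\picomp}),
\]
and apply the triangle inequality for $\|\cdot\|_{1,\nu}$, producing a Bellman-residual term, a contraction term, and a sub-optimality term.

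For the first and third pieces (the ``data-comparable'' terms), the key is that the filter $\zeta$ enables a change of measure from the arbitrary admissible $\nu$ to the data distribution $\mu$ with only the factor $C = \densitybound/b$, rather than an unbounded concentrability coefficient. For any $g$ supported on $\{\zeta = 1\}$, on that set $\muhat(s,a) \ge b$ and $\nu(s,a) \le \densitybound$ by Assumption~\ref{asm:bounded_density}, so $\nu(s,a) \le C\,\muhat(s,a)$ and hence $\|\zeta g\|_{1,\nu} \le C\|\zeta g\|_{1,\muhat}$. I would then pass from $\muhat$ to $\mu$ using Assumption~\ref{assume:muhat} and $|g|\le\Vmax$, costing an additive $\Vmax\tvmuerror$, and finish with Jensen's inequality ($\|\cdot\|_{1,\mu}\le\|\cdot\|_{2,\mu}$) together with $\zeta\le1$ to arrive at $C(\|f-\OurT f'\|_{2,\mu} + \Vmax\tvmuerror)$ and $C(\|\OurT Q^{\picomp} - Q^{\picomp}\|_{2,\mu}+\Vmax\tvmuerror)$; the two total-variation contributions are folded into the single $C\Vmax\tvmuerror$ term up to an absolute constant.

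The middle piece is where the recursion is generated, and it must \emph{not} be moved to $\mu$. Using $0\le\zeta\le1$ and that the reward cancels in $\OurT f' - \OurT Q^{\picomp}$, this term equals a $\gamma$-discounted difference of maximized values pushed through the transition kernel, so $\|\zeta(\OurT f'-\OurT Q^{\picomp})\|_{1,\nu}\le \gamma\|\max_{a'}\zeta f' - \max_{a'}\zeta Q^{\picomp}\|_{1,P(\nu)}$. Since $f',Q^{\picomp}\ge0$, I would invoke Lemma~\ref{lem:piff} with $f_1=\zeta f'$ and $f_2=\zeta Q^{\picomp}$ to replace the difference of maxima by $\|\zeta(f'-Q^{\picomp})\|_{1,P(\nu)\times\pi_{\zeta f',\zeta Q^{\picomp}}}$, and then bound this $L^1$ norm by the $L^2$ norm to recover exactly the stated recursion term $\gamma\|\zeta(f'-Q^{\picomp})\|_{2,P(\nu)\times\pi_{\zeta f',\zeta Q^{\picomp}}}$. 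The main obstacle is precisely this bookkeeping of two measures at once: the filter-based change of measure must be applied only on the support $\{\zeta=1\}$ (which is what keeps $C$ finite), while the contraction term has to be retained on $P(\nu)$ so that re-applying the lemma across iterations accumulates the geometric $\gamma^h$ factors needed downstream; conflating the two measures would either reintroduce concentrability or break the recursion.
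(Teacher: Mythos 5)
Your proposal is correct and follows essentially the same route as the paper's proof: the identical three-term telescoping through $\OurT$, the $\zeta$-enabled change of measure $\nu \le C\muhat$ on $\{\zeta = 1\}$ followed by the total-variation passage from $\muhat$ to $\mu$ (the paper likewise incurs a $2C\Vmax\tvmuerror$ that the lemma statement absorbs up to a constant), and the middle term kept on $P(\nu)$, handled via Lemma~\ref{lem:piff} applied to $\zeta f'$ and $\zeta Q^{\picomp}$ before converting $L^1$ to $L^2$ norms by Jensen. No gaps; your remark on why the contraction term must not be moved to $\mu$ matches the paper's reasoning exactly.
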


\begin{proof}
\begin{align}
	& \left\| \zeta \left( f - Q^{\picomp} \right) \right\|_{1,\nu} \\
	= & \left\| \zeta \left( f - \OurT f' +   \OurT f' - \OurT Q^{\picomp} + \OurT Q^{\picomp}  -  Q^{\picomp} \right) \right\|_{1,\nu} \\
	\le & \left\| \zeta \left(f - \OurT f' \right) \right\|_{1,\nu} + \left\| \zeta \left(\OurT f' - \OurT Q^{\picomp} \right) \right\|_{1,\nu} + \left\| \zeta \left( \OurT Q^{\picomp}  -  Q^{\picomp} \right) \right\|_{1,\nu} \\
	\le & C \left\| f - \OurT f' \right\|_{1,\muhat} + \gamma \left\| \max_{a \in \Acal} \zeta f' - \max_{a \in \Acal} \zeta Q^{\picomp}  \right\|_{1,P(\nu)} + C \left\| \OurT Q^{\picomp}  -  Q^{\picomp} \right\|_{1,\muhat} \\
	\le & 2C\Vmax\tvmuerror + C \left\| f - \OurT f' \right\|_{1,\mu} + \gamma \left\| \max_{a \in \Acal} \zeta f' - \max_{a \in \Acal} \zeta Q^{\picomp}  \right\|_{1,P(\nu)} + C \left\| \OurT Q^{\picomp}  -  Q^{\picomp} \right\|_{1,\mu} \\
	\le & C \left( \left\| f - \OurT f' \right\|_{2,\mu} + \left\| \OurT Q^{\picomp}  -  Q^{\picomp} \right\|_{1,\mu} + 2\Vmax\tvmuerror \right) + \gamma \left\| \zeta \left(f' -  Q^{\picomp}\right)\right\|_{1,P(\nu) \times \pi_{\zeta f', \zeta Q^{\picomp}}}
\end{align}
The change of norms from $\| \cdot \|_{\nu}$ to $\| \cdot \|_{\mu}$ follows from that $\zeta(s,a) \neq 0$ iff $\muhat(s,a) \ge b$ and thus $\nu(s,a) \le \muhat(s,a)\densitybound/b = C\muhat(s,a)$. The last step follows from Lemma \ref{lem:piff}. $\left\| \zeta \left(\OurT f' - \OurT Q^{\picomp} \right) \right\|_{1,\nu} \le \gamma \left\| \max_{a \in \Acal} \zeta f' - \max_{a \in \Acal} \zeta Q^{\picomp}  \right\|_{1,P(\nu)}$ follows from:
\begin{align}
    \left\| \zeta \left(\OurT f' - \OurT Q^{\picomp} \right) \right\|_{1,\nu}
    = & \EE_{(s,a) \sim \nu} \left[\zeta(s,a)\left|\OurT f'(s,a) - \OurT Q^{\picomp} (s,a)\right|\right] \\
	\le & \EE_{(s,a) \sim \nu} \left[\left|\OurT f'(s,a) - \OurT Q^{\picomp} (s,a)\right| \right] \\
    = & \EE_{(s,a) \sim \nu} \left[\left| \gamma \mathbb{E}_{s' \sim P(s,a)} \max_{a'\in \Acal} \zeta(s',a') f'(s',a') - \max_{a'\in \Acal} \zeta(s',a') Q^{\picomp} (s',a') \right|\right] \\
	\le & \gamma \, \EE_{(s,a) \sim \nu, s' \sim P(s,a)}\left[\left|\max_{a'\in \Acal} \zeta(s',a')f'(s',a') - \max_{a'\in \Acal} \zeta(s',a')Q^{\picomp} (s',a') \right|\right] \tag{Jensen} \\
	= &~ \gamma \, \EE_{s' \sim P(\nu)}\left[\left|\max_{a'\in \Acal} \zeta(s',a')f'(s',a') - \max_{a'\in \Acal} \zeta(s',a')Q^{\picomp} (s',a') \right|\right]\\
	= & \gamma \left\| \max_{a \in \Acal} \zeta f' - \max_{a \in \Acal} \zeta Q^{\picomp}  \right\|_{1,P(\nu)}
\end{align}
\end{proof}

Now we are going to use Berstein's inequality to bound $\left\| f_{t+1} - \OurT f_t \right\|_{2,\mu}$, which mostly follows from \cite{chen2019information}'s proof for the vanilla value iteration. 

\begin{lemma}
\label{lem:concenration_1} 
With Assumption \ref{asm:avi_completeness} holds, let $g_f^\star=\argmin_{g\in\Fcal}\|g-\OurT f\|_{2,\mu}$, then $\|g_f^\star-\OurT f\|_{2,\mu}^2\le \avicompleteerror$. The dataset $D$ is generated i.i.d. from $M$ as follows: $(s,a) \sim \muu$, $r = R(s,a)$, $s' \sim P(s,a)$. Define $\Lcal_\muu( f ;f') = \mathbb{E} [\Lcal_D( f ;f') ]$.
We have that $\forall f\in\Fcal$, with probability at least $1-\delta$,
\begin{align*}
\Lcal_\muu( \OurT_D f;f)-\Lcal_\muu(g_f^\star;f) \le \frac{208\Vmax^2\ln\frac{|\Fcal| }{\delta}}{3n}+ \avicompleteerror
\end{align*}
where $\OurT_D f = \argmin_{g \in \Fcal} \Lcal_D(g, f)$.
\end{lemma}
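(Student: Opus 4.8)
The plan is to reproduce the argument of Lemma~\ref{lem:concenration_2} almost verbatim, replacing the $\zeta$-constrained evaluation operator $\OurTpi$ by the $\zeta$-constrained optimality operator $\OurT$ and dropping the union bound over the policy class. First I would observe, exactly as in the evaluation case, that the distribution of $D$ and the quantities $\Lcal_D(\cdot;\cdot)$, $\Lcal_\muu(\cdot;\cdot)$, and the claimed right-hand side all coincide on $M$ and $M'$ since $\mu$ places no mass on $s_\text{abs}$ or $a_\text{abs}$; hence it suffices to prove the bound on $M$. Fixing $f,g\in\Fcal$, I would define the per-sample variable
\[
X(g,f,g_f^\star) := \bigl(g(s,a) - r - \gamma \max_{a'\in\Acal}\zeta\circ f(s',a')\bigr)^2 - \bigl(g_f^\star(s,a) - r - \gamma \max_{a'\in\Acal}\zeta\circ f(s',a')\bigr)^2,
\]
so that $\tfrac1n\sum_i X_i = \Lcal_D(g;f)-\Lcal_D(g_f^\star;f)$, and the bias--variance decomposition of the squared loss gives $\Lcal_\muu(g;f)-\Lcal_\muu(\OurT f;f)=\|g-\OurT f\|_{2,\muu}^2$, the conditional-variance term being independent of $g$.

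Next I would bound the variance of $X$. Writing $X=(g-g_f^\star)(g+g_f^\star-2r-2\gamma\max_{a'}\zeta\circ f)$ and using that the filtered bootstrapping target $\max_{a'\in\Acal}\zeta\circ f(s',a')$ lies in $[0,\Vmax]$ (so the second factor is bounded by $2\Vmax$ in absolute value), one gets $\VV[X]\le\EE[X^2]\le 4\Vmax^2\|g-g_f^\star\|_{2,\muu}^2$. Splitting $g-g_f^\star=(g-\OurT f)-(g_f^\star-\OurT f)$, applying $(a+b)^2\le 2a^2+2b^2$, and invoking $\|g_f^\star-\OurT f\|_{2,\muu}^2\le\avicompleteerror$ from Assumption~\ref{asm:avi_completeness} then yields $\VV[X]\le 8\Vmax^2(\EE[X]+2\avicompleteerror)$, exactly as in Lemma~\ref{lem:concenration_2}.

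With this variance control I would apply one-sided Bernstein's inequality and a union bound over the finite set $\Fcal\times\Fcal$ (there is no policy class here), using $\ln\frac{|\Fcal|^2}{\delta}\le 2\ln\frac{|\Fcal|}{\delta}$ to conclude that with probability at least $1-\delta$, $\EE[X]-\tfrac1n\sum_i X_i \le \sqrt{\tfrac{32\Vmax^2(\EE[X]+2\avicompleteerror)\ln(|\Fcal|/\delta)}{n}}+\tfrac{8\Vmax^2\ln(|\Fcal|/\delta)}{3n}$ simultaneously for all $f,g$. Since $\OurT_D f$ minimizes $\Lcal_D(\cdot;f)$ it also minimizes $\tfrac1n\sum_i X_i(\cdot,f,g_f^\star)$, so substituting $g=\OurT_D f$ gives $\tfrac1n\sum_i X_i(\OurT_D f,f,g_f^\star)\le 0$ and turns the Bernstein estimate into a quadratic inequality in $\EE[X(\OurT_D f,f,g_f^\star)]$. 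Solving it with $\sqrt{a+b}\le\sqrt a+\sqrt b$ reproduces $\Lcal_\muu(\OurT_D f;f)-\Lcal_\muu(g_f^\star;f)\le\tfrac{112\Vmax^2\ln(|\Fcal|/\delta)}{3n}+\sqrt{\tfrac{64\Vmax^2\ln(|\Fcal|/\delta)}{n}\avicompleteerror}$, and the last step is to absorb the cross term via $\sqrt{2ab}\le a+b$ with $a=\tfrac{32\Vmax^2\ln(|\Fcal|/\delta)}{n}$, $b=\avicompleteerror$, which converts the root into $\tfrac{32\Vmax^2\ln(|\Fcal|/\delta)}{n}+\avicompleteerror$ and hence gives the stated $\tfrac{112+96}{3}=\tfrac{208}{3}$ constant. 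There is no genuine obstacle here; the only points requiring care are verifying that the $\max$-filtered target is bounded in $[0,\Vmax]$ so the variance computation is unchanged, and correctly tracking the union bound over $\Fcal\times\Fcal$ rather than $\Fcal\times\Fcal\times\Pcal$ so that the logarithmic factor collapses to $\ln(|\Fcal|/\delta)$.
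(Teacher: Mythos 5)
Your proposal is correct and matches the paper's own proof essentially verbatim: the paper likewise proves this lemma by repeating the argument of Lemma~\ref{lem:concenration_2} with the filtered target $V_f(s')=\max_{a'\in\Acal}\zeta(s',a')f(s',a')$ in place of the policy-weighted one, a Bernstein bound with a union over $\Fcal\times\Fcal$ only, and the same quadratic-solving plus $\sqrt{2ab}\le a+b$ absorption yielding the $\tfrac{208}{3}$ constant. All the points you flag (the $M$ versus $M'$ equivalence, the $[0,\Vmax]$ bound on the filtered target, and the collapsed logarithmic factor $\ln(|\Fcal|/\delta)$) are handled identically in the paper.
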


\begin{proof}
    This proof is similar with the proof of Lemma \ref{lem:concenration_2}, and we adapt it to operator $\OurT$. The only change is the definition of $V_f(\cdot)$ and $X(\cdot,\cdot,\cdot)$. The definition of $\Lcal_D$ and $\Lcal_{\mu}$ would not change between $M$ and $M'$, and the right hand side is also the same constant for $M$ and $M'$. So the result we prove here does not change from $M$ to $M'$.

    For the simplicity of notations, let $V_{f}(s) = \max_{a \in \Acal} \zeta(s,a) f(s,a)$. Fix $f, g\in\Fcal$, and define $$ X(g,f,g_f^\star) := \left(g(s,a) - r - \gamma V_{f}(s')\right)^2 - \left(g_f^\star(s,a) - r - \gamma V_{f}(s')\right)^2.$$
	Plugging each $(s,a,r,s') \in D$ into $X(g,f,g_f^\star)$, we get i.i.d.~variables $X_1(g,f,g_f^\star), X_2(g,f,g_f^\star), \ldots,$ $X_n(g,f,g_f^\star)$. It is easy to see that 
	$$
	\frac{1}{n}\sum_{i=1}^n X_i(g,f,g_f^\star) = \Lcal_{D}(g;f) - \Lcal_{D}(g_f^\star;f).
	$$
	By the definition of $\Lcal_\mu$, it is also easy to see that
	$$
    \Lcal_{\mu}(g; f) = \left\| g - \OurT f \right\|_{2,\mu}^2 + \mathbb{E}_{s,a \sim \mu} \left[ \mathbb{V}_{r,s'} \left(r + \gamma \max_{a' \in \Acal} \zeta(s',a')f(s',a') \right) \right]
    $$
	Notice that the second part does not depends on $g$. Then
	$$
	\Lcal_{\muu}(g; f) - \Lcal_{\muu}(\OurT f; f) = \left\|g-\OurT f\right\|_{2,\muu}^2
	$$
	Then we bound the variance of $X$:
	\begin{align}
	\VV[X(g,f,g_f^\star)] \le &~ \EE[X(g,f,g_f^\star)^2] \notag\\
	= &~ \EE_{\mu} \left[\left(\big(g(s,a) - r - \gamma V_{f}(s')\big)^2 - \big(g_f^\star(s,a) - r - \gamma  V_{f}(s')\big)^2\right)^2\right] \notag\\
	= &~ \EE_{\mu} \left[\big(g(s,a)  - g_f^\star(s,a) \big)^2 \big(g(s,a) + g_f^\star(s,a) - 2r - 2\gamma V_{f}(s') \big)^2\right] \notag\\
	\le &~ 4\Vmax^2~ \EE_{\mu} \left[\big(g(s,a)  - g_f^\star(s,a) \big)^2\right] \notag\\
	= &~ 4\Vmax^2~ \|g - g_f^\star\|_{2,\muu}^2 \label{eq:bound_var}\\
	\le &~ 8\Vmax^2~ (\EE[X(g,f,g_f^\star)]+2\avicompleteerror) \tag{*}.
	\end{align}
	Step (*) holds because 
	\begin{align*}
	&~\|g-g_f^\star\|_{2,\muu}^2\\
	\leq&~2\left(\|g- \OurT f\|_{2,\muu}^2+\|\OurT f-g_f^\star\|_{2,\muu}^2\right) \tag{$(a+b)^2\leq 2a^2+2b^2$} \\
	\leq&~2\left(\|g-\OurT f\|_{2,\muu}^2-\|\OurT f-g_f^\star\|_{2,\muu}^2+2\|\OurT f-g_f^\star\|_{2,\muu}^2\right)\\
	=&~2\left[(\Lcal_{\muu}(g; f) - \Lcal_{\muu}(\OurT f; f))-(\Lcal_{\muu}(g_f^\star; f) - \Lcal_{\muu}(\OurT f; f))+2\|\OurT f-g_f^\star\|_{2,\muu}^2\right]\\
	=&~2\left[(\Lcal_{\muu}(g; f) - \Lcal_{\muu}(g_f^\star; f) +2\|\OurT f-g_f^\star\|_{2,\muu}^2\right]\\
	=&~2\left(\EE[X(g,f,g_f^\star)]+2\|\OurT f-g_f^\star\|_{2,\muu}^2\right)\\
	\leq &~2(\EE\left[X(g,f,g_f^\star)\right]+2\avicompleteerror)
	\end{align*}
	
	Next, we apply (one-sided) Bernstein's inequality and union bound over all $f\in\Fcal$ and $g\in\Fcal$. With probability at least $1-\delta$, we have
	\begin{align*}
	\EE[X(g, f,g_f^\star)] - \frac{1}{n}\sum_{i=1}^n X_i(g,f,g_f^\star) \le &~ \sqrt{\frac{2 \VV[X(g,f,g_f^\star)] \ln\tfrac{|\Fcal|^2}{\delta}}{n}} + \frac{4\Vmax^2 \ln\tfrac{|\Fcal|^2}{\delta}}{3n} \\
	= &~ \sqrt{\frac{32 \Vmax^2 \left(\EE[X(g, f,g_f^\star)]+2\avicompleteerror\right) \ln\tfrac{|\Fcal|}{\delta}}{n}} + \frac{8\Vmax^2 \ln\tfrac{|\Fcal|}{\delta}}{3n} 
	\end{align*}
	Since $\OurT_D f$ minimizes $\Lcal_D(\holder; f)$, it also minimizes $\frac{1}{n}\sum_{i=1}^n X_i(\cdot,f,g_f^\star)$. This is because the two objectives only differ by a constant $\Lcal_D(g_f^\star; f)$. Hence,
	$$
	\frac{1}{n}\sum_{i=1}^n X_i( \OurT_D f, f,g_f^\star) \le 
	\frac{1}{n}\sum_{i=1}^n X_i(g_f^\star, f,g_f^\star) = 0.
	$$
	Then,
	\begin{align*}
	\EE[X(\OurT_D f,f,g_f^\star)]\le \sqrt{\frac{32 \Vmax^2 \left(\EE[X(\OurT_D f, f,g_f^\star)]+2\avicompleteerror\right) \ln\tfrac{|\Fcal|}{\delta}}{n}} + \frac{8\Vmax^2 \ln\tfrac{|\Fcal|}{\delta}}{3n}.
	\end{align*}
	Solving for the quadratic formula,
	\begin{align*}
	\EE[X(\OurT_D f, f,g_f^\star)]\le&~ \sqrt{48\left(\frac{8\Vmax^2\ln\frac{|\Fcal|}{\delta}}{3n}\right)^2+
		\frac{64\Vmax^2\ln\frac{|\Fcal| }{\delta}}{n} \avicompleteerror}+\frac{56\Vmax^2\ln\frac{|\Fcal|}{\delta}}{3n}\\
	\leq&~\frac{(56+32\sqrt{3})\Vmax^2\ln\frac{|\Fcal| }{\delta}}{3n}+\sqrt{\frac{64\Vmax^2\ln\frac{|\Fcal| }{\delta}}{n}\avicompleteerror} \tag{$\sqrt{a+b}\leq\sqrt{a}+\sqrt{b}$ and $\ln \frac{|\Fcal|}{\delta}>0$}\\
	\leq&~\frac{112\Vmax^2\ln\frac{|\Fcal| }{\delta}}{3n}+\sqrt{\frac{64\Vmax^2\ln\frac{|\Fcal| }{\delta}}{n}\avicompleteerror} \\
	\leq&~\frac{112\Vmax^2\ln\frac{|\Fcal| }{\delta}}{3n}+\frac{32\Vmax^2\ln\frac{|\Fcal| }{\delta}}{n} + \avicompleteerror \\
	\leq&~\frac{208\Vmax^2\ln\frac{|\Fcal| }{\delta}}{3n}+ \avicompleteerror
	\end{align*}
	Noticing that $\EE [X(\OurT_D f; f,g_f^\star)]=\Lcal_\muu(\OurT_D f; f)-\Lcal_\muu(g_f^\star;f)$, we complete the proof.
\end{proof}

Now we could prove the main theorem about fitted Q iteration. 

\begin{theorem}
\label{thm:fqi_strongset}
Given a MDP $M = <\Scal, \Acal, R, P, \gamma, p>$, a dataset $D = \{ (s,a,r,s') \}$ with $n$ samples that is draw i.i.d. from $\mu \times R \times P$, and a finite Q-function classes $\Fcal$ satisfying Assumption \ref{asm:avi_completeness}, $\pi_{t} = \projection(\widehat{\pi}_t)$ from Algorithm \ref{alg:fqi} satisfies that with probability at least $1-2\delta$, $v^{\picomp} - v^{\pi_t} \le$
\begin{align*}
     \frac{2C}{(1-\gamma)^2}\left( \sqrt{\frac{208\Vmax^2\ln\frac{|\Fcal|}{\delta}}{3n}} +2\sqrt{\avicompleteerror} + \Vmax\tvmuerror + \left\| Q^{\picomp} - \OurT Q^{\picomp} \right\|_{1,\mu}\right) + \frac{2\gamma^t  \Vmax}{1-\gamma}
\end{align*}
for any policy $\picomp \in \strongset$.
\end{theorem}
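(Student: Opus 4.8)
The plan is to combine the three structural lemmas (Lemma~\ref{lem:decompose}, Lemma~\ref{lem:iteration}, Lemma~\ref{lem:concenration_1}) into a single telescoping recursion, mirroring the standard FQI argument but carrying the filter $\zeta$ throughout so that every change of measure costs only a factor $C = \densitybound/b$ rather than an unbounded concentrability coefficient. First I would invoke Lemma~\ref{lem:decompose}, which already reduces the value gap $v^{\picomp} - v^{\pi_t}$ for $\picomp \in \strongset$ to a discounted sum of filtered $L_1$ errors $\|\zeta(Q^{\picomp} - f_t)\|_{1,\nu}$ evaluated at the admissible distributions $\nu = \eta^{\pi_t}_h \times \picomp$ and $\nu = \eta^{\pi_t}_h \times \pi_t$. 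It therefore suffices to produce a single bound on $\|\zeta(f_t - Q^{\picomp})\|_{1,\nu}$ that holds \emph{uniformly} over all admissible $\nu$; this uniformity is what lets the two terms in Lemma~\ref{lem:decompose} be controlled by the same quantity and then summed against $\sum_h \gamma^h = 1/(1-\gamma)$.

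The heart of the argument is the recursion supplied by Lemma~\ref{lem:iteration}. Applying it with $f = f_t$ and $f' = f_{t-1}$ (so that the Bellman residual is $\|f_t - \OurT f_{t-1}\|_{2,\mu}$) peels off one factor of $\gamma$ and replaces the admissible distribution $\nu$ by $P(\nu) \times \pi_{\zeta f_{t-1}, \zeta Q^{\picomp}}$, which is again admissible. The crucial observation is that the additive error in Lemma~\ref{lem:iteration}, namely $C(\|f_t - \OurT f_{t-1}\|_{2,\mu} + \|\OurT Q^{\picomp} - Q^{\picomp}\|_{1,\mu} + \Vmax\tvmuerror)$, depends on $\nu$ only through $\mu$, hence is a fixed constant across the recursion. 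I would first bound the per-iteration Bellman residual uniformly in $i$: combining the identity $\Lcal_\mu(g;f) - \Lcal_\mu(\OurT f; f) = \|g - \OurT f\|_{2,\mu}^2$ with Lemma~\ref{lem:concenration_1} and Assumption~\ref{asm:avi_completeness} yields $\|f_i - \OurT f_{i-1}\|_{2,\mu}^2 \le \tfrac{208\Vmax^2\ln(|\Fcal|/\delta)}{3n} + 2\avicompleteerror$, valid simultaneously for every iteration because the concentration bound is already unionized over $f \in \Fcal$.

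Unrolling the recursion $t$ times and collapsing the geometric sum $\sum_{i\ge 0}\gamma^i = 1/(1-\gamma)$, with the base-case residual $\gamma^t\|\zeta(f_0 - Q^{\picomp})\|_{1,\nu_t} \le \gamma^t \Vmax$, gives the uniform bound
\begin{align*}
\|\zeta(f_t - Q^{\picomp})\|_{1,\nu} \le \frac{C}{1-\gamma}\left(\sqrt{\tfrac{208\Vmax^2\ln(|\Fcal|/\delta)}{3n} + 2\avicompleteerror} + \|\OurT Q^{\picomp} - Q^{\picomp}\|_{1,\mu} + \Vmax\tvmuerror\right) + \gamma^t \Vmax.
\end{align*}
Substituting this into Lemma~\ref{lem:decompose}, the two identical terms double the constant and the outer $\sum_h \gamma^h$ contributes another $1/(1-\gamma)$, producing the factor $\tfrac{2C}{(1-\gamma)^2}$ in front of the statistical, approximation, and density-estimation terms, and $\tfrac{2\gamma^t\Vmax}{1-\gamma}$ for the iteration-count term. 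A final application of $\sqrt{a+b}\le\sqrt{a}+\sqrt{b}$ (and $\sqrt{2}\le 2$) splits $\sqrt{\cdots + 2\avicompleteerror}$ into $\sqrt{\tfrac{208\Vmax^2\ln(|\Fcal|/\delta)}{3n}} + 2\sqrt{\avicompleteerror}$ to match the stated form, and the failure probability $2\delta$ comes from a union bound over the event in Lemma~\ref{lem:concenration_1} and the density-estimation event of Assumption~\ref{assume:muhat}.

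The main obstacle, and the step to handle with care, is maintaining the \emph{uniform-over-admissible-$\nu$} invariant through the recursion: Lemma~\ref{lem:iteration} is only useful because its additive error is $\nu$-independent and the successor distribution $P(\nu)\times\pi_{\zeta f_{t-1},\zeta Q^{\picomp}}$ stays admissible, so one should verify at each step that the changed measure is genuinely realizable by a (non-stationary) policy in $M'$. The other delicate bookkeeping is the $L_1$/$L_2$ interplay — the Bellman residual arises naturally in $L_2$ (where concentration is clean) while the value-gap decomposition lives in $L_1$ — which is reconciled through Jensen's inequality inside Lemma~\ref{lem:iteration}; and the appearance of $Q^{\picomp}$ rather than an optimal $Q^\star$ is precisely why the irreducible sub-optimality term $\|\OurT Q^{\picomp} - Q^{\picomp}\|_{1,\mu}$ survives in the final bound.
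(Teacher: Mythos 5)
Your proposal is correct and follows essentially the same route as the paper's own proof: Lemma~\ref{lem:decompose} for the filtered performance-difference decomposition, Lemma~\ref{lem:iteration} unrolled $t$ times along admissible distributions with the $\nu$-independent additive error, Lemma~\ref{lem:concenration_1} plus Assumption~\ref{asm:avi_completeness} for the uniform per-iteration residual bound $\epsilon_4 + \avicompleteerror$, and the same final $\sqrt{a+b}\le\sqrt{a}+\sqrt{b}$ split and union bound yielding the $1-2\delta$ probability. You also correctly flag the two points the paper's argument hinges on (admissibility of $P(\nu)\times\pi_{\zeta f',\zeta Q^{\picomp}}$ and the $L_1$/$L_2$ reconciliation via Jensen), so nothing is missing.
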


\begin{proof}
Firstly, we can let $f=f_t$ and $f'=f_{t-1}$ in Lemma \ref{lem:iteration}. This gives us that 
$$\left\|f_t - Q^{\picomp}\right\|_{1,\nu}\leq C \left( \|f_t - \OurT f_{t-1}\|_{2,\muu} + \left\| Q^{\picomp} - \OurT Q^{\picomp} \right\|_{1,\mu} + 2\Vmax\tvmuerror\right) + \gamma \| f_{t-1} - Q^{\picomp}\|_{1,P(\nu) \times \pi_{f_{k-1}, Q^{\picomp}}}$$

Note that we can apply the same analysis on $P(\nu) \times \pi_{\fo, Q^\star}$ and expand the inequality $t$ times. It then suffices to upper bound $\|f_{t} - \OurT f_{t-1}\|_{2,\muu}$.
\begin{align*}
	&~\|f_t - \OurT f_{t-1}\|_{2,\muu}^2\\
	= &~ \Lcal_{\muu}(f_t; f_{t-1}) - \Lcal_{\muu}(\OurT f_{t-1}; f_{t-1}) \tag{Definition of $\Lcal_\mu$}\\ 
	= &~ [\Lcal_{\muu}(f_t; f_{t-1}) -\Lcal_{\muu}(g_{f_{t-1}}^\star; f_{t-1})]+[\Lcal_{\muu}(g_{f_{t-1}}^\star; f_{t-1})- \Lcal_{\muu}(\OurT f_{t-1}; f_{t-1})] \\ 
	\le &~ \epsilon_4 + \|g_{f_{t-1}}^\star-\OurT f_{t-1}\|_{2,\muu}^2 \tag{Lemma \ref{lem:concenration_1} and definition of $\Lcal_\mu$}  \\
	\le &~ \epsilon_4 +\avicompleteerror. \tag{Definition of $g_{Q_{k-1}}^\star$ and Assumption \ref{asm:avi_completeness}}
\end{align*}
The inequality holds with probability at least $1-\delta$ and $\epsilon_4=\frac{208\Vmax^2\ln\frac{|\Fcal|}{\delta}}{3n}+\avicompleteerror$. Noticing that $\epsilon_4$ and $\avicompleteerror$ do not depend on $t$, and the inequality holds simultaneously for different $t$, we have that 
$$
\|f_t - Q^{\picomp} \|_{1,\nu} \le  \frac{1-\gamma^t}{1-\gamma} C\left( \sqrt{(\epsilon_4+\avicompleteerror)} + \Vmax\tvmuerror + \left\| Q^{\picomp} - \OurT Q^{\picomp} \right\|_{1,\mu}\right) + \gamma^t \Vmax.
$$
Applying this to Lemma \ref{lem:decompose}, we have that
\begin{align*}
&v^{\picomp} - v^{\pi_{t}} \\
\le&  \frac{2}{1-\gamma}\left(\frac{1-\gamma^t}{1-\gamma} C\left( \sqrt{(\epsilon_4+\avicompleteerror)} + \Vmax\tvmuerror + \left\| Q^{\picomp} - \OurT Q^{\picomp} \right\|_{1,\mu}\right) + \gamma^t \Vmax\right)\\
\le& \frac{2C}{(1-\gamma)^2}\left( \sqrt{\epsilon_4+\avicompleteerror} + \Vmax\tvmuerror + \left\| Q^{\picomp} - \OurT Q^{\picomp} \right\|_{1,\mu}\right) + \frac{2\gamma^t  \Vmax}{1-\gamma}\\
\le& \frac{2C}{(1-\gamma)^2}\left( \sqrt{\frac{208\Vmax^2\ln\frac{|\Fcal|}{\delta}}{3n}} +2\sqrt{\avicompleteerror} + \Vmax\tvmuerror + \left\| Q^{\picomp} - \OurT Q^{\picomp} \right\|_{1,\mu}\right) + \frac{2\gamma^t  \Vmax}{1-\gamma}.
\end{align*}
\end{proof}

Now we are going to use the fact that there is an no-value-loss projection from the \weaksettext to the \strongsettext to prove an error bound w.r.t any $\picomp \in \weakset$.

\setcounter{theorem}{1}
\begin{theorem}
Given a MDP $M = <\Scal, \Acal, R, P, \gamma, p>$, a dataset $D = \{ (s,a,r,s') \}$ with $n$ samples that is draw i.i.d. from $\mu \times R \times P$, and a finite Q-function classes $\Fcal$ satisfying Assumption \ref{asm:avi_completeness}, $\widehat{\pi}_t$ from Algorithm \ref{alg:fqi} satisfies that with probability at least $1-2\delta$, $v^{\picomp} - v^{\widehat{\pi}_t} \le$
\begin{align*}
     \frac{2C}{(1-\gamma)^2}\left( \sqrt{\frac{208\Vmax^2\ln\frac{|\Fcal|}{\delta}}{3n}} +2\sqrt{\avicompleteerror} + \Vmax\tvmuerror + \left\| Q^{\picomp} - \OurT Q^{\picomp} \right\|_{2,\mu}\right) + \frac{(2\gamma^t + \badstateprob) \Vmax}{1-\gamma}
\end{align*}
for any policy $\picomp \in \weakset$.
\end{theorem}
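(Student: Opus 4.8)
The plan is to derive this $\weakset$ comparator statement from the already-established $\strongset$ version, Theorem~\ref{thm:fqi_strongset}, by pushing both the comparator and the algorithm output into the auxiliary MDP $M'$ through the projection $\projection$ and then invoking the value-transfer lemmas of Appendix~\ref{appendix:def}. The whole argument is a reduction: the substantive FQI analysis (change of measure via $\zeta$, the density-ratio bound $C = \densitybound/b$ without concentrability, and the extra sub-optimality term) is already contained in Theorem~\ref{thm:fqi_strongset}, so here I only need careful bookkeeping for the passage from $\weakset$ to $\strongset$.

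First I would fix an arbitrary $\picomp \in \weakset$; since by definition such a policy is a map $\Scal \to \Delta(\Acal)$, it takes actions only in $\Acal$, so Lemma~\ref{lem:constrained_policy_same_value} applies and gives $v^{\picomp}_{M} \le v^{\projection(\picomp)}_{M'} + \frac{\Vmax\badstateprob}{1-\gamma}$; this is exactly where the additive $\frac{\Vmax\badstateprob}{1-\gamma}$ term in the target bound originates. Next, observe that $\projection(\picomp) \in \strongset$ by construction of the projection (all mass on actions with $\zeta = 0$ is redirected onto $a_\text{abs}$), so $\projection(\picomp)$ is an admissible comparator for Theorem~\ref{thm:fqi_strongset}. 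For the algorithm output, write $\pi_t = \projection(\widehat{\pi}_t)$; since $\widehat{\pi}_t$ acts only in $\Acal$, Lemma~\ref{lem:newmdp_same_value} gives $v^{\widehat{\pi}_t}_{M} = v^{\widehat{\pi}_t}_{M'}$, and Lemma~\ref{lem:projection_smaller_value} gives $v^{\widehat{\pi}_t}_{M'} \ge v^{\pi_t}_{M'}$, so that $v^{\widehat{\pi}_t}_{M} \ge v^{\pi_t}_{M'}$.

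Chaining these, I would write $v^{\picomp}_{M} - v^{\widehat{\pi}_t}_{M} \le \bigl(v^{\projection(\picomp)}_{M'} - v^{\pi_t}_{M'}\bigr) + \frac{\Vmax\badstateprob}{1-\gamma}$ and bound the parenthesized difference directly by Theorem~\ref{thm:fqi_strongset} applied with the comparator $\projection(\picomp) \in \strongset$. The only remaining discrepancy is that Theorem~\ref{thm:fqi_strongset} carries the Bellman residual as an $L_1(\mu)$ norm $\bigl\|Q^{\picomp} - \OurT Q^{\picomp}\bigr\|_{1,\mu}$ whereas the target carries an $L_2(\mu)$ norm; since $\|g\|_{1,\mu} \le \|g\|_{2,\mu}$ by Jensen's inequality, I would simply weaken the $L_1$ term to its $L_2$ counterpart (the residual for the projected comparator is the one that appears, and because $\mu$ charges only the $\zeta$-support, where $\projection$ leaves $\picomp$ unchanged, it is identified with the $Q^{\picomp}$ form as written). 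The failure probability remains $1-2\delta$: the projection is a deterministic post-processing step and introduces no new random events beyond those already unioned inside Theorem~\ref{thm:fqi_strongset} (Assumption~\ref{assume:muhat} and Lemma~\ref{lem:concenration_1}).

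The main obstacle is conceptual rather than computational: one must be sure the projection loses value only on the comparator side and never on the output side. On the \emph{output} side this is Lemma~\ref{lem:projection_smaller_value}, which says projection can only decrease value — the favorable direction, since $v^{\pi_t}_{M'}$ then lower-bounds $v^{\widehat{\pi}_t}_{M}$. On the \emph{comparator} side it is Lemma~\ref{lem:constrained_policy_same_value}, where the loss is controlled by $\badstateprob$ precisely because $\picomp$ escapes the support of $\zeta$ with probability at most $\badstateprob$. Getting the direction of each inequality right is the delicate point; once the reduction to Theorem~\ref{thm:fqi_strongset} is in place, the rest is substitution.
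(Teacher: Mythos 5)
Your reduction is exactly the paper's own proof: bound $v^{\picomp}_{M} \le v^{\projection(\picomp)}_{M'} + \Vmax\badstateprob/(1-\gamma)$ via Lemma~\ref{lem:constrained_policy_same_value}, bound $v^{\widehat{\pi}_t}_{M} \ge v^{\pi_t}_{M'}$ via Lemmas~\ref{lem:newmdp_same_value} and~\ref{lem:projection_smaller_value}, and apply Theorem~\ref{thm:fqi_strongset} with comparator $\projection(\picomp) \in \strongset$, with no new random events so the $1-2\delta$ probability carries over. You are in fact slightly more careful than the paper, which silently passes over both the $L_1$-to-$L_2$ weakening of the Bellman-residual term and the identification of $Q^{\projection(\picomp)}$ with $Q^{\picomp}$ in that term (the paper's own main-text remark writes it as $\| \zeta (Q^{\projection(\picomp)} - \OurT Q^{\projection(\picomp)}) \|_{2,\mu}$, conceding the residual is really that of the projected comparator).
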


\begin{proof}
The difference between this theorem and Theorem \ref{thm:fqi_strongset} is that $\picomp$ is in $\weakset$ which is significantly larger than $\strongset$.

This prove mimics the proof of Theorem \ref{thm:api}. For any policy $\picomp \in \weakset$, Lemma \ref{lem:constrained_policy_same_value} tells that $v^{\picomp}_{M} \le v^{\projection(\picomp)}_{M'} + \frac{\Vmax\badstateprob}{1-\gamma}$. Since $\pi_t = \projection(\widehat{\pi}_t)$, $v^{\widehat{\pi}_t}_{M} = v^{\widehat{\pi}_t}_{M'} \ge v^{\pi_t}_{M}$. Then $v^{\picomp}_{M} - v^{\widehat{\pi}_t}_{M} \le v^{\projection(\picomp)}_{M'} - v^{\pi_t}_{M'} + \frac{\Vmax\badstateprob}{1-\gamma}$ and Theorem \ref{thm:fqi_strongset} completes the proof.
\end{proof}

\noindent \textbf{Remark:} The first term in the theorem comes from that the best policy in the \weaksettext is not optimal. Note that the \weaksettext does not requires any realizability to do with our function approximation but merely about the density ratio of a policy. When there is an optimal policy of $M$ such in $\weakset$, we have the same type of bound as standard approximate value iteration analysis.

\begin{corollary}
If there exists an $\pi^\star$ on $M$ such that $\Pr(\mu(s,a) \le 2b|\piopt) \le \epsilon $.
then under the condition as Theorem \ref{thm:fqi}, $\widehat{\pi}_t$ from Algorithm \ref{alg:fqi} satisfies that with probability at least $1-2\delta$, $ v^{\piopt}_{M} - v^{\pi_t}_{M} \le$
\begin{align*}
    \frac{2C}{(1-\gamma)^2}\left( \sqrt{\frac{208\Vmax^2\ln\frac{|\Fcal|}{\delta}}{3n}} +2\sqrt{\avicompleteerror} + \Vmax\tvmuerror + \left\| Q^{\piopt} - \OurT Q^{\piopt} \right\|_{2,\mu}\right) + \frac{\Vmax(2\gamma^t+\epsilon+CU\tvmuerror)}{1-\gamma} 
\end{align*}
\end{corollary}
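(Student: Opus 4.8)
The plan is to obtain this corollary as an immediate specialization of Theorem~\ref{thm:fqi}, by certifying that the hypothesized optimal policy $\piopt$ lies in the $\zeta$-constrained set $\weakset$ with escaping probability $\badstateprob = \epsilon + C\tvmuerror$. Once that membership is in hand, I would instantiate Theorem~\ref{thm:fqi} with the comparator $\picomp = \piopt$: the value-gap left-hand side becomes $v^{\piopt}_M - v^{\pi_t}_M$, the statistical, completeness, density-estimation, and Bellman-residual terms $\|Q^{\piopt} - \OurT Q^{\piopt}\|_{2,\mu}$ transfer verbatim, and the term $\frac{\badstateprob\Vmax}{1-\gamma}$ specializes to $\frac{(\epsilon + C\tvmuerror)\Vmax}{1-\gamma}$, which together with the $\frac{2\gamma^t\Vmax}{1-\gamma}$ term reproduces the stated right-hand side. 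This is exactly the template used for the policy-iteration analog in Corollary~\ref{cor:api_optgap}.

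The only genuinely new work is the membership step: translating the data-level hypothesis $\Pr(\mu(s,a)\le 2b \mid \piopt)\le\epsilon$ into the filter-level statement $\Pr(\zeta(s,a)=0\mid\piopt)=\Pr(\muhat(s,a)<b\mid\piopt)\le\epsilon + C\tvmuerror$. I would split the event $\{\muhat(s,a)<b\}$ according to whether $\mu(s,a)\le 2b$. On $\{\mu(s,a)\le 2b\}$ the contribution is at most $\epsilon$ by hypothesis. On its complement, $\muhat(s,a)<b$ together with $\mu(s,a)>2b$ forces $|\mu(s,a)-\muhat(s,a)|>b$, so by Markov's inequality this portion has probability at most $\E_{(s,a)\sim\eta^{\piopt}}[\,|\mu(s,a)-\muhat(s,a)|\,]/b$. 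Invoking the bounded-density Assumption~\ref{asm:bounded_density}, namely $\eta^{\piopt}(s,a)\le\densitybound$, I would bound this expectation by $\densitybound\,\|\mu-\muhat\|_{\text{TV}}/b$, which under the density-estimation Assumption~\ref{assume:muhat} is at most $\densitybound\tvmuerror/b = C\tvmuerror$. Adding the two pieces yields $\badstateprob = \epsilon + C\tvmuerror$, establishing $\piopt\in\weakset$.

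For the probability accounting, the density-estimation event of Assumption~\ref{assume:muhat} (holding with probability at least $1-\delta$) is already one of the two events underlying the $1-2\delta$ guarantee of Theorem~\ref{thm:fqi}, and the membership computation above reuses that \emph{same} event rather than introducing a new one, so the overall confidence remains $1-2\delta$. The step I expect to demand the most care is the change of measure from the policy-induced occupancy $\eta^{\piopt}$ to the total-variation distance $\|\mu-\muhat\|_{\text{TV}}$: this is the one place where Assumption~\ref{asm:bounded_density} is truly essential, and one must apply the density bound to the marginal $\eta^{\piopt}$ over $\Scal\times\Acal$ (not to $\mu$ or $\muhat$) and keep the Markov-inequality estimate conditioned consistently on the high-probability density-estimation event, so that no additional failure probability is incurred.
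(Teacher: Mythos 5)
Your proposal matches the paper's own proof essentially verbatim: membership $\piopt \in \weakset$ with $\badstateprob = \epsilon + C\tvmuerror$ is established by the same decomposition of the event $\{\muhat(s,a) \le b\}$ into $\{\mu(s,a)\le 2b\}$ and $\{|\mu(s,a)-\muhat(s,a)|\ge b\}$, followed by Markov's inequality, the density bound $\eta^{\piopt}(s,a)\le \densitybound$ from Assumption~\ref{asm:bounded_density}, and Assumption~\ref{assume:muhat}, after which Theorem~\ref{thm:fqi} is invoked with $\picomp = \piopt$, and your probability accounting (reusing the density-estimation event so the confidence stays $1-2\delta$) is likewise the paper's. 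The only discrepancy is that your derivation yields $C\tvmuerror$ in the final term where the statement prints $CU\tvmuerror$, but this matches the paper's own proof and main-text Corollary~\ref{cor:fqi_optgap}, so the printed $CU$ appears to be a typo rather than a gap in your argument.
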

\begin{proof}
The proof of $\piopt \in \weakset$ is same as the proof in Corollary \ref{cor:api_optgap}. Then proof is finished by applying Theorem \ref{thm:fqi}.
\end{proof}

\section{Details of CartPole Experiment}
\label{appendix:cartpole}
\subsection{Full results of Discretized CartPole-v0}
\begin{figure}[ht]
    \centering
    \includegraphics[width=\textwidth]{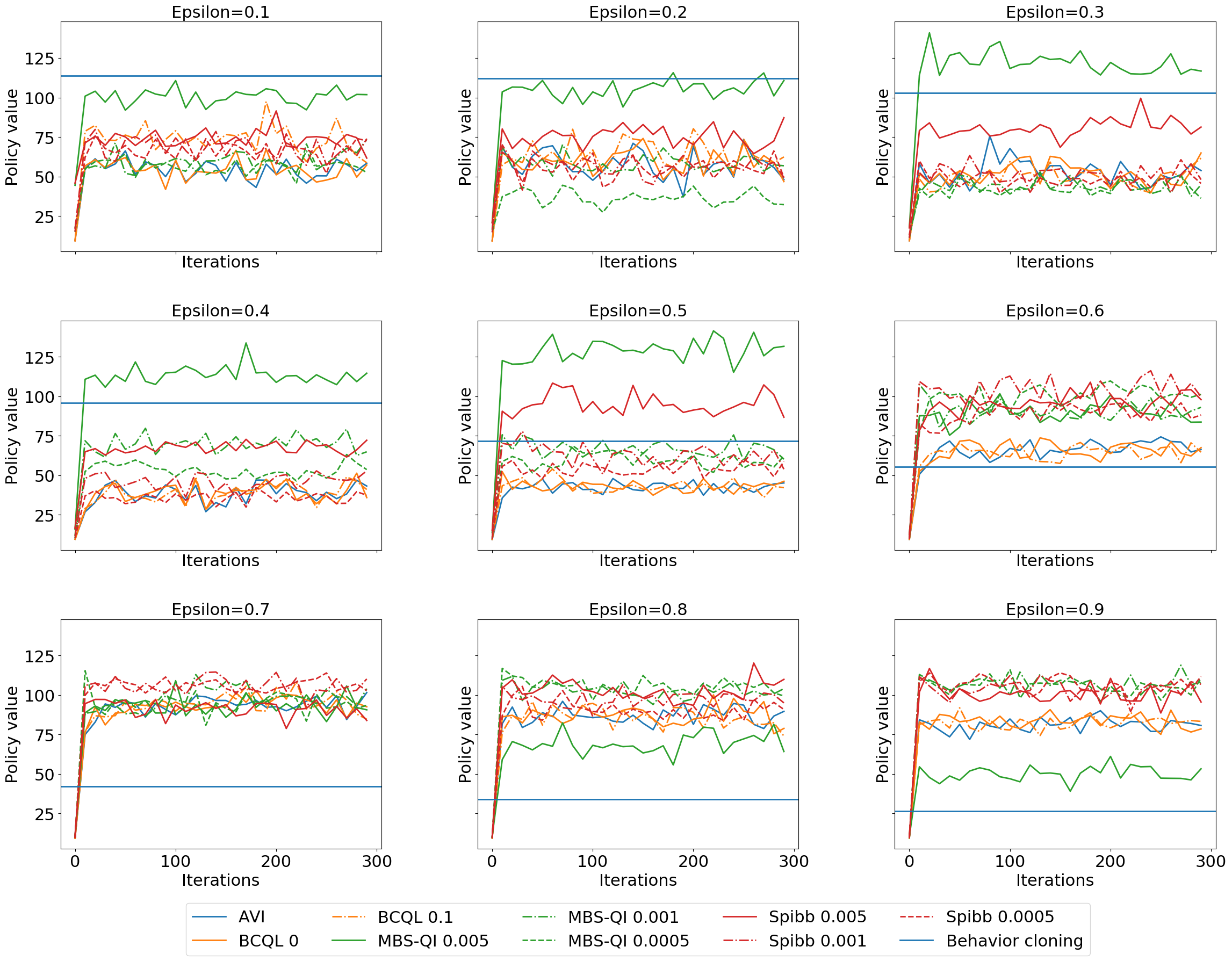}
    \caption{CartPole-v0 with discretized state space. The learning curve of all algorithms with different hyper-parameters, data generated with different $\epsilon$-greedy behavior policy. The hyper-parameter of SPIBB \cite{laroche2019safe} and MBS-QI is the threshold of $\muhat(s,a)$ and the hyper-parameter of BCQL \cite{fujimoto2019off} is the threshold of $\muhat(a|s)$. }
    \label{fig:cp_all_eps}
\end{figure}

In section \ref{sec:discrete_cartpole}, we compare AVI, BCQL\cite{fujimoto2019off}, SPIBB\cite{laroche2019safe}, Behavior cloning and our algorithm MBS-QI, in CartPole-v0 with discretized state space. The data is generated by a $\epsilon$-greedy policy ($\epsilon$ from $0.1$ to $0.9$) and we report the resulting policies from different algorithm with the best hyper-parameter in each $\epsilon$. In this section we show the learning curve for each $\epsilon$ and each hyper-parameter value. We run the BCQ algorithm with the threshold of $\muhat(a|s)$ in $\{ 0, 0.1\}$, and we run the SPIBB algorithm and MBS-QI with the threshold of $\muhat(s,a)$ in $\{ 0.005, 0.001, 0.0005\}$. Figure \ref{fig:cp_all_eps} shows for most of the $\epsilon$ and threshold our algorithm tie with the best baseline (SPIBB), and the best threshold of our algorithm outperform all baseline algorithms.

In Figure \ref{fig:cp_all_eps}, we observe the trend that smaller $\epsilon$ will prefer a smaller $b$. This is verified by more results in the next section, and we discuss the reasons for this phenomenon there.

\subsection{Ablation study of threshold b}
A key aspect of our algorithm is to filter the state space by a threshold on the estimated probability $\muhat(s,a)$. This prevents the algorithm from updating using low-confidence state, action pairs when bootstrapping values. Then the choice of threshold $b$ is a key trade-off in our algorithm: if $b$ is too small it can not remove the low-confident state, action pairs effectively; if $b$ is too large it might remove too many state, action pairs and prevent learning from more data. In order to demonstrate the effect of $b$ and how should we choose b in different settings, we show the performance of MBS-QI in a larger range of $b$ and several $\epsilon$ values.

\begin{figure*}[ht]
    \begin{minipage}{0.32\textwidth}
    \centering
    \includegraphics[width=\textwidth]{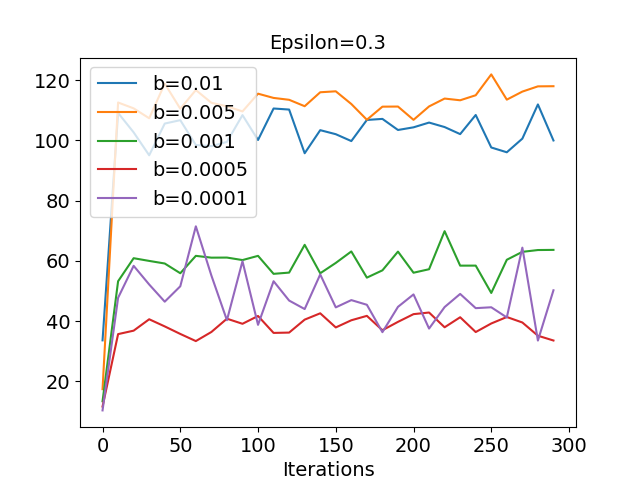}
    \end{minipage}
    \begin{minipage}{0.32\textwidth}
    \centering
    \includegraphics[width=\textwidth]{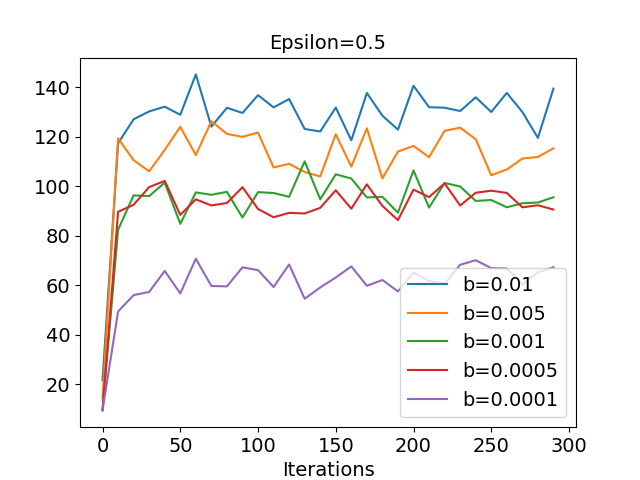}
    \end{minipage}
    \begin{minipage}{0.32\textwidth}
    \centering
    \includegraphics[width=\textwidth]{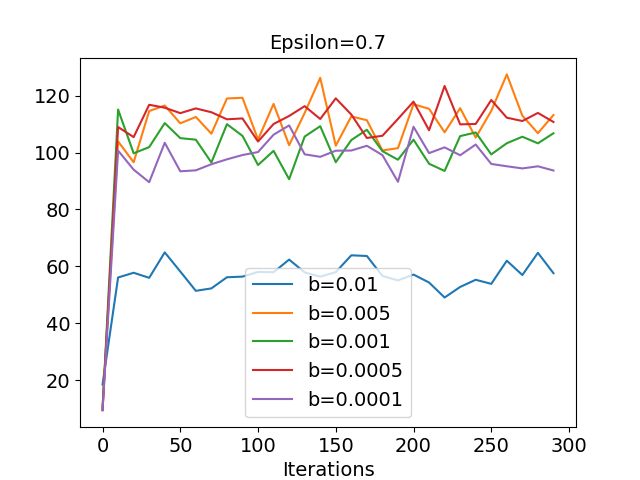}
    \end{minipage}
   
    \caption{Performance of MBS-QI with different values of threshold $b$}
    \label{fig:cp_b}
    \vspace{-0.2in}
\end{figure*}

In figure \ref{fig:cp_b} we show the trend that smaller $b$ works better for larger $\epsilon$ and larger $b$ works better for smaller $\epsilon$ in general. This can be explained in the following way: with a larger $\epsilon$ the data distribution is more exploratory and hence the probabilities on individual state, action pairs are smaller. So a the same threshold that performs well with low exploration now censors a much larger part of the state, action space, necessitating a smaller threshold as $\epsilon$ is increased. In general, we find that having the largest threshold which still retains a significant fraction of the state, action space is a good heuristic for setting the $b$ parameter.

\section{Details of Hopper-v3 Experiment}
\label{appendix:hopper}
In this section we introduce some missing details about the MBS-QL algorithm and the experimental details. We also attach the complete code for running our algorithm and the baseline algorithms in the supplemental materials.

\subsection{Implementation Details of MBS-QL}
MBS-QL algorithm is implemented based on the architecture of Batch-Constrained deep $Q$-learning (BCQ) \cite{fujimoto2019off} algorithm. More specifically, we use the similar Clipped Double Q-Learning (CDQ) update rule for the $Q$ learning part, and employ a similar variational auto-encoder to fit the conditional action distribution in the batch. We use an additional variational auto-encoder to fit the marginalized state distribution of the batch. To implement an actual $Q$ learning algorithm instead of an actor-critic algorithm, we did not sample from the actor in the Bellman backup but sample a larger batch from the fitted conditional action distribution. Algorithm \ref{alg:mbs_ql} shows the pseudo-code of MBS-QL to provide more details. We highlight the difference with the BCQ algorithm in red.

\begin{algorithm}[ht]
  \caption{MBS $Q$-learning (MBS-QL)}
  \label{alg:mbs_ql}
    \begin{algorithmic}
  \State {\bfseries Input:} Batch $D$, ELBO threshold $b$, maximum perturbation $\Phi$, target update rate $\tau$, mini-batch size $N$, max number of iteration $T$. Number of actions $k$.
  \State Initialize two Q network $Q_{\theta_1}$ and $Q_{\theta_2}$, policy (perturbation) model: $\xi_\phi$. ($\xi_\phi \in [-\Phi, \Phi]$), action VAE $G^a_{\omega_1}$ and state VAE $G^s_{\omega_2}$.
  \State Pretrain $G^s_{\omega_2}$:
  $\omega_2 \leftarrow \argmin_{\omega_2} ELBO(B; G^s_{\omega_2})$.
  \For{$t=1$ {\bfseries to} $T$}
  \State Sample a minibatch $B$ with $N$ samples from $D$. 
  \State $\omega_1 \leftarrow \argmin_{\omega_2} ELBO(B; G^a_{\omega_1})$.
  \State \textcolor{red}{Sample $k$ actions $a'_i$ from $G^a_{\omega_1}(s')$ for each $s'$.}
  \State Compute the target $y$ for each $(s,a,r,s')$ pair: 
  $$ y =  r + \gamma \textcolor{red}{\mathds{1}(ELBO(s';G^s_{\omega_2}) \ge b)} \left[  \max_{a'_i} \left( 0.75*\min_{j=1,2} Q_{\theta'_j} + 0.25*\max_{j=1,2} Q_{\theta'_j} \right) \right] $$
  \State $\theta \leftarrow \argmin_{\theta} \sum (y - Q_\theta(s,a))^2 $
  \State Sample $k$ actions $a_i$ from $G^a_{\omega_1}(s)$ for each $s$.
  \State $\phi \leftarrow \argmax_{\phi} \sum \max_{a_i} Q_{\theta_1} (s, a_i + \xi_\phi(s,a_i) ) $
  \State Update target network: $\theta' = (1-\tau)\theta' + \tau \theta$, $\phi' = (1-\tau)\phi' + \tau \phi$
  \EndFor
  \State {\bfseries When evaluate the resulting policy:} select action $a =\argmax_{a_i} Q_{\theta_1} (s, a_i + \xi_\phi(s,a_i) )$ where $a_i$ are $k$ actions sampled from $G^a_{\omega_1}(s)$ given $s$.
\end{algorithmic}
\end{algorithm}

In practice, the indicator function $\mathds{1}(ELBO(s';G^s_{\omega_2}) \ge b)$ is implemented by $\text{sigmoid}(100(ELBO(s';G^s_{\omega_2})- b))$ to provide a slightly more smooth target. The evidence lower bound (ELBO) objective function of VAE is:
\begin{align}
    ELBO(s;G^s_{\omega_2}) = \sum (s-\tilde s)^2 + D_\text{KL} (N(\mu,\sigma)||N(0,1))
\end{align}
where $\mu$ and $\sigma$ is sampled from the encoder of VAE with input $s$ and $\tilde s$ is sampled from the decoder with the hidden state generated from $N(\mu,\sigma)$. $ELBO(B;G^s_{\omega_2})$ is the averaged ELBO on the minibatch $B$. So does $G^a_{\omega_1}$. Note that this ELBO objective make the implicit assumption that the decoder's distribution is a Gaussian distribution with mean equals to the output of decoder network. So when we generate the sample $a'$ for computing $y$, we add a Gaussian noise $N(0,1.5)$ to the output of decoder network.

\subsection{Experimental details}
We collect the batch data using a near-optimal policy with two-level noise, which is the same setting as the ``Imperfect Imitation'' in the BCQ paper \cite{fujimoto2019off}. We first get the near-optimal policy by running DDPG in 15 random seeds and pick the best 5 resulting policies. Then with probability $0.3$ it samples uniformly from action space, with probability $0.7$ it adds a Gaussian noise $\mathcal{N}(0,0.3)$ to the action from DDPG policy.

For most of the hyper-parameters in Algorithm \ref{alg:mbs_ql}, we use the same value with the BCQ algorithm. We run all algorithms with $T=10^6$ gradient steps, and the minibatch size $N=100$ at each step. The number of action $k=100$. Target network update rate is $0.005$. The threshold $b$ of ELBO is selected as $2$-percentile of the $ELBO(s)$ in the whole dataset after pretrain the VAE.

A key hyper-parameter in BCQ is the maximum perturbation $\Phi$. It controls the amount of difference with the behavior cloning allowed by the BCQ algorithm. The results in Appendix D.1 of their paper \cite{fujimoto2019off} suggest a smaller $\Phi$ is always better and they use $\Phi=0.05$ to report their results. For our algorithm, we find that the performance is not monotonic with $\Phi$, which might suggest some more interesting behavior of our algorithm since it is less similar to behavior cloning. In Figure \ref{fig:hopper_phi} we report an ablation study of perturbation bound $\Phi$. We compare MBS-QL and BCQ with $\Phi=0.05, 0.1$ and $0.25$. For all values of $\Phi$ MBS-QL is better than BCQ with the same $\Phi$. For the results in main paper, we use $\Phi=0.1$ for MBS-QL and $\Phi=0.05$ for BCQ.

\begin{figure}[ht]
    \centering
    \includegraphics[width=0.8\textwidth]{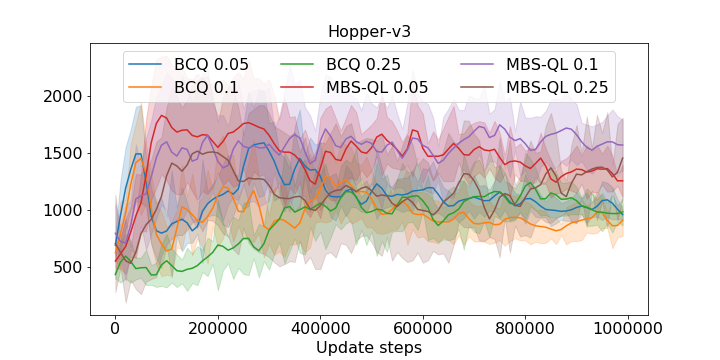}
    \caption{BCQ \cite{fujimoto2019off} and MBS-QL with different range of policy output ($\Phi$).}
    \label{fig:hopper_phi}
\end{figure}

\end{document}